\let\lll\ll
\renewcommand{\ll}{\mathbf{l}}
\newcommand{\E}{\mathbb{E}}
\providecommand{\1}{\mathds{1}}
\newcommand{\cL}{\mathcal{L}}
\newcommand{\cO}{\mathcal{O}}
\newcommand{\cS}{\mathcal{S}}
\newcommand{\cU}{\mathcal{U}}
\newcommand{\R}{\mathbb R}
\newcommand{\cE}{\mathcal{E}}
\renewcommand{\leq}{\leqslant}
\renewcommand{\geq}{\geqslant}
\def\<{\langle}
\def\>{\rangle}
\def\|{\Vert}
\def\eps{\varepsilon}
\newcommand{\rank}{{\rm rank}}
\newcommand{\op}{{\rm op}}
\newcommand{\NRM}[1]{{{\left\| #1\right\|}}} 
\newcommand{\set}[1]{{{\left\{ #1\right\}}}} 
\renewcommand{\P}{\mathbb{P}}
\newcommand{\cA}{\mathcal{A}}
\newcommand{\cN}{\mathcal{N}}
\newcommand{\cM}{\mathcal{M}}
\newcommand{\cG}{\mathcal{G}}
\newcommand{\cF}{\mathcal{F}}
\newcommand{\N}{\mathbb{N}}
\newcommand{\cD}{\mathcal{D}}
\DeclareMathOperator{\Tr}{Tr}
\DeclareMathOperator{\argmin}{argmin}
\def\<#1,#2>{\langle #1,#2\rangle}
\newcommand{\mix}{{\rm mix}}
\newcommand{\groupnuc}{{*,\rm group}}
\newcommand{\AR}{{\rm AR}}
\newcommand{\bA}{{\boldsymbol A}}
\newcommand{\hbA}{{\boldsymbol{\hat A}}}
\newcommand{\tbA}{{\boldsymbol{\tilde A}}}
\newcommand{\cond}{{\mathrm{cond}}}
\title{Long-Context Linear System Identification}
\author{O\u{g}uz Kaan Y\"uksel\\
  EPFL\\
  Lausanne, Switzerland\\ \\
  \And
  Mathieu Even\\
  Inria -- ENS\\
  Paris, France\\ \\
  \And
  Nicolas Flammarion \\
  EPFL\\
  Lausanne, Switzerland\\ \\
}
\theoremstyle{plain}
\newtheorem{theorem}{Theorem}[section]
\newtheorem{proposition}{Proposition}[section]
\newtheorem{lemma}{Lemma}[section]
\newtheorem{corollary}{Corollary}[section]
\newtheorem{remark}{Remark}[section]
\newtheorem{definition}{Definition}[section]
\newtheorem{assumption}{Assumption}[section]
\Crefname{assumption}{Assumption}{Assumptions}
\begin{document}

\maketitle

\begin{abstract}
    This paper addresses the problem of long-context linear system identification, where the state $x_t$ of a dynamical system at time $t$ depends linearly on previous states $x_s$ over a fixed context window of length $p$.
    We establish a sample complexity bound that matches the \textit{i.i.d.} parametric rate up to logarithmic factors for a broad class of systems, extending previous works that considered only first-order dependencies.
    Our findings reveal a ``learning-without-mixing'' phenomenon, indicating that learning long-context linear autoregressive models is not hindered by slow mixing properties potentially associated with extended context windows.
    Additionally, we extend these results to \textit{(i)} shared low-rank representations, where rank-regularized estimators improve the dependence of the rates on the dimensionality, and \textit{(ii)} misspecified context lengths in strictly stable systems, where shorter contexts offer statistical advantages.
\end{abstract}

\section{Introduction}

System identification, which consists of estimating the parameters of a dynamical system from observations of its trajectories, is a fundamental problem in many fields such as econometrics, robotics, aeronautics, mechanical engineering, and reinforcement learning \citep{Ljung1998,Gupta1976,moerland2022modelbased}.
Recent theoretical advances focused on linear system identification, where observations are of the form:
\begin{equation}\label{eq:lin_dyn}
    x_t=A^\star x_{t-1} + \xi_t\,,
\end{equation}
for $t\geq 1$, with the initialization $x_0\in\R^d$, noise $\xi_t\in\R^d$, and design matrix $A^\star\in\R^{d\times d}$.
Linear system identification \citep{simpkins_system} has been thoroughly studied, with recent interest in sharp non-asymptotic rates \citep{pmlr-v75-simchowitz18a,pmlr-v97-sarkar19a,faradonbeh2018finite,jedra2019sample}.
The existing analyses, however, focus solely on order-$1$ time dependence, in which the law of $x_t$ only depends on the previous state $x_{t-1}$.
For order-$p$ time dependencies, the literature on non-asymptotic rates becomes surprisingly scarce, as existing techniques do not extend to $p>1$.

We study this more general setting, where the state $x_t$ depends on previous states $x_s$ for $s$ in a context window of length $p\in\N^*$, i.e.,
\begin{equation}\label{eq:lin_dyn_context}
    x_{t}=\sum_{k=1}^p A^\star_k x_{t-k} + \xi_t\,,
\end{equation}
for $t\geq p$, the initialization $x_0,\ldots,x_{p-1}\in\R^d$, noise $\xi_t\in\R^d$ and design matrices $A_1^\star,\ldots,A_p^\star\in\R^{d\times d}$.
This classical $p^{\rm th}$-order vector autoregression model~\citep{box2015time,brockwell1991time,hamilton2020time} is termed the \textit{long-context linear autoregressive model}.
The term \textit{linear} refers to the (noisy) linear relationship between iterates and \textit{long-context} to the context length $p$.
Recent advances in autoregressive models and architectures such as transformers~\citep{vaswani2017attention, dosovitskiy2021image, elnouby2024scalable} highlight the importance of long-context and its impact on learning.
Developing a theoretical understanding of long-context linear autoregressive models is a necessary first step toward tackling these more complex architectures.

Motivated by empirical evidence that high-dimensional data may share some lower-dimensional representation \citep{bengio2013representation,hospedales2022metalearning}, several works have also studied the problem of learning matrices $A_k^\star$ under the assumption that they are low-rank \citep{Alquier2020,basu2019lowrank}, for order-$1$ autoregressive models.
In the long-context setting, this problem is further motivated by the fact that if there exists a lower-dimensional representation of the autoregressive process, this translates into shared kernels for the matrices $A_k^\star$. 

Finally, a key challenge in long-context autoregressive models is \emph{misspecification}: the system might have an \textit{unknown} context window $p$ as in \Cref{eq:lin_dyn_context}.
$p$ may be arbitrarily large and unknown to the statistician.
She may then specify a context length $p'$ that can be much smaller, thus yielding the following two fundamental questions: can useful structure still be learned under misspecified context lengths?
And what advantages, if any, arise from model misspecification?

Our contributions to \textit{long-context linear systems identification} are threefold.

\textbf{(i)} We derive statistical rates, which depend on problem parameters $N$, $T$, $p$, and $d$, on the recovery of matrices $A_k^\star$ in terms of Frobenius norm.
These rates reveal a ``learning-without-mixing'' phenomenon as they do not suffer from a reduction in the effective sample size due to the mixing time of the autoregressive process.
This first contribution is an attempt to fill the gap in linear system identification for long context lengths.

\textbf{(ii)} We study statistical guarantees for learning the matrices $A_k^\star$ assuming that they are all of rank at most $r\lll d$.
We prove that the statistical rates improve, and that rank-regularized estimators adapt to the low-rank structure.

\textbf{(iii)} We study a scenario under which the model is \textit{misspecified}. 
Fitting a linear model with context length $p'<p$ instead of $p$, we show that the first $p'$ matrices are still learned.
More importantly, the sample complexity of learning these matrices depends only on the misspecified context length, indicating a potential statistical, not just computational, benefit from misspecification.
Finally, we confirm these statistical rates through experiments that verify the scaling laws predicted by problem parameters.
Due to space constraints, these experiments are provided in \Cref{app:experiments}.

\section{Related works}

In multivariate linear regression, one observes $\{(x_i, y_i)\}_{i=1}^N$ from the model $y_i= A^\star x_i + \xi_i$, where the matrix $A^\star\in\R^{d\times d}$ and the sequences of noise terms $\xi_i$ and inputs $x_i$ are \textit{i.i.d.}.
The number of samples $N$ needs to scale at least as $d^2$ for a good estimation of $A^\star$ with the ordinary least squares estimator \citep{hsu2012random,wainwright_2019} in Frobenius norm---$\|{A^\star-\hat A}\|_F^2\lll 1$.
However, in many domains, data is sequential, violating the \textit{i.i.d.} assumption.
In such domains, classical non-\textit{i.i.d.} formulations, such as vector autoregressive models or discrete-time linear dynamical systems (LDS), as seen in \Cref{eq:lin_dyn}, are often employed.
Traditionally, most works deal with the non-\textit{i.i.d.}-ness of the data through mixing time arguments that fall short when the spectral radius of $A^\star$ reaches $1$, leading to rates of the form $\|\hat A-A^\star\|_F^2=\cO(d^2/(n(1-\rho)))$ or $\|\hat A-A^\star\|_\op^2=\cO(d/(n(1-\rho)))$ for some spectral quantity $1-\rho$ related to the mixing time of the process.
These rates apply to the OLS estimator \citep{faradonbeh2018finite} and online settings \citep{hardt2018gd,even2023mcsgd} alike.

\citet{pmlr-v75-simchowitz18a,pmlr-v97-sarkar19a} develop excitation-based arguments to leverage mixing-time independent statistical bounds for the OLS estimator, while \citet{hazan2017spectral,jain2021streaming} use spectral filtering and reverse experience replay, respectively, in the online setting to obtain such bounds.
\citet{basu2019lowrank,Alquier2020} study the estimation of low-rank features via nuclear norm regularization.
Finally, \citet{tu2024learning} consider learning parameters of dynamical systems from $N$ trajectories of length $T$ in a more general framework than \Cref{eq:lin_dyn}.

Layers of complexity can be added to the LDS described in \Cref{eq:lin_dyn}.
\citet{mania2022active,pmlr-v120-foster20a} consider non-linear dynamics $x_{t+1}=A^\star \phi(x_t,u_t)+\xi_t$ and $x_{t+1}=f^\star(x_t)+\xi_t$, where in the former $A^\star$ is to be estimated and $\phi$ is a known non-linearity, while in the latter $f^\star$ is to be estimated.
\citet{kostic2022learning} recently provide a general framework using Koopman operators, to estimate the parameters of some general Markov chain.
\citet{giraud2015aggregation} consider time-varying systems, with arbitrary context lengths, while \citet{bacchiocchi2024autoregressive} study autoregressive bandits.
\citet{ziemann2022learning} provide a framework for learning non-parametric dynamical systems with ``little mixing'': as their rates are not hindered by slow mixing after a burn-in time (that may itself depend on mixing properties).
We refer the reader to \mbox{\citet{tsiamis2023statistical}} for a survey of recent advances on non-asymptotic system identification of LDS such as in \Cref{eq:lin_dyn}.
Surprisingly, there does not seem to be much known about \textit{long-context LDS} in \Cref{eq:lin_dyn_context}, the counterparts of LDS in \Cref{eq:lin_dyn} with a context window $p>1$.

\section{Problem setting}
\label{sec:problem_setting}

For a matrix $M\in\R^{d_1\times d_2}$ with singular values $\sigma_1,\ldots,\sigma_{\min\set{d_1,d_2}}$, we denote its squared Frobenius norm as $\NRM{M}_F^2=\sum_{(i,j)}M_{ij}^2=\sum_{\ell}\sigma_\ell^2$, operator norm as $\NRM{M}_\op=\max_{\ell}|\sigma_\ell|$, and nuclear norm as $\NRM{M}_*=\sum_\ell |\sigma_\ell|$.
$I_d$ and $0_d$ denote the identity and the null $d\times d$ matrices, respectively.
$\bA = \left(A_1, \ldots, A_p\right)$ denotes a rectangular matrix of size $d \times pd$ where each $A_i$ is a $d \times d$ block.

\subsection{Data generation process}
\label{sec:data_generation}

Let $d,p\in\N^*$ be the dimension of the state space and the context length, respectively.
Consider the following linear autoregressive process:
\begin{equation}
    \label{eq:lin_dyn_context_full}
    \forall t > 0\,:\quad x_{t} = \sum_{k=1}^{p} A_k^\star x_{t-k} + \xi_t\,,
\end{equation}
where $x_{s} = 0$ for any $s \leq 0$ and the noise $\xi_t$ is independent of the $x_{s},\xi_{s}$ for $s<t$.
This is a particular instance of the general linear autoregressive model in \Cref{eq:lin_dyn_context} with initial conditions $x_0,\ldots,x_{p-1}$ set to $0$ and An independent noise structure.
We assume sub-Gaussian noise:
\begin{assumption}
    \label{hyp:subgaussian}
    For all $t$, the noise $\xi_t$ is centered and isotropic:
    \begin{align*}
        \E[\xi_t] = 0\,,\quad \E[\xi_t\xi_t^\top] = \sigma^2 I_d\,,
    \end{align*}
    and each coordinate of $\xi_t$ is independent and $c^2\sigma^2$-sub-Gaussian \citep[Chapter 2]{wainwright_2019} for some $c \geq 1$:
    \begin{equation*}
        \forall i \in [d] \quad \left\|\left(\xi_t\right)_i\right\|_{\psi_2}\leq c^2 \sigma^2\,,\quad \text{where}\quad \|x\|_{\psi_2} = \sup_{k\geq 1}k^{-1/2}\E\left[|x|^k\right]^{1/k}\,.
    \end{equation*}
\end{assumption}

Let $\AR(\boldsymbol{A}^\star,\sigma^2)$ denote the law of the sequence defined in \Cref{eq:lin_dyn_context_full} where $\boldsymbol{A}^\star$ denotes $(A_1^\star,\ldots,A_p^\star)$ for brevity. Given $N$ independent sequences of length $T > p$:
\begin{equation*}
    \set{x_t^{(n)},n\in[N],t\in[T]}\,, \quad \text{where} \quad (x_t^{(n)})_{t \in [T]} \overset{\text{\textit{i.i.d.}}}{\sim} \AR(\boldsymbol{A}^\star,\sigma^2)\,,
\end{equation*}
the goal of long-context linear system identification is to estimate the matrices $A_k^\star, k\in[p]$.

Lastly, we assume a condition on the design matrices $A_k^\star, k \in [p]$ that amounts to an operator norm bound.
First, we define the following linear operators for any matrix $\bA\in\R^{d\times pd}$:
\begin{definition}
    \label{def:M_A}
    Let $M_{\bA} \in \R^{Td \times Td}$ be the block-matrix with block entries of size $d \times d$:
    \begin{equation*}
        M_{\bA}^{(i,j)} = A_{i-j}\,, \quad \text{for all} \quad 1\leq j < i \leq j+p\leq T\,, \quad \text{ and } \quad M_{\bA}^{(i,j)} = 0_d\,, \quad \text{otherwise}\,.
    \end{equation*}
\end{definition}

\begin{definition}
    \label{def:L_star}
    Let $L_\star \in \R^{Td \times Td}$ be the block-matrix with block entries of size $d \times d$:
    \begin{align*}
        L_\star^{(1,1)} &= I_d \enspace \text{ and } \enspace L_\star^{(i,1)} = \sum_{k=1}^{\max\set{i-1, p}} A_k^\star L_\star^{(i-k,1)} \enspace 1 < i \leq T \,, \\
        L_\star^{(i,j)} &= L_\star^{(i-j+1,1)} \enspace \text{for all} \enspace 1\leq i \leq j\leq p\,, \enspace \text{and} \enspace L_\star^{(i, j)} = 0_d \enspace \text{otherwise}\,.
    \end{align*}
\end{definition}

$M_{\bA}$ executes predictions from the given data with $\bA$ and $L_\star$ generates the data from the noise.
That is, letting $\left(M_{\bA}\right)_{t \cdot}, \left(L_{\star}\right)_{t \cdot}: \R^{d} \times \R^{Td}$ be the $t^{\rm th}$ block-row of $M_{\bA}$ and $L_\star$, respectively, we have:
\begin{equation*}
    \left(M_{\bA}\right)_{t \cdot} \left(\begin{array}{c}
        x_{1}^{(n)} \\
        \vdots\\
        x_{T}^{(n)}
    \end{array}\right) = \sum_{k=1}^{p} A_k x_{t-k}^{(n)} \,,
    \quad
    \left(L_{\star}\right)_{t \cdot} \left(\begin{array}{c}
        \xi_{1}^{(n)} \\
        \vdots\\
        \xi_{T}^{(n)}
    \end{array}\right) = x_t^{(n)}\,, \quad \text{ with } x_{s} = 0 \text{ for } s \leq 0\,.
\end{equation*}
Therefore, the operator norm of $M_{\bA}$ is a measure of the worst-case growth of the predictions.
Moreover, $M_{\bA^\star}$ is linked to the data-generating operator $L_\star$:
\begin{equation*}
    L_\star = I_{Td} + M_{\bA^\star}L_\star \quad \implies \quad L_\star = \left(I_{Td} - M_{\bA^\star}\right)^{-1} = I_{Td} + \sum_{i=1}^{T-1} \left(M_{\bA^\star}\right)^i \,.
\end{equation*}

We assume the following conditions on the design matrices:
\begin{assumption}
    \label{hyp:bounded_op_norm}
    There exists a \textit{known} constant $D \geq 1$ such that $\|M_{\bA^\star}\|_\op \leq D$.
\end{assumption}
\Cref{hyp:bounded_op_norm} is not restrictive as $D$ is arbitrary and only needs to be an upper bound on $\|M_{\bA^\star}\|_\op$.
However, the \emph{knowledge} of $D$ is necessary, as it is used to confine the estimator in \Cref{sec:estimator}.

As the operator $M_{\bA^\star}$ is a derived object over the full trajectory, it is important to relate \Cref{hyp:bounded_op_norm} to conditions on the design matrices $A_k^\star$.
In \Cref{prop:bounded_op_norm} below, we provide two different assumptions on the design matrices that ensure the boundedness of the operator norm of $M_{\bA^\star}$ with the same constant.
Both conditions \emph{imply} \Cref{hyp:bounded_op_norm}.

\begin{proposition}\label{prop:bounded_op_norm}
    \Cref{hyp:bounded_op_norm} holds if one of the following holds:
    \begin{equation*}
    \begin{aligned}
        \textbf{(i)}&\quad \sum_{i=1}^p \|A_i^\star\|_\op \leq D\,, \quad\quad\quad\textbf{(ii)}&\quad\|\bA^\star\|_\op \leq\dfrac{D}{\sqrt{p}}\,.
    \end{aligned}
    \end{equation*}
\end{proposition}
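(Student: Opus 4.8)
The plan is to bound $\|M_{\bA^\star}\|_\op$ directly from the banded block--Toeplitz structure of $M_{\bA}$ in \Cref{def:M_A}, handling the two sufficient conditions by two separate arguments, since (as noted below) neither of (i), (ii) implies the other.

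\textbf{Decomposition by block sub-diagonals.} First I would write $M_{\bA}=\sum_{k=1}^p S_k$, where $S_k\in\R^{Td\times Td}$ is the block matrix carrying $A_k$ on its $k$-th block sub-diagonal, i.e. $S_k^{(i,j)}=A_k$ when $i-j=k$ and $0$ otherwise (so $S_k=J^k\otimes A_k$ with $J\in\R^{T\times T}$ the nilpotent down-shift, whence $\|J^k\|_\op\le 1$). For a vector $v=(v_1^\top,\dots,v_T^\top)^\top$ with the convention $v_s=0$ for $s\le 0$, one has $(S_k v)_i=A_k v_{i-k}$, so
\begin{equation*}
    \|S_k v\|^2=\sum_{i=1}^T\|A_k v_{i-k}\|^2\le \|A_k\|_\op^2\sum_{m=1}^T\|v_m\|^2=\|A_k\|_\op^2\|v\|^2,
\end{equation*}
hence $\|S_k\|_\op\le\|A_k^\star\|_\op$. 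A triangle inequality over $k$ then yields $\|M_{\bA^\star}\|_\op\le\sum_{k=1}^p\|A_k^\star\|_\op\le D$, which is condition (i).

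\textbf{Condition (ii): grouping by rows.} For the second condition I would instead read $M_{\bA}$ row-block by row-block: with the same notation, $(M_{\bA}v)_i=\sum_{k=1}^p A_k v_{i-k}=\bA w_i$, where $w_i:=(v_{i-1}^\top,\dots,v_{i-p}^\top)^\top\in\R^{pd}$. Therefore $\|M_{\bA}v\|^2=\sum_{i=1}^T\|\bA w_i\|^2\le\|\bA\|_\op^2\sum_{i=1}^T\|w_i\|^2$, and a counting argument shows each block $\|v_m\|^2$ occurs in $\|w_i\|^2$ for at most $p$ values of $i$ (those with $i=m+k$, $1\le k\le p$, $i\le T$), so $\sum_{i=1}^T\|w_i\|^2\le p\|v\|^2$. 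This gives $\|M_{\bA^\star}\|_\op\le\sqrt{p}\,\|\bA^\star\|_\op\le D$.

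\textbf{Main obstacle and remarks.} There is no substantial difficulty; the only care needed is bookkeeping of the banded structure --- correctly treating the boundary truncation of the sub-diagonals (the $i\le T$, resp.\ $j+p\le T$, clipping, which only removes nonnegative terms and thus never hurts the bounds) and the multiplicity count in the row-grouping step. I would also remark explicitly that the two conditions are genuinely distinct: since $\|\bA^\star\|_\op^2=\bigl\|\sum_k A_k^\star (A_k^\star)^\top\bigr\|_\op\le\sum_k\|A_k^\star\|_\op^2\le\bigl(\sum_k\|A_k^\star\|_\op\bigr)^2$, condition (i) only guarantees $\|\bA^\star\|_\op\le D$, which is weaker than (ii)'s $\|\bA^\star\|_\op\le D/\sqrt p$; conversely (ii) does not control $\sum_k\|A_k^\star\|_\op$. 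This is why both estimates on $\|M_{\bA^\star}\|_\op$ are recorded.
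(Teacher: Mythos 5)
Your proposal is correct and matches the paper's argument: part \textbf{(i)} is proved there by the same decomposition of $M_{\bA^\star}$ into single-sub-diagonal blocks followed by the triangle inequality, and part \textbf{(ii)} invokes \Cref{lemma:A_to_M}, whose proof is exactly your row-grouping and multiplicity-counting argument giving $\|M_{\bA^\star}\|_\op\le\sqrt{p}\,\|\bA^\star\|_\op$. Your additional care with the boundary truncation and the remark that the two conditions are incomparable are sound but not needed beyond what the paper records.
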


There is no direct assumption on $L_\star$; yet, our results depend on well-behavedness of $\kappa$, the condition number of $L_\star$, which is related to $\Gamma_t$ that appears in \citet{pmlr-v75-simchowitz18a, pmlr-v97-sarkar19a}.
$\kappa$ is related to the system stability, as explained in \Cref{sec:discussion}.
   
\begin{definition}
    \label{def:condition_number}
    Let $\kappa$ be the condition number of $L_\star$, i.e., $\kappa \coloneqq \dfrac{\|L_\star\|_\op}{\sigma_{\min}(L_\star)}$.
\end{definition}

\subsection{Constrained least squares}
\label{sec:estimator}

A natural estimator is the \textit{Ordinary Least Squares} (OLS), defined as any minimizer of the square loss:
\begin{equation}
    \label{eq:OLS}
    \boldsymbol{\hat A}_{\rm{OLS}} \in \argmin_{\boldsymbol{A}} \cL(\bA)\,, \quad \text{where} \quad \cL(\boldsymbol{A})\coloneqq \frac{1}{NT} \sum_{n=1}^N \sum_{t=p}^{T} \NRM{x_t^{(n)}-\sum_{k=1}^p A_k x_{t-k}^{(n)}}^2\,.
\end{equation}
The OLS estimator has been considered in previous works~\citep{pmlr-v75-simchowitz18a,Alquier2020,faradonbeh2018finite,pmlr-v97-sarkar19a}, albeit in the $p=1$ case.
Most of these works provide estimation rates on $\|\hat A-A^\star\|_\op$ or $\|\hat A -A^\star\|_F$, for marginally stable systems, i.e., under the assumption that $\rho(A)\leq 1$ \citep{Alquier2020,pmlr-v75-simchowitz18a,basu2019lowrank} and in the general case \citep{pmlr-v97-sarkar19a}.

Instead of directly considering the OLS estimator, we consider the empirical minimizer of the square loss under a restricted set of matrices $\bA$ that have a bounded operator norm:
\begin{equation}
\label{eq:estimator}
    \hbA \in \argmin_{\boldsymbol{A}=(A_1,\ldots,A_p)}\set{\cL(\boldsymbol{A}) \mid \|M_{\bA}\|_\op \leq D}\,.
\end{equation}
Note that the set
\begin{align*}
    \cA(D) \coloneqq \set{\boldsymbol{A}=(A_1,\ldots,A_p) \mid \|M_{\bA}\|_\op \leq D}\,,
\end{align*}
is bounded, closed and convex.
Hence, the empirical minimizer of the square loss over $\cA(D)$ can be computed with projected gradient descent \citep{duchi2008efficient} or the Frank-Wolfe algorithm \citep{pmlr-v28-jaggi13} as done for $\ell^1$ constrained optimization.
To avoid projecting onto the set $\cA(D)$, following \Cref{prop:bounded_op_norm}, it is possible to restrict $\cA(D)$ further to
\begin{equation*}
    \cA(D)' \coloneqq \set{\boldsymbol{A} \mid \sum_{i=1}^p \|A_i\|_\op^2 \leq D^2}\,, \quad \text{ and } \quad \cA(D)'' \coloneqq \set{\boldsymbol{A} \mid \|\bA\|_\op \leq \dfrac{D}{\sqrt{p}}} \,.
\end{equation*}
in order to ensure a condition directly on the design matrices.
Then, the empirical minimizer of the square loss over $\cA(D)'$ or $\cA(D)''$ can again be computed via projected gradient descent or the Frank-Wolfe algorithm, with simplified projection steps.

Lastly, we briefly remark that the diameter constraint in \Cref{eq:estimator} can be removed, \emph{i.e.}, $\cA(D)$ is replaced by $\cA(\infty)$, under an additional assumption on $NT$.
This is explained in detail in \Cref{sec:discussion}.

\subsection{Low-rank assumption}\label{sec:low_rank_setting_motiv}
A common assumption in multi-task and meta-learning is that high-dimensional data often shares a representation in a smaller space~\citep{bengio2013representation,pmlr-v139-tripuraneni21a,hospedales2022metalearning,boursier2022trace,pmlr-v162-collins22aMAML,yuksel2024firstorder}
The following low-rank assumptions are crucial, as they significantly improve the statistical complexity of the problem.

\begin{assumption}\label{hyp:low_rank}
    For all $k\in[p]$, $\rank(A_k^\star)\leq r$.
\end{assumption}
\begin{assumption}\label{hyp:low_rank_2}
    There exists an orthonormal matrix $P^\star\in\R^{r\times d}$ and matrices $B_1^\star,\ldots,B_p^\star\in\R^{d\times r}$ such that $A_k^\star=B_k^\star P^\star$ for all $k\in[p]$.
\end{assumption}

Note that \Cref{hyp:low_rank_2} is an instance of \Cref{hyp:low_rank}.
The factorization $A_k^\star=Q^\star C_k^\star$ is another subcase of \Cref{hyp:low_rank}, but is not considered as it leads to iterates that directly lie in the subspace spanned by $Q^\star$ and hence $Q^\star$ can be learned by treating the iterates $x_t^{(n)}$ as independent.
In order to benefit from the low-rank structure, we consider the following regularized estimator:
\begin{equation}
    \label{eq:estimator_low_rank}
    \hbA \in \argmin_{\boldsymbol{A} \in \cA_r(D)}\cL(\boldsymbol{A})\,, \enspace \text{where} \enspace \cA_r(D) \coloneqq \set{\boldsymbol{A} \in \cA(D) \mid \forall k\in[p], \enspace \rank(A_k) \leq r}\,.
\end{equation}

\subsection{Misspecification}
The context length of the generative autoregressive process might be unbounded, too large for an efficient estimation, or apriori unknown. In any case, practitioners still have to set a context length $p' \in \N^\star$ for the estimator, which might differ from the true $p$.
In this scenario, we need an additional boundedness assumption that relates the first $p'$ matrices of the ground truth.
\begin{assumption}\label{hyp:misspecified_op_norm}
    There exists a constant $D'$ such that
    \begin{equation*}
        \left\|\left(M_{\bA^\star} - M_{\bA^\star_{1:p'}}\right) L_\star\right\|_\op \leq D'\,, \quad \text{where} \quad \bA^\star_{1:p'} = (A_1^\star,\ldots,A_{p'}^\star,0_d,\ldots,0_d)\,.
    \end{equation*}
\end{assumption}

Instead of the estimator defined in \Cref{eq:estimator_low_rank}, we consider the following misspecified estimator:
\begin{equation}
    \label{eq:estimator_misspecification}
    \hbA \in \argmin_{\boldsymbol{A} \in \cA_{r,p'}(D)}\cL(\boldsymbol{A})\,,\ \text{where} \enspace \cA_{r,p'}(D) \coloneqq \set{\boldsymbol{A} \in \cA_r(D) \mid \forall p'<k\leq p, A_{k} = 0_d}\,.
\end{equation}
\Cref{hyp:misspecified_op_norm} is a strong assumption as it requires that $L_\star$ is well-behaved regardless of the sequence length $T$.
Consequently, the misspecification results are more restrictive than other results and apply to a smaller class of systems that still includes strictly stable systems as discussed in \Cref{sec:discussion}.

\section{Long-context linear system identification}\label{sec:main}

In this section, we present statistical rates for the recovery of the design matrices in terms of Frobenius norm.
Since the matrices $\bA$ lie in $\R^{d\times pd}$, the number of variables is $pd^2$.
In the \textit{i.i.d.} setting, the rates of the form $\|{\hbA -\bA^\star}\|_F^2=\cO(pd^2/(NT))$ are expected.
The following theorem extends this rate for long-context linear dynamical system identification:

\begin{theorem}\label{thm:full_rank}
    Let \Cref{hyp:subgaussian,hyp:bounded_op_norm} hold.
    Then, for any $0 < \delta < e^{-1}$, there exists a constant $C(\delta)$ such that the estimator $\hbA$ in \Cref{eq:estimator} satisfies with probability $1 - \delta$:
    \begin{equation}\label{eq:rate_full_rank}
        \NRM{\hbA -\bA^\star}_F^2  \leq C(\delta) D^2 \frac{pd^2}{N(T-p)} \mathrm{polylog}(\kappa,p,d,N,T)\,.
    \end{equation}
\end{theorem}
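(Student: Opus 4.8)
The plan is to cast the problem as a single large least-squares regression and exploit the operator-norm constraint to avoid mixing-time losses, following the ``learning without mixing'' philosophy of \citet{pmlr-v75-simchowitz18a} but adapted to the block-Toeplitz structure of $M_{\bA}$. First I would stack the $N$ trajectories into vectors $\zz^{(n)} = (\xi_1^{(n)},\ldots,\xi_T^{(n)}) \in \R^{Td}$, so that the data satisfies $(x_1^{(n)},\ldots,x_T^{(n)})^\top = L_\star \zz^{(n)}$, and rewrite the empirical loss $\cL(\bA)$ as a quadratic form in the residual operator $M_{\bA^\star} - M_{\bA}$ applied to $L_\star \zz^{(n)}$. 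Since $\hbA$ minimizes $\cL$ over $\cA(D)$ and $\bA^\star \in \cA(D)$ by \Cref{hyp:bounded_op_norm}, the basic inequality $\cL(\hbA) \leq \cL(\bA^\star)$ gives, after expanding, a bound of the form
\begin{equation*}
    \frac{1}{N(T-p)}\sum_{n}\NRM{(M_{\hbA}-M_{\bA^\star})_{p:T}\, L_\star \zz^{(n)}}^2 \;\leq\; \frac{2}{N(T-p)}\sum_n \big\langle \Xi^{(n)},\, (M_{\hbA}-M_{\bA^\star})_{p:T}\, L_\star \zz^{(n)}\big\rangle\,,
\end{equation*}
where $\Xi^{(n)}$ collects the noise rows $\xi_p^{(n)},\ldots,\xi_T^{(n)}$ and the subscript $p:T$ restricts to the relevant block-rows.

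The left side is a quadratic form that I need to lower-bound by a ``population covariance'' term comparable to $\sigma^2 \cdot \tfrac{N(T-p)}{?}\NRM{\hbA - \bA^\star}_F^2$ up to condition-number factors; the key point is that $\E[L_\star \zz \zz^\top L_\star^\top] = \sigma^2 L_\star L_\star^\top$, whose smallest block-diagonal eigenvalue is controlled by $\sigma_{\min}(L_\star)^2$, hence the appearance of $\kappa$. The right side is the ``noise'' term; since $\Xi^{(n)}$ depends on $\xi_p^{(n)},\ldots$ while $L_\star \zz^{(n)}$ truncated appropriately depends on earlier noise, one gets a martingale/self-normalized structure. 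The cleanest route is a uniform (over $\cA(D)$) deviation bound: cover the low-dimensional parameter set $\{\bA : \|M_{\bA}\|_\op \le D\} \subset \R^{pd^2}$ by an $\eps$-net of size $\exp(O(pd^2 \log(D T/\eps)))$, control the quadratic and cross terms at each net point via sub-Gaussian concentration (Hanson–Wright for the quadratic form in $\zz$, and a sub-exponential tail for the bilinear noise term), and union-bound. The $pd^2$ comes from the net cardinality, the $N(T-p)$ from the number of samples, the $D^2$ from the operator-norm constraint bounding $\|(M_{\hbA}-M_{\bA^\star})L_\star\|_\op$, and the polylog from the covering radius, the failure probability, and the sub-Gaussian constant $c$.

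Concretely the steps are: (1) the reduction to the stacked-trajectory quadratic form and the basic inequality; (2) a deterministic lemma relating $\tfrac{1}{N(T-p)}\sum_n \NRM{(M_{\bD})_{p:T} L_\star \zz^{(n)}}^2$ from below to $\NRM{\bD}_F^2$ on the event that an empirical covariance concentrates, losing a factor $\kappa^2$ (or a related $\Gamma$-type quantity) — here one uses that the map $\bD \mapsto (M_{\bD})_{p:T}$ is injective with a well-conditioned pseudo-inverse because distinct blocks $D_k$ occupy distinct diagonals of $M_{\bD}$; (3) the concentration of that empirical covariance $\tfrac1N\sum_n L_\star\zz^{(n)}(\zz^{(n)})^\top L_\star^\top$ around $\sigma^2 L_\star L_\star^\top$ in the relevant directions, via matrix concentration for sub-Gaussian vectors, requiring $N(T-p) \gtrsim pd^2 \,\mathrm{polylog}$ for the burn-in; (4) the uniform bound on the cross term via $\eps$-net + Bernstein/sub-exponential tails, giving the $\sqrt{pd^2 \cdot \sigma^2 \|L_\star\|_\op^2 / (N(T-p))}$ scale; (5) combining (2)–(4) and solving the resulting quadratic inequality in $\NRM{\hbA-\bA^\star}_F$ to get \Cref{eq:rate_full_rank}. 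The main obstacle I anticipate is step (2)–(3): proving the lower bound on the quadratic form uniformly, i.e., establishing a restricted-eigenvalue / block-martingale small-ball property for the process $(L_\star \zz^{(n)})$ that does \emph{not} degrade with $T$ or with the mixing time. This is exactly where the block structure of $M_{\bA^\star}$ and \Cref{hyp:bounded_op_norm} must be used to keep the constant at $\mathrm{poly}(\kappa)$ rather than something exponentially small; the Toeplitz/Hankel geometry of $L_\star$ (via $L_\star = (I - M_{\bA^\star})^{-1}$) is what makes this tractable, and I expect the argument to mirror Simchowitz et al.'s block-martingale small-ball but with $d$-dimensional blocks over a $p$-step window rather than scalars over one step.
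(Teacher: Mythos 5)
Your architecture---basic inequality, $\eps$-net over the $pd^2$-dimensional constrained set, Hanson--Wright for the quadratic forms, a lower isometry via $\sigma_{\min}(L_\star)$ and an upper bound on the noise cross term via $\|L_\star\|_\op$---matches the paper's Lemmas \ref{lem:lower_bound} and \ref{lem:upper_bound} almost exactly. The gap is in your step (5). If you lower-bound the quadratic term by $c\,\sigma^2\sigma_{\min}(L_\star)^2 N\|\Delta_{\hbA}\|_F^2$, upper-bound the cross term at a fixed Bernstein scale by $C\,\sigma^2\|L_\star\|_\op\sqrt{pd^2N}\,\|\Delta_{\hbA}\|_F$, and then ``solve the resulting quadratic inequality,'' you get $\|\Delta_{\hbA}\|_F^2\lesssim \kappa^2 D^2\, pd^2/N$ --- a \emph{multiplicative} $\kappa^2$ (you even announce ``losing a factor $\kappa^2$'' in step (2)). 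For marginally stable systems $\kappa$ is polynomial in $T$, so this does not prove the stated bound, which carries only $\mathrm{polylog}(\kappa)$. You mention a ``martingale/self-normalized structure'' in passing, but your concrete plan never exploits it: sub-exponential tails at a fixed deviation level are not self-normalized, and that is precisely the step at which the mixing/conditioning cost re-enters.

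The missing ingredient is the paper's peeling argument based on Freedman's inequality (\Cref{lemma:freedman_extended}, \Cref{thm:concentration}). One observes that the cross term is a martingale $Y_{\bA}=\Tr(E^\top \Delta_{\bA} L_\star E)$ whose predictable quadratic variation is exactly $W_{\bA}=\sigma^2\|\Delta_{\bA}L_\star E\|_F^2$, i.e.\ the quadratic term itself, and then bounds the probability of the basic-inequality event $\set{W_{\bA}\le 2\sigma^2 Y_{\bA}}$ \emph{directly}, peeling over $e$-adic levels of $W_{\bA}$ between the lower isometry bound $\alpha_L$ and the upper bound $\alpha_U$. The resulting failure probability is $\exp\left(-c\,\alpha_L+\ln\ln\left(\alpha_U/\alpha_L\right)\right)$, so the ratio $\alpha_U/\alpha_L$ --- which is where $\kappa=\|L_\star\|_\op/\sigma_{\min}(L_\star)$ lives --- enters only doubly logarithmically; the two isometries set the \emph{range} of the peeling rather than being divided against each other. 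Without this device your route proves a strictly weaker theorem. A secondary point: the burn-in $N(T-p)\gtrsim pd^2\,\mathrm{polylog}$ you anticipate in step (3) is not needed for the constrained estimator; the uniform lower isometry is only required on the ``bad set'' of matrices with $\|\Delta_{\bA}\|_F^2/\|\Delta_{\bA}\|_\op^2$ large, where the constraint $\|\Delta_{\bA}\|_\op\le 2D$ does the work, and a sample-size condition appears only when the diameter constraint is removed (\Cref{coro:OLS}).
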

The constant $C(\delta)$ depends mildly on the sub-Gaussianity constant $c$ as described in \Cref{app:proof_main} and a sketch of the proof is provided in \Cref{app:sketch}. 
The rate is numerically verified in \Cref{fig:rate_main} in \Cref{app:experiments}.
\Cref{thm:full_rank} exhibits several interesting features.

First, it shows that despite the temporal dependencies in the data, learning still occurs at a pace reminiscent of the \textit{i.i.d.} setting, with a logarithmic term adjustment.
This implies that the number of samples required to learn the system is approximately the same as in the \textit{i.i.d.} setting, except for the logarithmic factor.
Therefore, even though the data is sequential and only \textit{i.i.d.} at the sequence level, the number of iterates $N(T-p)$ represents the \emph{effective} data size.

Second, the rate in \Cref{eq:rate_full_rank} exhibits a linear dependency on the context length $p$ instead of a quadratic dependency.
This is only due to the number of parameters to be estimated, which is $pd^2$ instead of $d^2$ and not a deflation in $T$ by a factor of $p$, which implies the context length does not affect the effective sample size.
The additive factor in $T-p$ is due to the fact that the first iterates do not depend on the full context length, and thus are not as informative as the later iterates.
More detailed discussions of \Cref{thm:full_rank}, in comparison with previous work, can be found in \Cref{sec:discussion}.

\paragraph{Low-rank setting.}
Next, we extend the results to the low-rank setting:

\begin{theorem}\label{thm:low_rank}
    Let \Cref{hyp:subgaussian,hyp:bounded_op_norm,hyp:low_rank} hold.
    Then, for any $0 < \delta < e^{-1}$, there exists a constant $C(\delta)$ such that the estimator $\hbA$ in \Cref{eq:estimator_low_rank} satisfies with probability $1 - \delta$:
    \begin{equation*}\label{eq:rate_low_rank}
        \NRM{\hbA -\bA^\star}_F^2 \leq C(\delta) D^2 \frac{prd}{N(T-p)} \mathrm{polylog}(\kappa,p,d,r,N,T)\,.
    \end{equation*}
\end{theorem}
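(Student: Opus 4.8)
The plan is to mirror the argument behind Theorem \ref{thm:full_rank}, upgrading the covering-number / dimension bookkeeping to exploit the rank constraint. The estimator $\hbA$ in \Cref{eq:estimator_low_rank} lives in $\cA_r(D) \subseteq \cA(D)$, so all the deterministic structure exploited for the full-rank case (boundedness of $\|M_{\hbA}\|_\op$, the identity $L_\star = (I_{Td}-M_{\bA^\star})^{-1}$, and the link between the excess loss and $\|\hbA-\bA^\star\|_F^2$ through the covariance of the iterates) carries over verbatim. The only place where the ambient dimension $pd^2$ entered the full-rank bound was in controlling the supremum of the noise-design cross term $\frac{1}{N(T-p)}\sum_{n,t}\langle \xi_t^{(n)}, \sum_k (A_k-A_k^\star)x_{t-k}^{(n)}\rangle$ uniformly over the feasible set, via a chaining/covering argument over a $pd^2$-dimensional ball. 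First I would replace that covering bound: the difference set $\{\bA - \bA^\star : \bA \in \cA_r(D)\}$ consists of $p$-tuples of matrices each of rank at most $2r$, and a standard volumetric estimate (e.g.\ \citet[Ch.~7--8]{wainwright_2019}, or the $\epsilon$-net bound for rank-$2r$ matrices of bounded Frobenius norm) shows such a set admits an $\epsilon$-net of log-cardinality $O(prd\log(1/\epsilon))$ rather than $O(pd^2\log(1/\epsilon))$. Feeding this into the same Dudley-type bound replaces $pd^2$ by $prd$ in the variance term, which is exactly the claimed improvement.

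The key steps, in order: (i) reduce to the \emph{basic inequality} $\cL(\hbA) \le \cL(\bA^\star)$, expand, and obtain that $\frac{1}{N(T-p)}\sum_{n,t}\|\sum_k(\hat A_k - A_k^\star)x_{t-k}^{(n)}\|^2$ is bounded by twice the cross term evaluated at $\hbA$; (ii) lower-bound the left-hand quadratic form by $c_{\rm lb}\|\hbA - \bA^\star\|_F^2$ on the event that the empirical block-Toeplitz covariance of the iterates is well-conditioned — this is where $\kappa$ (hence the polylog in $\kappa$) enters, and this step is \emph{identical} to the full-rank proof since it does not see the rank constraint; (iii) upper-bound the cross term, uniformly over $\bA \in \cA_r(D)$, by a localized empirical-process argument: peel over the scale $\|\bA - \bA^\star\|_F \le u$, bound the expected sup by a Dudley integral against the covering number of the \emph{rank-$2r$} difference set, and convert to high probability via a Bernstein/sub-exponential tail (legitimate because the increments are products of sub-Gaussians, hence sub-exponential, and $\|M_{\bA}\|_\op \le D$ controls the relevant variance proxies); (iv) solve the resulting self-bounding inequality $\|\hbA-\bA^\star\|_F^2 \lesssim \sqrt{\frac{prd}{N(T-p)}}\,\|\hbA-\bA^\star\|_F \cdot \mathrm{polylog} + (\text{lower-order})$ for $\|\hbA-\bA^\star\|_F^2$.

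The main obstacle is step (iii): making the localized empirical-process bound work over a \emph{non-convex} set (the rank constraint is not convex) while retaining the dependence on $\kappa$ only through logarithmic factors. Unlike the full-rank case, one cannot simply take a net of a Euclidean ball; instead one nets the manifold of bounded-rank tuples, and must check that the net can be taken \emph{inside} a slightly enlarged feasible set so that the operator-norm control $\|M_{\bA}\|_\op \le D$ (needed to bound the sub-exponential Orlicz norms of the chaining increments, and to control the burn-in/early-iterate contribution reflected in the $T-p$ denominator) still applies at every net point. I expect this to go through because rank-$2r$ matrices of Frobenius norm $\le \rho$ have an $\epsilon$-net of size $(C\rho/\epsilon)^{O(rd)}$ whose points can themselves be taken of rank $\le 2r$, and because the quadratic-form lower bound in step (ii) and the covariance concentration are inherited unchanged from the proof of \Cref{thm:full_rank}; only the metric-entropy input changes, turning $pd^2$ into $prd$.
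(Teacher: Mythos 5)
Your rank-specific modification is exactly what the paper does: the only place the ambient dimension enters is the metric entropy, and the paper replaces the full ball by the Cand\`es--Plan net for rank-$r$ blocks (\Cref{lemma:covering,corollary:covering}), giving log-cardinality $O(prd\ln(Dp/\epsilon))$ and hence $prd$ in place of $pd^2$; your steps (i) and (ii) also match the paper's basic inequality (\Cref{prop:minimizer}) and lower isometry (\Cref{lem:lower_bound}), whose constant is $\sigma_{\min}(L_\star)^2\geq (D+1)^{-2}$ and does not actually involve $\kappa$.

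The genuine gap is in step (iii). You propose to bound the cross term $\Tr(E^\top\Delta_{\bA}L_\star E)$ uniformly by a Dudley integral plus a Bernstein/sub-exponential tail, asserting that $\|M_{\bA}\|_\op\leq D$ controls the variance proxies. It does not: the conditional variance of the increment $\langle(\bA-\bA^\star)X_t^{(n)},\xi_t^{(n)}\rangle$ is $\sigma^2\|(\bA-\bA^\star)X_t^{(n)}\|^2$, and the iterates satisfy $X_t^{(n)}=L_\star\,\xi_{1:t}^{(n)}$, so the total variance scales with $\|L_\star\|_\op^2$, which by \Cref{lemma:L_star_singular_general} is only bounded by $(D^T-1)/(D-1)$ and is polynomial in $T$ even for marginally stable systems. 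Carrying that through your self-bounding inequality in step (iv) against the lower isometry $\sigma_{\min}(L_\star)^2 N\|\Delta_{\bA}\|_F^2$ yields a multiplicative factor of order $\kappa^2$, i.e.\ a polynomial (not polylogarithmic) dependence on $\kappa$ --- precisely the ``mixing'' penalty the theorem claims to avoid. Peeling over the scale $\|\bA-\bA^\star\|_F\leq u$ does not repair this, because the offending factor is not a function of that scale. The paper's essential device, absent from your proposal, is a \emph{self-normalized} Freedman argument (\Cref{lemma:freedman_extended,thm:concentration}): the martingale $Y_{\bA}$ is compared directly to its predictable quadratic variation $W_{\bA}=\sigma^2\NRM{\Delta_{\bA}L_\star E}_F^2$, which is exactly the left-hand side of the basic inequality, and the peeling is performed geometrically over the \emph{level of $W_{\bA}$} between $\alpha_L$ and $\gamma\alpha_U$. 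In that scheme the absolute upper bound $\alpha_U$ (which is where $\|L_\star\|_\op$ lives, via \Cref{lem:upper_bound}) enters only through $\ln\ln(\gamma\alpha_U/\alpha_L)$, which is how $\kappa$ is demoted to a polylogarithmic factor. Without replacing your Dudley--Bernstein step by such a self-normalized comparison, the argument proves a strictly weaker statement than \Cref{thm:low_rank}.
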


The improved statistical rate depends on $rd$ instead of $d^2$.
Note, however, that this estimator cannot be computed in polynomial time, since the underlying optimization problem involves a non-convex constraint on the rank of all $A_k$.
Several heuristics exist to approximate this estimator.
One approach is the Burer-Monteiro factorization~\citep{Burer2003,Burer2004}, which involves parameterizing $A_k$ as $A_k=B_k C_k$ with $B_k\in\R^{d\times r}$ and $C_k\in\R^{r\times d}$.
This method relaxes the constraint to a convex set but results in a non-convex function. 
Another approach is \textit{hard-thresholding} algorithms, which use projected (stochastic) gradient descent on the non-convex constraint set~\citep{BLUMENSATH2009iht,foucart2018iterative}.

Perhaps the most intuitive approach is to use nuclear norm regularization, which is a convex relaxation of the rank constraint:
\begin{equation}\label{eq:nucl_estim}
    \hbA\in \argmin\set{\cL_\lambda(\bA) \mid \bA\in \cA(D)}   \,, \quad \text{where} \quad
    \cL_\lambda(\bA) =\cL(\bA)+ \lambda \NRM{\bA}_{\groupnuc}\,,
\end{equation}
and $\|\bA\|_\groupnuc=\sum_{k=1}^p \|A_k\|_*$ is the \emph{group-nuclear norm}.
We leave the analysis of the nuclear norm estimator for future work.

While the low-rank estimator cannot be computed easily, substituting the constraint $\forall k, \rank(A_k)\leq r$ with $\rank(\bA)\leq r'$ enables a closed-form solution for the optimization problem~\citep{bunea2011optimal}.
However, the latter constraint effectively includes the former only when $r'\geq pr$, which would lead to suboptimal dependencies on the context length.
These constraints are equivalent only if all $A_k$ matrices project onto the same space: i.e., $A_k=Q B_k$ for some $Q\in\R^{d\times r}$ and $B_k\in\R^{r\times r}$.

\paragraph{Misspecification.}
Lastly, we study linear long-context autoregressive prediction models under misspecified context lengths and show that partial learning still occurs for misspecified models:

\begin{theorem}\label{thm:misspecification}
    Let \Cref{hyp:subgaussian,hyp:bounded_op_norm,hyp:low_rank,hyp:misspecified_op_norm} hold.
    Then, for any $0 < \delta < e^{-1}$, there exists a constant $C(\delta)$ such that the estimator $\hbA$ in \Cref{eq:estimator_misspecification} satisfies with probability $1 - \delta$:
    \begin{equation*}\label{eq:rate_misspecification}
        \NRM{\hbA -\bA^\star_{p'}}_F^2 \leq C(\delta) D^2 (D' + 1)^2 \frac{p'dr}{N(T-p)} \mathrm{polylog}(\kappa,p',d,r,N,T)\,.
    \end{equation*}
\end{theorem}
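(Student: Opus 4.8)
The plan is to rerun the constrained least-squares argument behind \Cref{thm:low_rank}, but with the basic inequality anchored at $\bA^\star_{p'}$, understood as the minimizer of the population risk $\bA\mapsto\E\,\cL(\bA)$ over $\cA_{r,p'}(D)$ (the best order-$p'$, rank-$\le r$ predictor). The literal truncation $(A_1^\star,\dots,A_{p'}^\star,0,\dots,0)$ is a convenient feasible competitor and approximation budget: its blocks are rank $\le r$ by \Cref{hyp:low_rank}, and since $L_\star^{-1}=I_{Td}-M_{\bA^\star}$ has operator norm $\le 1+D$, \Cref{hyp:misspecified_op_norm} gives $\NRM{M_{\bA^\star}-M_{\bA^\star_{1:p'}}}_\op\le D'(1+D)$, so it lies in $\cA_{r,p'}(O(D(1+D')))$. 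Writing $\Pi$ for the projector onto time-blocks $p,\dots,T$ and decomposing the residual of $\bA^\star_{p'}$ as $x_t-\sum_{k\le p'}(A^\star_{p'})_k x_{t-k}=\xi_t+g_t$, at the trajectory level $\boldsymbol g=M_{\bV}L_\star\boldsymbol\xi$ with $\bV=\bA^\star-\bA^\star_{p'}$, and the bounds above give $\NRM{M_{\bV}L_\star}_\op\lesssim (D'+1)\,\mathrm{polylog}(\kappa)$, hence $\tfrac1{NT}\sum_n\NRM{\boldsymbol g^{(n)}}^2\lesssim (D'+1)^2\sigma^2 d\,\mathrm{polylog}(\kappa)$ with high probability by sub-Gaussian norm concentration. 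The structural fact driving everything, and unaffected by $p$, is that $\xi_t$ is independent of the past $\cF_{t-1}$.

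First I would write the basic inequality: from $\cL(\hbA)\le\cL(\bA^\star_{p'})$ (both feasible), expanding the square and using linearity of $\bA\mapsto M_{\bA}$ with $\boldsymbol\Delta\coloneqq\hbA-\bA^\star_{p'}$,
\[
\tfrac1{NT}\sum_n\NRM{\Pi M_{\boldsymbol\Delta}\boldsymbol x^{(n)}}^2\ \le\ \tfrac2{NT}\sum_n\bigl\langle\Pi M_{\boldsymbol\Delta}\boldsymbol x^{(n)},\Pi\boldsymbol\xi^{(n)}\bigr\rangle\ +\ \tfrac2{NT}\sum_n\bigl\langle\Pi M_{\boldsymbol\Delta}\boldsymbol x^{(n)},\Pi\boldsymbol g^{(n)}\bigr\rangle ,
\]
where $\boldsymbol\Delta$ lies in the set $\cD$ of block matrices with the last $p-p'$ blocks zero and each remaining block of rank $\le 2r$, of statistical dimension $m\asymp p'rd$. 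The left side is bounded below by the restricted-eigenvalue / lower-isometry lemma from the proof of \Cref{thm:low_rank}, specialized to $\cD$ (only smaller than the rank-$r$ set used there, the data still generated by the full $L_\star$): $\tfrac1{NT}\sum_n\NRM{\Pi M_{\boldsymbol\Delta}\boldsymbol x^{(n)}}^2\ge\nu\NRM{\boldsymbol\Delta}_F^2$ uniformly on $\cD$, with $\nu\gtrsim\sigma^2/\mathrm{polylog}(\kappa)$, once $N(T-p)\gtrsim m\,\mathrm{polylog}$ (otherwise the claimed bound is vacuous), together with an upper isometry $\le\nu'\NRM{\boldsymbol\Delta}_F^2$, $\nu'\lesssim D^2\sigma^2\,\mathrm{polylog}$. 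The noise cross-term is controlled as in the $p=1$ analyses: since $\xi_t\perp\cF_{t-1}$, the self-normalized martingale bound gives, uniformly over $\cD$, $\bigl|\tfrac1{NT}\sum_n\langle\Pi M_{\boldsymbol\Delta}\boldsymbol x^{(n)},\Pi\boldsymbol\xi^{(n)}\rangle\bigr|\lesssim\sqrt{\tfrac1{NT}\sum_n\NRM{\Pi M_{\boldsymbol\Delta}\boldsymbol x^{(n)}}^2}\cdot\sqrt{\sigma^2 m\,\mathrm{polylog}/(NT)}$.

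The delicate term is the approximation-error cross-term: since $g_t$ is $\cF_{t-1}$-measurable it cannot be treated as noise, and a bare Cauchy--Schwarz would leave a $\Theta\bigl((D'+1)^2\sigma^2 d\bigr)$ bias that does not vanish with $NT$. The remedy is to exploit the optimality of $\bA^\star_{p'}$: writing $\boldsymbol g^{(n)}=M_{\bV}L_\star\boldsymbol\xi^{(n)}$ and $\Pi M_{\boldsymbol\Delta}\boldsymbol x^{(n)}=\Pi M_{\boldsymbol\Delta}L_\star\boldsymbol\xi^{(n)}$, the term equals $\tfrac2{NT}\sum_n(\boldsymbol\xi^{(n)})^\top W_{\boldsymbol\Delta}\boldsymbol\xi^{(n)}$ with $W_{\boldsymbol\Delta}=(M_{\boldsymbol\Delta}L_\star)^\top\Pi\,M_{\bV}L_\star$ linear in $\boldsymbol\Delta$, whose mean $\sigma^2\Tr W_{\boldsymbol\Delta}$ is the population cross-term and hence $\le 0$ for every feasible direction $\boldsymbol\Delta=\bA-\bA^\star_{p'}$ by first-order optimality. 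Only the fluctuation survives; I would bound it uniformly over $\cD$ by Hanson--Wright plus an $\varepsilon$-net, using $\NRM{W_{\boldsymbol\Delta}}_F\lesssim\NRM{M_{\bV}L_\star}_\op\sqrt{T}\,\NRM{\boldsymbol\Delta}_F\lesssim(D'+1)\sqrt{T}\,\NRM{\boldsymbol\Delta}_F\,\mathrm{polylog}(\kappa)$ and the analogous operator-norm estimate, to get $\bigl|\tfrac2{NT}\sum_n(\boldsymbol\xi^{(n)})^\top W_{\boldsymbol\Delta}\boldsymbol\xi^{(n)}-\sigma^2\Tr W_{\boldsymbol\Delta}\bigr|\lesssim(D'+1)\sigma^2\NRM{\boldsymbol\Delta}_F\sqrt{m\,\mathrm{polylog}(\kappa)/(NT)}$ plus lower-order terms. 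Finally, in the basic inequality I absorb the noise cross-term via $\sqrt{Q}\,a\le\tfrac{\nu}{8\nu'}Q+\tfrac{2\nu'}{\nu}a^2$ and $Q\le\nu'\NRM{\boldsymbol\Delta}_F^2$, and the fluctuation term via AM--GM, $(D'+1)\sigma^2\NRM{\boldsymbol\Delta}_F\sqrt{m/(NT)}\le\tfrac{\nu}{8}\NRM{\boldsymbol\Delta}_F^2+c(D'+1)^2\sigma^4 m/(\nu NT)$; substituting $\nu\gtrsim\sigma^2/\mathrm{polylog}(\kappa)$, $\nu'\lesssim D^2\sigma^2\,\mathrm{polylog}$, $m\asymp p'rd$, and taking a union bound yields $\NRM{\hbA-\bA^\star_{p'}}_F^2\lesssim D^2(D'+1)^2\,\tfrac{p'rd}{N(T-p)}\,\mathrm{polylog}(\kappa,p',d,r,N,T)$.

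The main obstacle is precisely this approximation-error cross-term: everything else is the \Cref{thm:full_rank}/\Cref{thm:low_rank} argument with $d^2$ degrees of freedom replaced by $p'rd$, but the non-martingale term $\boldsymbol g$ forces the two new ideas above — anchoring at the population-optimal order-$p'$ model so the cross-term has non-positive mean, and bounding only its fluctuation, where the extra $1/\sqrt{NT}$ and the linearity in $\NRM{\boldsymbol\Delta}_F$ restore the parametric rate. A secondary point is that \Cref{hyp:misspecified_op_norm} must bound $\NRM{M_{\bV}L_\star}_\op$ and the restricted eigenvalues uniformly in $T$ with only $\mathrm{polylog}(\kappa)$ loss, which is exactly why the misspecification result is confined to strictly stable systems.
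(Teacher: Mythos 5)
Your scaffolding (basic inequality, Hanson--Wright isometries, $\epsilon$-net of size $\exp(O(p'dr\ln(\cdot)))$, martingale-vs-quadratic-variation control of the noise cross-term) matches the paper's, and you correctly identify the approximation-error cross-term as the crux. But your treatment of that term has two genuine gaps.

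First, a target mismatch. The theorem, and \Cref{hyp:misspecified_op_norm}, define $\bA^\star_{p'}$ as the \emph{literal truncation} $(A_1^\star,\ldots,A_{p'}^\star,0_d,\ldots,0_d)$, and the claimed bound is on $\NRM{\hbA-\bA^\star_{p'}}_F^2$ for that object. You re-anchor at the minimizer of the population risk over $\cA_{r,p'}(D)$, which is a different matrix in general (the best order-$p'$ predictor re-weights $A_1^\star,\ldots,A_{p'}^\star$ to account for the correlation between $x_{t-k}$, $k\le p'$, and the omitted lags). Proving convergence to the population minimizer does not prove the stated theorem unless you add a (nontrivial, and not supplied) bound on the distance between the two anchors.

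Second, the sign argument that makes your decomposition work is not available. You need $\sigma^2\Tr W_{\boldsymbol\Delta}\le 0$ for every feasible direction $\boldsymbol\Delta=\bA-\bA^\star_{p'}$ via first-order optimality of the population minimizer; the variational inequality $\langle\nabla \E\cL(\bA^\star_{p'}),\bA-\bA^\star_{p'}\rangle\ge 0$ for all feasible $\bA$ requires the feasible set to be convex (or at least star-shaped at the anchor), and $\cA_{r,p'}(D)$ is not convex because of the rank constraint --- a global minimizer over a nonconvex set need not satisfy it. If instead you keep the literal truncation as anchor (as the theorem requires), the cross-term $\E\langle\sum_{k\le p'}\Delta_k x_{t-k},\,g_t\rangle$ has no sign at all, and a bare Cauchy--Schwarz leaves a bias term that does not decay at the rate $p'dr/(N(T-p))$ --- exactly the failure mode you flag yourself. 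So the one new idea your proof relies on does not go through.

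The paper takes a different and more robust route here: it never splits the cross-term into martingale plus bias. Starting from $\cL(\hbA)\le\cL(\bA^\star_{p'})$ it obtains $\NRM{\Delta_{\hbA}L_\star E}_F^2\le 2\Tr\bigl(E^\top L_\star^\top\Delta_{\hbA}^\top(I-M_{\bA^\star_{p'}})L_\star E\bigr)$, writes $(I-M_{\bA^\star_{p'}})L_\star=I_{Td}+(M_{\bA^\star}-M_{\bA^\star_{p'}})L_\star$, and uses \Cref{hyp:misspecified_op_norm} to absorb the whole right-hand side multiplicatively into the pure martingale term, $\le 2\eta\,\Tr(E^\top\Delta_{\hbA}L_\star E)$ with $\eta\le 1+D'$ (\Cref{coro:minimizer}). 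The factor $(D'+1)^2$ in the rate comes precisely from this $\eta$, and the rest of the proof is the same Freedman-plus-net comparison of $Y_{\bA}$ with $W_{\bA}$ used for \Cref{thm:full_rank,thm:low_rank}, with the anchor and the covering set adjusted to $p'$ and rank $r$. If you want to salvage your decomposition, you would need either to convexify the constraint set (losing the $rd$ versus $d^2$ improvement) or to control the un-signed population cross-term at the truncation directly, which is what \Cref{hyp:misspecified_op_norm} is engineered to do in operator norm.
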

For $r=d$, we recover \Cref{thm:full_rank} (full-rank setting) for misspecified context windows.
In that case, the main improvement of \Cref{thm:misspecification} over \Cref{thm:full_rank} is the dependency on $p'$ instead of $p$. In practice, $p$ can be much larger than $p'$ and even on the order of $T$.
In such a setting, learning all matrices $A_k^\star$ becomes impossible if $N$ is not large enough and one does not take advantage of the length $T$ of the sequences.
One can instead misspecify the student with a context length of $p'\lll p$ such that $NT\gg p'd^2$, so that the first $p'$ matrices are still learned.

Lastly, we briefly remark that \Cref{thm:full_rank} provides a rate for the case where $p<p'$.
The latter case can be seen under a well-specified setting by rewriting the ground truth model as $\boldsymbol A^\star=(A_1^\star,\ldots,A_p^\star,0_d,\ldots,0_d)$ where the last $p'-p$ indices are padded with null matrices.
Learning in such a case is then answered by \Cref{thm:full_rank} with a worsened rate that depends on $p'$.

\section{Discussion}\label{sec:discussion}

We now discuss the rates obtained in \Cref{sec:main} and compare them with previous results obtained for linear dynamical systems.
In particular, we comment on the ``learning-without-mixing'' phenomenon, introduced by \citet{pmlr-v75-simchowitz18a} for the first-order linear dynamical systems.

\paragraph{Adaptation of first-order techniques ($p=1$) to the long-context setting.}
Here, we explain why techniques developed in the $p=1$ setting, in particular those of \citep{pmlr-v75-simchowitz18a,pmlr-v97-sarkar19a}, do not work for the $p > 1$ setting and why, even if adapted, they would fail to achieve the desired sharp dependency on $p$.

Observe that the multi-step dynamics can be cast as a 1-step dynamic using \textit{block companion matrices}. Let $X_t^{(n)}=(x_t^{(n),\top},\ldots,x_{t+p-1}^{(n),\top})^\top\in\R^{pd}$, $\Xi_t^{(n)}=(0,\ldots,0,(\xi_t^{(n)})^\top)^\top\in\R^{pd}$ and let $\cA^\star\in\R^{pd\times pd}$ be the companion matrix associated to $\boldsymbol{A}^\star$:
\begin{equation*}\label{eq:companion}
    \cA^\star= \begin{pmatrix}
    0_d&I_d&\cdots&0_d\\
    \vdots &\ddots& \ddots &\vdots\\
    0_d&\cdots&0_d&I_d\\
    A_p^\star& A_{p-1}^\star &\cdots&A_1^\star
\end{pmatrix}\,.
\end{equation*}
We have the relation $X_{t+1}^{(n)}=\cA^\star X_t^{(n)} + \Xi_t^{(n)}$, reducing the problem to the $p=1$ case by increasing the dimension from $d$ to $pd$.
First, a brute-force adaptation of previous results to this case \citep[e.g.][]{basu2019lowrank,pmlr-v75-simchowitz18a,pmlr-v97-sarkar19a} is not possible since these works assume that the noise covariance of the additive noise added at each step ($\Xi_t^{(n)}$ here) is the identity matrix, or at least is positive definite.
In our case, the noise covariance is the $pd\times pd$ block-diagonal matrix, with $p-1$ blocks equal to $0_d$ and the last one to $I_d$.
The covariance matrix is thus non-invertible, preventing the use of previous works.

In addition, arguments based on system excitation \citep[e.g.][]{basu2019lowrank,pmlr-v75-simchowitz18a} are bound to incur an additional dependence on $p$, on top of the factors expected due to the dimensionality of the problem.
In particular, as seen in the small-ball martingales argument by~\citet[Section 2.3]{pmlr-v75-simchowitz18a}, evaluating quantities like $\|{(\cA-\cA^\star)X_t^{(n)}}\|^2$ for the $(k,\nu,q)$-block martingale small-ball assumption requires $k\geq p$ as $p$ represents the minimum number of steps for noise to propagate in every direction.
Consequently, these analyses lead to a suboptimal $p$ dependency.

Moreover, adapting the techniques developed in the $p=1$ setting~\citep{pmlr-v97-sarkar19a} which rely on explicit factorization of the OLS estimator is challenging.
In the $p > 1$ case, the higher-order dynamics complicate the factorization, and the data matrix takes a Toeplitz form, which is more difficult to handle.

\paragraph{Learning-without-mixing.}
We explain why our rates exhibit ``learning-without-mixing''. We begin by defining ``learning-with-mixing'' and discussing the factors that influence the mixing time $\tau_{\mix}$. We then introduce the concept of ``learning-without-mixing'' as exemplified by \citet{pmlr-v75-simchowitz18a} and show that our bounds exhibit similar properties.

Let $\tau_\mix$ be the mixing time of the Markov chain $(X_t^{(n)})_{t\geq0}$.
In the \emph{i.i.d.}~setting (for which $\tau_\mix=1$), the OLS estimator obtains the optimal rate $\|\hbA_{\rm OLS}-\bA^\star\|_F^2= \cO(pd^2/NT)$, since $pd^2$ is the dimension of the inputs.
With non-\emph{i.i.d.}~but Markovian data, a naive strategy would be to emulate \emph{i.i.d.}-ness and take only a sample every $\tau_\mix$ steps of the trajectory to compute the OLS estimator, thus having data that are approximately \emph{i.i.d.} while dividing the number of samples by $\tau_\mix$.
This naive ``learning-with-mixing'' estimator would yield $\|\hbA_{\rm naive}-\bA^\star\|_F^2= \tilde\cO(\tau_\mix pd^2/NT)$, where the mixing time appears as a cost of non-\emph{i.i.d.}-ness and $\tilde\cO$ hides the logarithmic terms in problem parameters $p,d,N,T$.

In our case, two components contribute to the mixing time, $\tau_\mix$.
The first component is related to the \emph{stability} or the \emph{excitability} of the system and scales as $1/(1 - \rho)$, where $\rho=\|M_{\bA^\star}\|_\op < 1$.
When $\rho\lll 1$, this component has no impact, while $\rho$ tends to 1, the system is less stable and the Markov chain mixes more slowly.
The second component is directly related to the \textit{context length} $p$ of the process.
Regardless of the factor $1/(1 - \rho)$ above, the mixing time of our Markov chain is larger than $p$: since noise is added only in the last block in the recursion $X_{t+1}^{(n)}=\cA^\star X_t^{(n)} + \Xi_t^{(n)}$, starting from a given state, $p$ iterations at least are needed to eventually forget this given state.
The naive \emph{learning-with-mixing} benchmark rate is thus $\|\hbA_{\rm}-\bA^\star\|_F^2 \leq {\max\left(1/\left(1-\rho\right),p\right)pd^2/NT}$.

In contrast, a rate of convergence that exhibits ``learning-without-mixing'' is a rate of the form $\|\hbA-\bA^\star\|_F^2\leq Cpd^2/NT$ where $C\lll \tau_{\mix}$.
Such a rate means that the matrix $\bA^\star$ is learned without paying the cost of non-\emph{i.i.d.}-ness.
For instance, in the $p=1$ case, the rate of \citet{pmlr-v75-simchowitz18a} does not worsen as $\rho$ tends to 1---in fact, $\rho\to 1$ actually improves their rates.

The bound presented in \Cref{thm:full_rank} takes the form $\tilde\cO(D^2 pd^2 /(N (T-p)))$.
Importantly, the dependencies on the underlying Markov chain are only through $D$ and $\ln \kappa$, which do not have a direct dependency on the mixing time.
The dependency on $D$ is merely an operator norm upper bound and does not diverge as the mixing time grows to $\infty$. 
Similarly, $\ln \kappa$ is logarithmic in $T$ for systems of interest, as we discuss below.

\paragraph{System stability and $\kappa$.}
We now explain the behavior of $\kappa$ defined in \Cref{def:condition_number}.
First, by \Cref{lemma:L_star_singular_general}, we have that $\sigma_{\min}(L_\star) \geq \frac{1}{D+1}$ and, thus, it is sufficient to upper bound
\begin{equation}
    \label{eq:kappa_bound}
    \zeta(T) \coloneqq \sup_{i, j \in [T]} \|L_\star^{(i,j)}\|_\op \geq \sup_{i, j \in [T]} \dfrac{\|L_\star^{(i,j)}\|_F}{\sqrt{d}} \geq \dfrac{\|L_\star\|_F}{\sqrt{d} T} \geq \dfrac{\|L_\star\|_\op}{\sqrt{d} T}\,,
\end{equation}
to control $\kappa$.
\Cref{eq:kappa_bound} implies that if the noise at step $i$ contributes to step $j$, as measured by $L_\star^{(i,j)}$, at a polynomial rate in $\left(j-i\right)$, then $\kappa$ grows at most polynomially in $T$.
For such a $\kappa$, the resulting dependency on $T$ is of order $\ln T$ and mild.
Instead, if it is exponential in $\left(j-i\right)$, then $\ln \kappa$ grows linearly in $T$ and the dependency on $T$ cancels out in the rate.

We use the quantity $\zeta(T)$ to define \emph{strictly stable}, \emph{marginally stable} and \emph{explosive} systems:
\begin{definition}
    \label{def:systems}
    An LDS as defined in \Cref{eq:lin_dyn_context_full} is called
    \begin{align*}
        \text{strictly stable if} &\quad \zeta(T) = \cO(\rho^{T}) \quad \text{for some} \quad \rho < 1\,, \\
        \text{marginally stable if} &\quad \zeta(T) = \cO(T^k) \quad \text{for some} \quad k \in \N \,, \\
        \text{explosive if} &\quad \zeta(T) = \cO(\rho^T) \quad \text{for some} \quad \rho > 1\,.
    \end{align*}
\end{definition}

\Cref{def:systems} is similar to the notions of strictly stable, marginally stable and explosive systems considered in \citep{pmlr-v75-simchowitz18a,pmlr-v97-sarkar19a} for $p = 1$.
Let $\rho(A^\star) \coloneqq \lambda_{\max}(A^\star)$ be the spectral radius of $A^\star$ and $V \Lambda V^{-1}$ be the Jordan normal form of $A^\star$.
Then,
\begin{equation*}
    \|L_\star^{(i,j)}\|_\op = \|\left(A^\star\right)^{j-i}\|_\op = \|V \Lambda^{j-1} V^{-1}\|_\op \leq \|V\|_\op \|\Lambda^{j-i}\|_\op \|V^{-1}\|_\op\,.
\end{equation*}
Note that $\|V\|_\op$ and $\|V^{-1}\|_\op$ are constants. 
For upper bounding $\|\Lambda^{j-i}\|_\op$, consider the Jordan blocks $\set{\Lambda_k}$ of $\Lambda$, associated with the eigenvalues $\lambda_k$ of $A^\star$.
Then, $\|\Lambda^{j-i}\|_\op \leq \sup_{k} \|\Lambda_k^{j-i}\|_\op$ and
\begin{equation*}
    \begin{split}
        \|\Lambda_k^{j-i}\|_\op &= \|\left(\lambda_k I_n + N_n\right)^{j-i}\|_\op = \left\|\sum_{m=0}^{\max\set{j-i, n-1}} \lambda_k^{m} \binom{j-i}{m} N_n^{m} \right\|_\op \\
        &\leq \sum_{m=0}^{\max\set{j-i, n-1}} \rho(A^\star)^{j-i} \binom{j-i}{m}\,,     
    \end{split}
\end{equation*}
where $n$ is the block size for the Jordan block $\Lambda_k$.
Note here that $n$ does not scale with $T$.

In particular, for \emph{strictly stable} systems of \citet{pmlr-v75-simchowitz18a,pmlr-v97-sarkar19a} with $\rho < 1$, $\zeta(T) = \cO(\rho^T)$.
For \emph{marginally stable} systems of \citet{pmlr-v97-sarkar19a} with $\rho < 1 + \frac{\gamma}{T}$ with some constant $\gamma > 0$, $\rho(A^\star)^{j-i} \leq e^{\gamma}$ and $\zeta(T) = \cO(T^k)$ for some fixed $k$ that depends on the largest Jordan block of $A^\star$.
For \emph{explosive} systems of \citet{pmlr-v97-sarkar19a} with $\rho > 1$, $\zeta(T) = \cO(\rho^T)$.
Thus, \Cref{def:systems} provides a general categorization of the systems based on the growth of $\zeta(T)$ in $p>1$ case.
Furthermore, our analysis yields sharp rates for \emph{strictly stable} and \emph{marginally stable} systems previously considered only in the $p=1$ setting.

\paragraph{Search space diameter $D$.}
Our analysis is based on the assumption that the diameter $D$ of the search space is bounded and, hence, not directly applicable to the OLS estimator in \Cref{eq:OLS}.
However, \Cref{coro:OLS} in \Cref{app:thm_statement} extends the results of \Cref{thm:full_rank,thm:low_rank,thm:misspecification} to minimizers without a constraint on the diameter of the search space.
This extension does not change the rates but requires the additional assumption that $NT = \tilde{\Omega}(p'^2dr)$.\footnote{We use the convention that $r=d, p'=p$ for \Cref{thm:full_rank}.}
In the case of \Cref{thm:full_rank}, this corresponds to a result for the OLS estimator, but necessitating a number of samples quadratic in context length.
Below, we comment on why the diameter restrictions is required when $NT \lll p'^2 d r$.

As mentioned earlier in comparison with \citep{pmlr-v75-simchowitz18a,pmlr-v97-sarkar19a}, the simple OLS factorization in the $p=1$ case does not generalize to the $p > 1$ and the data matrix has a Toeplitz structure that is more difficult to control.
In order to deal with these issues, as explained in the sketch of proof in \Cref{app:sketch}, we rely on techniques from empirical process theory. These techniques are applied to quantify the probability of the event in \Cref{eq:main_condition}, which holds for any empirical risk minimizer of the square loss.
This leads us to the study of the concentration of the martingales defined in \Cref{eq:martingale_series} around their predictable variation, which is a key step in our analysis.
A uniform concentration is possible only if there is a uniform lower bound on the variations of the martingales, which can be achieved using a set of well-behaved matrices $\|M_{\bA} - M_{\bA^\star}\|_F / \|M_{\bA} - M_{\bA^\star}\|_\op$.
In order to translate these conditions on the design matrices without additional dimensional dependencies, we introduce the operator norm constraint.

Lastly, it is possible to extend our analysis to unconstrained OLS by establishing a general coarse upper bound on the operator norm $\|M_{\hbA}\|_\op \leq K$.
This allows us to consider uniform lower bounds to matrices $\bA$ with $\|M_{\bA}\|_\op \leq K$, which lead to a rate for the OLS estimator in a similar manner.

\paragraph{Upper bound on $D'$.}
The misspecification result in \Cref{thm:misspecification} requires the additional assumption given in \Cref{hyp:misspecified_op_norm}.
In \Cref{remark:eta}, we show that a good upper bound on $D'$ is possible when $D<1$, \emph{i.e.}, the system is strictly stable, by using the bound $\|L_\star\|_\op \leq 1/(1-D)$.
However, misspecification results are not, a priori, applicable to marginally stable systems, which limits the practical applicability of our results.
We leave the investigation of misspecification results for marginally stable systems for future work.

\paragraph{Practical implications.}
The rates obtained in \Cref{sec:main} hold true if the empirical risk minimizer $\hbA$ is replaced by any estimate $\tbA$ that verifies
\begin{equation}
    \label{eq:extension_erm}
    \cL(\tbA) \leq \cL(\bA^\star)\,.
\end{equation}
The training error for the ground truth $\bA^\star$ concentrates around $\sigma^2 d$ with a rate of $\cO(1/\sqrt{NT})$, which is identical to that in the \emph{i.i.d.} setting.
Hence, any algorithm that optimizes the training error below the threshold $\sigma^2 d$ achieves the rates in \Cref{sec:main}.

Moreover, in \Cref{app:approximation}, we show that the rates extend to approximate minimizers, i.e., any estimate $\tbA$ that verifies
\begin{equation}
    \label{eq:extension_erm_approx}
    \cL(\tbA) \leq \cL(\hbA) + \epsilon_{\mathrm{tr}}\,, \quad \text{where} \quad \epsilon_{\mathrm{tr}} = \tilde\cO\left(\frac{p'dr}{NT}\right)\,,
\end{equation}
where $\epsilon_{\mathrm{tr}}$ is the surplus training error of the estimate $\tbA$ over the empirical risk minimizer $\hbA$.

Estimates satisfying equations \eqref{eq:extension_erm} or \eqref{eq:extension_erm_approx} are computationally tractable in practice.
This implies that practitioners can determine the required number of samples, or $N$ and $T$, for estimating the system parameters up to a fixed precision by using the rates in \Cref{sec:main}.

\section{Conclusion}

In this work, we extend non-asymptotic linear system identification theory and derive upper bounds on the sample complexity of learning long-context linear autoregressive models.
Our bounds improve upon the existing arguments specific to first-order systems by employing a uniform concentration argument over prediction differences.
We further establish improved statistical rates when learning under a low-rank assumption.
Finally, we show that even with long or unbounded generative contexts, \emph{misspecification} still allows the estimation of the matrices with a reduced sample complexity for stable systems.

While this work makes significant progress in non-asymptotic linear system identification theory, several technical questions remain open for further investigation.
Can the OLS operator norm be coarsely controlled to derive rates for unconstrained OLS in the $NT = \Omega(pdr)$ regime?
Is it possible to find efficient algorithms that would benefit from low-rank assumptions?
Lastly, can misspecification be beneficial for marginally stable systems?

\if\ificlrfinal
\subsubsection*{Acknowledgments}
This project was supported by the Swiss National Science Foundation (grant number 212111).
This work was partially funded by an unrestricted gift from Google.
\fi

\bibliography{iclr2025_conference}
\bibliographystyle{iclr2025_conference}

\newpage
\appendix
\section*{Organization of the Appendix}

The appendix is organized as follows,
\begin{itemize}
    \item In \Cref{app:sketch}, we provide a sketch of the proof for \Cref{thm:full_rank}.
    \item In \Cref{app:tools}, we provide preliminary tools needed for our analyses: Hanson-Wright and Freedman inequalities, supremum of sub-Gaussian processes, covering numbers and proof of \Cref{prop:bounded_op_norm}.
    \item In \Cref{app:main_thm_proof}, we prove \Cref{thm:full_rank,thm:low_rank,thm:misspecification} jointly under \Cref{thm:main_thm}.
    \item In \Cref{app:approximation}, we show that the rates extend to approximate empirical minimizers.
    \item In \Cref{app:experiments}, we provide numerical experiments to verify our theoretical findings.
\end{itemize}

\section{Sketch of Proof}
\label{app:sketch}
We provide a sketch of proof for \Cref{thm:full_rank}.
The proofs of \Cref{thm:low_rank,thm:misspecification} are similar and can be found in \Cref{app:main_thm_proof}.
In the following, $\Delta_{\bA}$ is a shorthand for $M_{\bA} - M_{\bA^\star}$ and $E \in \R^{Td \times N}$ is the matrix that collects the noise concatenated over time, as explained in \Cref{def:matrix_notation}.

The empirical risk minimizer $\hbA$ satisfies the following optimality condition:
\begin{equation}
    \label{eq:main_condition}
    \cL(\hbA) \leq \cL(\bA^\star)\,, \quad \text{ or written differently,} \quad \NRM{\Delta_{\hbA} L_\star E}^2 \leq 2 \Tr\left(E^\top L_\star^\top \Delta_{\hbA}^\top E\right)\,,
\end{equation}
due to the \emph{well-specified} setting, i.e., $\bA^\star \in \cA(D)$.
The condition in \Cref{eq:main_condition} is of interest as
\begin{equation*}
    \forall \bA \in \cA(D)\,, \quad \E\left[\NRM{\Delta_{\hbA} L_\star E}^2\right] = \sigma^2 N \|\Delta_{\hbA} L_\star\|_F^2 > 0 = \E\left[\Tr\left(E^\top L_\star^\top \Delta_{\hbA}^\top E\right)\right]\,.
\end{equation*}
This inequality hints that if for a set of matrices $\cA'(D) \subseteq \cA(D)$, there is a uniform result
\begin{equation}
    \label{eq:uniform_condition}
    \cE \coloneqq \set{\forall \bA \in \cA'(D): \NRM{\Delta_{\hbA} L_\star E}^2 \geq 2 \Tr\left(E^\top L_\star^\top \Delta_{\hbA}^\top E\right)}\,,
\end{equation}
with high probability as seen from their means, then the empirical risk minimizer $\hbA$ belongs to the set $\cA(D) \setminus \cA'(D)$ with the same high probability by a simple Bayesian argument.
Hence, the proof of \Cref{thm:full_rank} is reduced to proving \Cref{eq:uniform_condition} for a suitable set of matrices.

Fix a $\bA \in \cA'(D)$ and study the martingale series defined by the difference sequences
\begin{equation}
    \label{eq:martingale_series}
    d_{t, i}^{(n)} = \left(\left(\bA - \bA^\star\right) X_{t}^{(n)}\right)_i \left(\xi_{t}^{(n)}\right)_i \,,
\end{equation}
where the series is first ordered in $i$, then in $t$, and finally in $n$.
The sum of the differences is then
\begin{equation*}
    Y_{\bA} = \sum_{n, t, i} d_{t, i}^{(n)} = \sum_{n, t} \left\langle\left(\bA - \bA^\star\right) X_{t}^{(n)}, \xi_{t}^{(n)}\right\rangle = \Tr\left(E^\top L_\star^\top \Delta_{\bA}^\top E\right) \,,
\end{equation*}
and the predictable quadratic variation of the series is
\begin{equation*}
    W_{\bA} = \sum_{n, t, i} \E_{\left(\xi_{t}^{(n)}\right)_i}\left[\left(d_{t, i}^{(n)}\right)^2\right] = \sum_{n, t} \NRM{\left(\bA^\star - \bA\right) X_{t}^{(n)}}^2 = \sigma^2 \NRM{\Delta_{\bA} L_\star E}^2 \,.
\end{equation*}
The condition in \Cref{eq:uniform_condition} requires that the predictable quadratic variation $W_{\bA}$ is uniformly bounded by a multiple of the sum of the differences $Y_{\bA}$, i.e., $\cE = \left\{\forall \bA \in \cA'(D): W_{\bA} \leq 2 \sigma^2 Y_{\bA}\right\}.$

In order to prove probabilistic statements on $\cE$, we use Freedman's inequality \citep{freedman1975tail,dzhaparidze2001bernstein} which gives control on $Y_\bA$ and $W_\bA^R$ for a particular $\bA$:
\begin{equation}
    \label{eq:freedman_statement}
    \begin{split}
        \P\left(Y_\bA \geq r_Y\,, W_\bA^R \leq r_W\right) &\leq \exp\left(-\frac{r_Y^2/2}{r_W + Rr_Y}\right)\,,
    \end{split}
\end{equation}
where $W_{\bA}^R = W_\bA + \sum_{n, t, i} \1_{d_{t, i}^{(n)} > R} \left(d_{t, i}^{(n)}\right)^2$ and $r_Y, r_W, R > 0$ are arbitrary constants.
As the noise is sub-Gaussian, it is possible to upper bound $W_{\bA}^R$ with $W_{\bA}$:
\begin{equation*}
    \begin{split}
        \sum_{n, t, i} \left(d_{t, i}^{(n)}\right)^2 &\leq \sup_{n, t, i} \left(\xi_{t}^{(n)}\right)_i^2 \cdot \sum_{n, t} \|\left(\bA - \bA^\star\right) X_{t}^{(n)}\|^2 \\
        &\quad\quad\overset{\text{w.h.p}}{\implies} \quad \forall \bA : W_{\bA}^R \leq C_1 \ln \left(2dNT\right) W_{\bA}\,.
    \end{split}
\end{equation*}
Further, assume that there are uniform upper and lower bounds on $Y_{\bA}$ and $W_{\bA}$, respectively:
\begin{equation}
    \label{eq:uniform_bounds}
    \exists 0 < \alpha_L < \alpha_U \enspace \text{such that} \enspace \forall \bA \in \cA'(D): Y_{\bA} \leq \alpha_U \enspace \text{and} \enspace \alpha_L \leq W_{\bA} \leq W_{\bA}^R \,.
\end{equation}

Let $\gamma = C_1 \ln 2dNT$ and set $k'$ to be the smallest integer that verifies $e^{k'} \alpha_L \geq 2 \gamma \sigma^2 \alpha_U$.
Then,
\begin{equation*}
    \P\left(W_\bA \leq 2 \sigma^2 Y_\bA\right) \leq \P\left(W_\bA^R \leq 2 \gamma \sigma^2 Y_\bA\right) \leq \cup_{k=1}^{k'} \P\left(W_\bA^R \leq e^{k} \alpha_L, 2 \gamma \sigma^2 Y_\bA \geq e^{k-1}\alpha_L \right)\,.
\end{equation*}
Each of the terms in the union can be controlled by the Freedman's inequality in \Cref{eq:freedman_statement} with the choices of $r_Y = \alpha_L e^{k-1} / 2\gamma \sigma^2$, $r_W = \alpha_L e^k$ and $R = 2e\gamma\sigma^2$:
\begin{equation}
    \label{eq:bound_on_condition}
    \begin{split}
        &\P\left(W_\bA \leq 2 \gamma \sigma^2 Y_\bA\right) \leq \sum_{k=1}^{k'} \exp\left(-\dfrac{e^{k-2} \alpha_L}{8 e \gamma^2 \sigma^4}\right) \\
        &\quad\quad\quad \leq \exp\left(-\dfrac{\alpha_L}{C_2 \sigma^4 \left(\ln 2dNT\right)^2} + \ln \left(\ln \left(\dfrac{C_3 \sigma^2  \ln \left(2dNT\right) \alpha_U}{\alpha_L}\right) + 1\right)\right) \,.        
    \end{split}
\end{equation}

As can be seen from \Cref{eq:bound_on_condition}, the probability of the event $\set{W_{\bA} \leq 2 \sigma^2 Y_{\bA}}$ is largely controlled with the lower bound $\alpha_L$ as the ratio $\alpha_U / \alpha_L$ only matters logarithmically.
This is crucial as the $\alpha_U$ and $\alpha_L$ differ with the condition number $\kappa$, which can scale with $T$.

Therefore, it possible to control the event $\cE$ with a union bound over an $\epsilon$-net of $\cA'(D)$.
In particular, $\alpha_L$ needs to be uniformly bounded below as follows:
\begin{equation*}
    \alpha_L \geq \ln |\cN_{\epsilon}(\cA'(D))| \ln (2dNT)^2\,.
\end{equation*}

This is achieved by Hanson-Wright inequality \citep{hanson1971} which allows us to derive the needed uniform lower and upper bounds in \Cref{eq:uniform_bounds}
\begin{equation*}
    \begin{split}
        Y_\bA &\leq c_1 \sigma^2 \|L_\star\|_\op \sqrt{pdr N} \NRM{\Delta_{\bA}}_F\,, \quad W_{\bA}^R \geq W_{\bA} \geq c_2 \sigma^4 \sigma_{\min}(L_\star)^2 N \NRM{\Delta_{\bA}}_F^2\,, \\
    \end{split}
\end{equation*}
with high probability as long as $\cA'(D)$ is composed of matrices that satisfy
\begin{equation*}
    \dfrac{\|\Delta_\bA\|_F^2}{\|\Delta_\bA\|_\op^2} \geq \ln |\cN_{\epsilon}(\cA'(D))|\,, \quad \text{where} \quad \epsilon \sim \dfrac{\mathrm{polysqrt}(p,d,N,T,\kappa)}{1+c^2 \ln \frac{1}{\delta}}\,.
\end{equation*}
Here, a dependency on $\kappa$ is introduced as $\epsilon$ scales with $\kappa$.
This is needed to bound the worst-case errors while transitioning from point-wise bounds on the $\epsilon$-net $\cN_{\epsilon}(A'(D))$ to the whole set $A'(D)$.

Finally, since there is a uniform bound on $\|\Delta_\bA\|_\op \leq 2D$ implied by \Cref{hyp:bounded_op_norm}, setting
\begin{equation*}
    \cA'(D) = \set{\bA \mid \|\Delta_\bA\|_F^2 \geq C(\delta) D^2 \dfrac{pdr}{N} \mathrm{polylog}(\kappa,p,d,N,T)}\,,
\end{equation*}
for some constant $C(\delta)$ is sufficient to deduce $\hbA \in \cA(D) \setminus \cA'(D)$ with probability $1-\delta$.
The proof of \Cref{thm:full_rank} is then complete as $\|\Delta_{\bA}\|_F^2 = \|M_{\bA} - M_{\bA^\star}\|_F^2 \geq (T-p) \|\bA - \bA^\star\|_F^2.$

\section{Preliminary tools}\label{app:tools}

\subsection{Hanson-Wright inequality}

We use Hanson-Wright inequality \citep{hanson1971,wright1973,rudelson2013} to show concentration of certain second-order terms.
\begin{theorem}{(Hanson-Wright)}
\label{thm:hanson-wright}
    Let $Z = (Z_1, \dots, Z_n) \in \R^n$ be a random vector with independent components $Z_i$ which satisfy $\E [Z_i] = 0$ and $\|Z_i\|_{\psi_2} \leq K$. Let $P$ be an $n \times n$ matrix. Then, for every $r \geq 0$,
    \begin{equation*}
        \P\left(\left| Z^\top P Z - \E\left[Z^\top P Z\right] \right| > r\right) \leq 2\exp\left(-C_{HW} \min \left(\dfrac{r^2}{K^4 \|P\|_F^2}, \dfrac{r}{K^2 \|P\|_\op}\right)\right).
    \end{equation*}
    The bound can be turned into a one-sided bound by dropping the constant $2$.
\end{theorem}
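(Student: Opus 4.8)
The plan is to follow the decoupling-and-chaos strategy of Rudelson--Vershynin. Since $Z^\top P Z$ depends only on the symmetric part $(P+P^\top)/2$, whose Frobenius and operator norms are bounded by those of $P$, I would first assume $P$ symmetric without loss of generality, and by homogeneity rescale so that $K=1$. The tail bound will then follow from Markov's inequality applied to the exponential moment: it suffices to show that for all $\lambda$ with $|\lambda|\leq c_0/\NRM{P}_\op$ one has
\begin{equation*}
    \E\exp\!\big(\lambda(Z^\top P Z - \E[Z^\top P Z])\big)\leq \exp\!\big(C\lambda^2\NRM{P}_F^2\big)\,,
\end{equation*}
after which optimizing $\lambda$ over the admissible range produces the $\min$ of the quadratic regime ($r^2/\NRM{P}_F^2$) and the linear regime ($r/\NRM{P}_\op$). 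The two-sided bound follows by applying the one-sided estimate to $\pm P$, which accounts for the leading factor $2$.

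Next I would split the centered quadratic form into its diagonal and off-diagonal contributions,
\begin{equation*}
    Z^\top P Z - \E[Z^\top P Z] = \sum_i P_{ii}(Z_i^2-\E Z_i^2) + \sum_{i\neq j}P_{ij}Z_iZ_j\,,
\end{equation*}
and control each through a separate exponential-moment estimate, recombining them at the end by a union bound with $r$ split in two. For the diagonal term, each $Z_i^2-\E Z_i^2$ is a centered sub-exponential variable (since $\NRM{Z_i}_{\psi_2}\leq 1$), so I would invoke Bernstein's inequality for independent sub-exponential summands. This already yields the desired $\min$ structure, with variance proxy $\sum_i P_{ii}^2\leq\NRM{P}_F^2$ and scale $\max_i|P_{ii}|\leq\NRM{P}_\op$.

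The off-diagonal term is where the work concentrates. I would first apply a decoupling inequality to replace $\sum_{i\neq j}P_{ij}Z_iZ_j$ by the decoupled chaos $\sum_{i,j}P_{ij}Z_iZ'_j$, with $Z'$ an independent copy of $Z$, at the cost of an absolute constant in the moment-generating function. Conditioning on $Z'$, the inner sum is a linear form $\sum_i(PZ')_iZ_i$ in the independent sub-Gaussian coordinates $Z_i$, hence conditionally sub-Gaussian with variance proxy $\NRM{PZ'}^2$, giving $\E_Z\exp(\lambda\sum_{i,j}P_{ij}Z_iZ'_j)\leq \exp(c_1\lambda^2\NRM{PZ'}^2)$. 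Taking the expectation over $Z'$ reduces the problem to bounding $\E_{Z'}\exp(c_1\lambda^2 (Z')^\top P^2 Z')$, itself a Gaussian-type chaos. Diagonalizing $P^2=\sum_k \mu_k u_k u_k^\top$, the exponent decouples across eigendirections, and bounding the resulting product through the largest eigenvalue $\mu_{\max}=\NRM{P}_\op^2$ together with the trace $\sum_k\mu_k=\NRM{P}_F^2$ yields the required MGF bound on the admissible $\lambda$-range. The main obstacle, and the step I would treat most carefully, is exactly this final reduction: controlling the rotated chaos $(Z')^\top P^2 Z'$ so that the operator norm governs the range of valid $\lambda$ while the Frobenius norm governs the variance proxy — this separation is precisely what produces the sub-Gaussian and sub-exponential regimes of the stated bound.
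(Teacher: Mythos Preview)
The paper does not prove this statement: Theorem~\ref{thm:hanson-wright} is stated as a preliminary tool with citations to \emph{Hanson (1971)}, \emph{Wright (1973)}, and \emph{Rudelson--Vershynin (2013)}, and is then applied as a black box in Lemmas~\ref{lem:lower_bound}, \ref{lem:upper_bound}, and \ref{lemma:noise}. There is therefore no ``paper's proof'' to compare against; your sketch is essentially the Rudelson--Vershynin argument, which is exactly one of the references the paper cites, so in that sense you are aligned with what the paper intends.

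That said, there is a genuine gap in your sketch at the step you yourself flag as delicate. After conditioning and reducing to $\E_{Z'}\exp\bigl(c_1\lambda^2 (Z')^\top P^2 Z'\bigr)$, you write ``diagonalizing $P^2=\sum_k\mu_k u_k u_k^\top$, the exponent decouples across eigendirections.'' This decoupling holds only if the rotated coordinates $u_k^\top Z'$ are independent, which requires rotation invariance --- a property enjoyed by Gaussian vectors but not by an arbitrary vector with independent sub-Gaussian coordinates. In the actual Rudelson--Vershynin proof, this is handled by a \emph{Gaussian comparison} step: one first shows that the MGF of the decoupled chaos $Z^\top P Z'$ is dominated (up to constants) by that of $g^\top P g'$ for independent standard Gaussians $g,g'$, and only then diagonalizes. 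Without this replacement, your diagonalization argument does not go through, and the separation into a Frobenius-controlled variance proxy and an operator-norm-controlled range for $\lambda$ is not justified. You have correctly identified where the difficulty lies, but the missing ingredient is precisely this comparison lemma.
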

\begin{remark}
    For data regime considered in this paper, $K=c\sigma$ in \Cref{thm:hanson-wright}.
\end{remark}

\subsection{Freedman's inequality}
We use an extension of Freedman's inequality \citep{freedman1975tail} to non-bounded differences by \citet{dzhaparidze2001bernstein} to show concentration of certain second-order terms.
For the sake of completeness, we provide the original Freedman's inequality.
We also remark that it is possible to use the original Freedman's inequality in our proofs to deal with any \emph{strictly bounded noise}, leading to improved rates, as explained in \Cref{remark:third_order}.

\begin{theorem}{(Freedman's inequality)}
    \label{thm:freedman_original}
        Let $Y_0, \dots, Y_n$ be a real-valued martingale series that is adapted to the filtration $\cF_0, \dots, \cF_n$ where $Y_0 = 0$.
        Let $d_1, \dots, d_n$ be the difference sequence induced, i.e.,
        \begin{equation*}
            d_i = Y_i - Y_{i-1} \quad \text{for } i=1,\dots,n.
        \end{equation*}
        Assume that $d_i$ is upper bounded by some $R$, i.e., $|d_i| \leq R$ for all $i$.
        Let $W_i$ be the quadratic variation of the martingale series, i.e.,
        \begin{equation*}
            W_i = \sum_{j=1}^i \E[d_j^2 \mid \cF_{j-1}] \quad \text{for } i=1,\dots,n.
        \end{equation*}
        Then, for any $r, W > 0$,
        \begin{equation*}
            \P\left(\exists k\geq 0 : Y_k \geq r \enspace \text{and} \enspace W_k \leq W\right) \leq \exp\left(-\dfrac{r^2/2}{W + Rr}\right)\,.
        \end{equation*}
    \end{theorem}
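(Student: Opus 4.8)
The plan is to prove this classical bound by the exponential-supermartingale method: build a one-parameter family of nonnegative supermartingales out of $Y_k$ and its predictable variation $W_k$, apply optional stopping to get a tail bound uniform over $k$, and then optimize the free parameter and simplify.

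The first ingredient is a scalar inequality: the map $g(x) = (e^x - 1 - x)/x^2$, extended by $g(0) = 1/2$, is nondecreasing on $\R$, so for any $c > 0$ and any $x \le c$ one has $e^x \le 1 + x + g(c)\, x^2$. Fix $\theta > 0$ and apply this with $x = \theta d_k \le \theta R =: c$, which is legitimate because $|d_k| \le R$. Taking $\E[\,\cdot\mid\cF_{k-1}]$, using the martingale property $\E[d_k\mid\cF_{k-1}] = 0$ and $1 + y \le e^y$, one obtains $\E[e^{\theta d_k}\mid\cF_{k-1}] \le \exp(\phi(\theta)\,\E[d_k^2\mid\cF_{k-1}])$ with $\phi(\theta) := (e^{\theta R} - 1 - \theta R)/R^2$. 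Since $W_k - W_{k-1} = \E[d_k^2\mid\cF_{k-1}]$ is $\cF_{k-1}$-measurable, it follows that $Z_k := \exp(\theta Y_k - \phi(\theta) W_k)$ satisfies $\E[Z_k\mid\cF_{k-1}] \le Z_{k-1}$; it is a nonnegative supermartingale with $Z_0 = 1$, and the precise form of $\phi(\theta)$ is forced so that the $\phi(\theta)W_k$ term exactly cancels the conditional-increment bound.

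Next comes the maximal inequality. Let $\tau := \inf\{k \ge 0 : Y_k \ge r \text{ and } W_k \le W\}$, with $\inf\emptyset = \infty$; since $Y_k$ and $W_k$ are $\cF_k$-measurable this is a stopping time, and the event in the statement is precisely $\{\tau \le n\}$. As $W_k$ is nondecreasing and $\phi(\theta) \ge 0$, on $\{\tau \le n\}$ we have $Z_\tau \ge \exp(\theta r - \phi(\theta) W)$. Optional stopping for the nonnegative supermartingale $Z$ at the bounded time $\tau \wedge n$ gives $1 = \E[Z_0] \ge \E[Z_{\tau\wedge n}] \ge \E[Z_\tau\,\1_{\tau\le n}] \ge \exp(\theta r - \phi(\theta)W)\,\P(\tau\le n)$, hence $\P(\exists k : Y_k \ge r,\ W_k \le W) \le \exp(-\theta r + \phi(\theta)W)$ for every $\theta > 0$.

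Finally I would optimize and simplify. Minimizing $-\theta r + \phi(\theta)W$ over $\theta > 0$ is the standard Bennett computation, with optimizer $\theta^\star = R^{-1}\ln(1 + Rr/W)$ and optimal value $-W R^{-2} h(Rr/W)$ where $h(u) = (1+u)\ln(1+u) - u$; substituting $u = Rr/W$ and using the elementary inequality $h(u) \ge \tfrac{u^2}{2(1+u)}$ for $u \ge 0$ --- verified from the fact that $\varphi(u) := h(u) - \tfrac{u^2}{2(1+u)}$ has $\varphi(0) = \varphi'(0) = 0$ and $\varphi''(u) = \tfrac{2u+u^2}{(1+u)^3} \ge 0$ --- produces exactly $\exp(-\tfrac{r^2/2}{W+Rr})$. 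Being a textbook result, this has no serious obstacle; the two points requiring a little care are calibrating $\phi(\theta)$ as above, which relies on the predictability of $W_k$, and handling the quantifier ``$\exists k$'' through optional stopping rather than a union bound over $k$ (which would be lossy, and useless when $n$ is large), with the monotonicity of $W_k$ supplying $W_\tau \le W$ on the event of interest.
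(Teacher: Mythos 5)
Your proof is correct and complete: the exponential-supermartingale construction $Z_k=\exp(\theta Y_k-\phi(\theta)W_k)$ with $\phi(\theta)=(e^{\theta R}-1-\theta R)/R^2$, the optional-stopping step (which correctly exploits the predictability of $W_k$ for the supermartingale property and its monotonicity to get $W_\tau\le W$ on the event of interest), and the Bennett-to-Bernstein simplification via $h(u)\ge u^2/(2(1+u))$ all check out, and you even only use the one-sided bound $d_k\le R$, which is all that is needed. The paper does not prove this statement --- it imports it verbatim from Freedman (1975) --- and your argument is precisely the classical one, so there is nothing to compare against.
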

\begin{theorem}{(Freedman's inequality for non-bounded differences)}
\label{thm:freedman}
    Let $Y_0, \dots, Y_n$ be a real-valued martingale series that is adapted to the filtration $\cF_0, \dots, \cF_n$ where $Y_0 = 0$.
    Let $d_1, \dots, d_n$ be the difference sequence induced, i.e.,
    \begin{equation*}
        d_i = Y_i - Y_{i-1} \quad \text{for } i=1,\dots,n.
    \end{equation*}
    Let $W_i^R$ be the quadratic variation of the martingale series plus an error term for large differences,
    \begin{equation*}
        W_i^R = \sum_{j=1}^i \E[d_j^2 \mid \cF_{j-1}] + d_i^2 \1_{\set{|d_i|>R}} \quad \text{for } i=1,\dots,n.
    \end{equation*}
    We set $W_i = W_i^0$ for ease of notation.
    Then, for any $r, R, W > 0$,
    \begin{equation*}
        \P\left(\exists k\geq 0 : Y_k \geq r \enspace \text{and} \enspace W_k^R \leq W\right) \leq \exp\left(-\dfrac{r^2/2}{W + Rr}\right)\,.
    \end{equation*}
\end{theorem}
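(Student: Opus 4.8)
The plan is to build a one-parameter family of exponential supermartingales that simultaneously absorb $Y_k$ and $W_k^R$, and then extract the tail bound from the maximal inequality for nonnegative supermartingales. Fix $\lambda>0$ and let $\rho=\rho(\lambda)\coloneqq(e^{\lambda R}-1-\lambda R)/R^2\ge 0$ be the standard Bennett function. Define $M_k=\exp\!\big(\lambda Y_k-\rho\,W_k^R\big)$, so that $M_0=1$ and $M_k\ge 0$. The core of the argument is to show that $(M_k)_{k\ge 0}$ is an $(\cF_k)$-supermartingale. Writing $W_k^R-W_{k-1}^R=\E[d_k^2\mid\cF_{k-1}]+d_k^2\1_{\{|d_k|>R\}}$, this is equivalent to
\begin{equation*}
    \E\!\left[\exp\!\big(\lambda d_k-\rho\,d_k^2\1_{\{|d_k|>R\}}\big)\;\middle|\;\cF_{k-1}\right]\;\le\;\exp\!\big(\rho\,\E[d_k^2\mid\cF_{k-1}]\big)\,.
\end{equation*}
I would obtain this from a deterministic pointwise inequality, namely $e^{\lambda x-\rho x^2\1_{\{|x|>R\}}}\le 1+\lambda x+\rho x^2$ for every $x\in\R$: taking the conditional expectation, using $\E[d_k\mid\cF_{k-1}]=0$ and $1+u\le e^u$, then closes the step. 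A pleasant feature of this route is that it never needs $W_k^R$ to be predictable --- the large-jump correction $-\rho\,d_k^2\1_{\{|d_k|>R\}}$ lives inside the same conditional expectation as $e^{\lambda d_k}$, so no stopping-time surgery on the variation process is required.

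The technical heart is therefore the pointwise inequality $e^{\lambda x-\rho x^2\1_{\{|x|>R\}}}\le 1+\lambda x+\rho x^2$. For $x\le R$ the indicator in the exponent vanishes and it reads $e^{\lambda x}\le 1+\lambda x+\rho x^2$; this follows from the fact that $y\mapsto(e^{y}-1-y)/y^2$ is nondecreasing on $\R$, evaluated at $y=\lambda x\le\lambda R$, and it covers $x<-R$ a fortiori since there $\rho x^2>0$ only shrinks the left-hand side. The remaining regime $x>R$ is the delicate one. Here I would set $\psi(x)=\ln(1+\lambda x+\rho x^2)-(\lambda x-\rho x^2)$ and check that $\psi(R)=\rho R^2\ge 0$ --- using $1+\lambda R+\rho R^2=e^{\lambda R}$ by the choice of $\rho$ --- and that $\psi'(x)\ge 0$ on $[R,\infty)$. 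After clearing denominators, the latter reduces to the elementary estimate $(\lambda R)^2\le s\,(4+\lambda R+2s)$ with $s=e^{\lambda R}-1-\lambda R$, which holds because $s\ge\tfrac12(\lambda R)^2$. I expect this $x>R$ computation, and making it uniform over all $\lambda,R>0$ with the exact Bennett function, to be the main obstacle in writing the proof cleanly; everything else is bookkeeping.

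Given that $(M_k)$ is a nonnegative supermartingale with $M_0=1$, the maximal inequality for nonnegative supermartingales gives $\P(\sup_k M_k\ge a)\le 1/a$ for every $a>0$. On the event $\{\exists k:Y_k\ge r,\ W_k^R\le v\}$, any such $k$ satisfies $\lambda Y_k-\rho W_k^R\ge\lambda r-\rho v$ (as $\lambda>0$ and $\rho\ge 0$), hence $\sup_k M_k\ge e^{\lambda r-\rho v}$, so
\begin{equation*}
    \P\!\big(\exists k:Y_k\ge r,\ W_k^R\le v\big)\;\le\;\exp\!\big(-(\lambda r-\rho(\lambda)\,v)\big)\,.
\end{equation*}
The proof concludes by optimizing over $\lambda\in(0,3/R)$: the standard expansion $\rho(\lambda)\le\tfrac{\lambda^2/2}{1-\lambda R/3}$ together with the choice $\lambda=r/(v+Rr/3)$ yields $\lambda r-\rho(\lambda)\,v\ge\tfrac{r^2/2}{v+Rr/3}\ge\tfrac{r^2/2}{v+Rr}$, which is exactly the claimed bound. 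Two sanity checks worth recording: \Cref{thm:freedman_original} is the special case $|d_i|\le R$ a.s.\ (then $W_k^R=W_k$ and only the $x\le R$ branch of the pointwise inequality is used), and the seemingly simpler route of truncating $d_i$ at level $R$ and recentering to reduce to \Cref{thm:freedman_original} does not work, because recentering re-injects the uncontrolled quantity $\E[d_i\1_{\{|d_i|>R\}}\mid\cF_{i-1}]$ into the increments.
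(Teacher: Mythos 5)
Your proof is correct. Note first that the paper does not prove this theorem at all --- it is imported from \citet{dzhaparidze2001bernstein} as a black box --- so there is no internal proof to compare against; what you have written is a complete, self-contained reconstruction of the exponential-supermartingale argument behind that cited result. The key components all check out: the pointwise inequality $e^{\lambda x-\rho x^2\1_{\{|x|>R\}}}\le 1+\lambda x+\rho x^2$ is valid (for $x\le R$ it is the monotonicity of $y\mapsto(e^y-1-y)/y^2$, and for $x>R$ your reduction of $\psi'\ge 0$ to $s(4+\lambda R+2s)\ge(\lambda R)^2$ with $s\ge\tfrac12(\lambda R)^2$ is right, noting also that $\rho>\lambda^2/4$ keeps $1+\lambda x+\rho x^2$ strictly positive so the logarithm is defined); the supermartingale step correctly exploits that the predictable part of the increment of $W^R$ factors out of the conditional expectation; and the optimization $\lambda=r/(W+Rr/3)$ with $\rho(\lambda)\le\tfrac{\lambda^2/2}{1-\lambda R/3}$ in fact yields the slightly sharper denominator $W+Rr/3$, which you then weaken to $W+Rr$. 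Your closing observation that truncate-and-recenter fails is also apt. One small caveat: the theorem as printed defines $W_i^R$ with the large-jump correction $d_i^2\1_{\{|d_i|>R\}}$ attached only to the last index $i$ rather than summed over $j\le i$; your proof (like the rest of the paper, e.g.\ the definition of $W_\bA^R$ in the proof sketch, and the original of \citet{dzhaparidze2001bernstein}) uses the summed version $W_k^R-W_{k-1}^R=\E[d_k^2\mid\cF_{k-1}]+d_k^2\1_{\{|d_k|>R\}}$, which is clearly the intended reading --- with the literal non-summed definition the process would not be monotone and the supermartingale bookkeeping would not close. It is worth stating explicitly that you are proving the summed variant.
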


We extend these theorems in \Cref{lemma:freedman_extended} to compare the quadratic variation with the martingale series itself.
This is useful in our proofs to show certain events necessarily implied by empirical risk minimization do not occur with high probability.

\begin{lemma}
    \label{lemma:freedman_extended}
    Let $\gamma, R > 0, \alpha_U \geq \alpha_L > 0$ be scalars and let $\cE$ denote the following event
    \begin{equation*}
        \cE = \set{W_n^R \geq \alpha_L} \cap \set{Y_n \leq \alpha_U}\,,
    \end{equation*}
    where $Y_n$ and $W_n^R$ verify the assumptions of \Cref{thm:freedman}.
    Then, we have the following concentration inequality
    \begin{equation*}
        \P\left(\set{W_n^R \leq \gamma Y_n} \cap \cE\right) \leq \exp\left(-\dfrac{\alpha_L}{2 e \gamma \left(e \gamma + R\right)} + \ln \left(\ln \left(\dfrac{\gamma \alpha_U}{\alpha_L}\right) + 1\right)\right)\,.
    \end{equation*}
  \end{lemma}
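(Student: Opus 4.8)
The plan is to prove \Cref{lemma:freedman_extended} by a geometric peeling argument over the possible magnitude of $W_n^R$, invoking \Cref{thm:freedman} once per scale. First I would note that on $\set{W_n^R \le \gamma Y_n}\cap\cE$ the variable $W_n^R$ is trapped in the window $[\alpha_L, \gamma\alpha_U]$: the lower bound is the definition of $\cE$, and $W_n^R \le \gamma Y_n \le \gamma\alpha_U$ since $Y_n \le \alpha_U$ on $\cE$. (If $\gamma\alpha_U < \alpha_L$ this window is empty, the event has probability zero, and there is nothing to prove, so assume $\gamma\alpha_U \ge \alpha_L$.) I then cover $[\alpha_L,\gamma\alpha_U]$ by the geometric slices $I_k = [e^{k-1}\alpha_L, e^k\alpha_L]$ for $k = 1, \dots, k'$, where $k' = \lceil \ln(\gamma\alpha_U/\alpha_L)\rceil \le \ln(\gamma\alpha_U/\alpha_L) + 1$; by construction $e^{k'}\alpha_L \ge \gamma\alpha_U$, so $\set{W_n^R \le \gamma Y_n}\cap\cE \subseteq \bigcup_{k=1}^{k'}\bigl(\set{W_n^R \in I_k}\cap\set{W_n^R \le \gamma Y_n}\bigr)$.

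On the $k$-th piece I would use $W_n^R \le e^k\alpha_L$ together with $Y_n \ge W_n^R/\gamma \ge e^{k-1}\alpha_L/\gamma$, so the piece is contained in $\set{Y_n \ge e^{k-1}\alpha_L/\gamma}\cap\set{W_n^R \le e^k\alpha_L}$, hence (taking the index $m=n$) in $\set{\exists m \ge 0 : Y_m \ge e^{k-1}\alpha_L/\gamma,\ W_m^R \le e^k\alpha_L}$. Applying \Cref{thm:freedman} with $r = e^{k-1}\alpha_L/\gamma$ and $W = e^k\alpha_L$, the exponent simplifies to $\dfrac{r^2/2}{W + Rr} = \dfrac{e^{k-1}\alpha_L}{2\gamma(e\gamma + R)} \ge \dfrac{\alpha_L}{2e\gamma(e\gamma + R)}$ for every $k \ge 1$, so each piece has probability at most $\exp\bigl(-\alpha_L/(2e\gamma(e\gamma+R))\bigr)$. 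A union bound over the at most $\ln(\gamma\alpha_U/\alpha_L) + 1$ pieces then yields $\P\bigl(\set{W_n^R \le \gamma Y_n}\cap\cE\bigr) \le \bigl(\ln(\gamma\alpha_U/\alpha_L)+1\bigr)\exp\bigl(-\alpha_L/(2e\gamma(e\gamma+R))\bigr)$, which is exactly the claimed bound after rewriting the prefactor as $\exp\bigl(\ln(\ln(\gamma\alpha_U/\alpha_L)+1)\bigr)$.

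I do not expect a genuine obstacle here; the one point that needs care is arranging the stratification so that the ratio $\alpha_U/\alpha_L$ enters only doubly logarithmically. This is what the geometric (rather than arithmetic) slicing buys: it produces only $O(\log(\alpha_U/\alpha_L))$ slices, while \Cref{thm:freedman} gives a bound on each slice that is uniform in $k$ (the worst case being the first slice $k=1$, for which $e^{k-1}=1$), so the ratio never appears in the exponent and only the slice count survives, through the union bound. The remaining care is the trivial handling of the edge case where $[\alpha_L, \gamma\alpha_U]$ is small or empty, noted above.
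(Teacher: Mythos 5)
Your proposal is correct and follows essentially the same route as the paper's proof: a geometric peeling of $W_n^R$ over the window $[\alpha_L,\gamma\alpha_U]$ into $O(\ln(\gamma\alpha_U/\alpha_L))$ slices, one application of Freedman's inequality per slice with $r=e^{k-1}\alpha_L/\gamma$ and $W=e^k\alpha_L$, and a union bound absorbing the slice count into the $\ln\ln$ term. The only (harmless) differences are cosmetic: you bound each slice uniformly by the worst-case $k=1$ term before summing rather than after, and you make the empty-window edge case explicit.
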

  \begin{proof}
    Let $\cG = \set{\alpha_L, e\alpha_L, \dots, e^k\alpha_L}$ where $k$ is the smallest positive integer such that
    \begin{equation*}
        e^k\alpha_L \geq \gamma \alpha_U\,.
    \end{equation*}
    Then, by a union bound,
    \begin{equation*}
        \begin{split}
            \P\left(W_n^R \leq \gamma Y_n \cap \cE\right) &\leq \P\left(\cup_{i=1}^{k} \left(\set{W_n^R \leq e^{i} \alpha_L, \gamma Y_n \geq e^{i-1} \alpha_L} \cap \cE\right)\right) \\
            &\leq \P\left(\cup_{i=1}^{k} \left(\set{W_n^R \leq e^{i} \alpha_L, \gamma Y_n \geq e^{i-1} \alpha_L} \right)\right) \\
            &\leq \sum_{i=1}^k \P\left(\set{W_n^R \leq e^{i} \alpha_L, \gamma Y_n \geq e^{i-1} \alpha_L}\right)\,.
        \end{split}
    \end{equation*}
    By applying \Cref{thm:freedman} with $r=e^{i-1}\alpha_L / \gamma$ and $W=e^i\alpha_L$, we obtain
    \begin{equation*}
        \begin{split}
            \P\left(W_n^R \leq e^{i} \alpha_L, Y_n \geq e^{i-1} \alpha_L / \gamma\right) \leq \exp\left(-\dfrac{e^{i-2} \alpha_L}{2 \gamma \left(e \gamma + R\right)}\right)\,,
        \end{split}
    \end{equation*}
    for each $i=1,\dots,k$.
    The union bound gives
    \begin{equation*}
        \begin{split}
            \sum_{i=1}^k \P\left(\set{W_n^R \leq e^{i} \alpha_L, Y_n \geq e^{i-1} \alpha_L}\right) &\leq \sum_{i=1}^k \exp\left(-\dfrac{e^{i-2}\alpha_L}{2 \gamma \left(e \gamma + R\right)}\right) \\
            &\leq \exp\left(-\dfrac{\alpha_L}{2 e \gamma \left(e \gamma + R\right)} + \ln k\right)\,.            
        \end{split}
    \end{equation*}
    The result follows by noting that
    \begin{equation*}
        k \leq \ln \left(\dfrac{\gamma \alpha_U}{\alpha_L}\right) + 1\,.
    \end{equation*}
  \end{proof}

\subsection{Supremum of the noise}

We need the following lemma to control the supremum of the noise in our proofs.
\begin{lemma}
\label{lemma:supremum}
    Let $X_1, \dots, X_n$ be i.i.d. mean zero and $ \sigma^2$-sub-Gaussian random variables (in the sense provided in \Cref{hyp:subgaussian}).
    Then, there exists a universal constant $c'$ such that for any $t > 0$,
    \begin{equation*}
        \P\left(\sup_{i=1,\dots,n} |X_i| > c' \sigma \sqrt{2\ln 2n} + t\right) \leq \exp\left(-\dfrac{t^2}{2 c'\sigma^2}\right)\,.
    \end{equation*}
\end{lemma}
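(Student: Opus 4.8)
The plan is the textbook route ``moment bound $\Rightarrow$ sub-Gaussian tail $\Rightarrow$ union bound, with a deterministic shift absorbing the $\sqrt{\ln 2n}$ factor''. First I would derive a tail bound for a single variable: the definition of the $\psi_2$-norm in \Cref{hyp:subgaussian} gives directly $\E[|X_i|^k]^{1/k}\le \sqrt{k}\,\|X_i\|_{\psi_2}\le\sqrt{k}\,\sigma$ for every integer $k\ge1$. Markov's inequality applied to $|X_i|^k$ then yields $\P(|X_i|>u)\le(\sqrt{k}\,\sigma/u)^k$ for all $u>0$; taking $k=\lfloor u^2/(e\sigma^2)\rfloor$ (valid whenever $u^2\ge e\sigma^2$, and trivial otherwise since the bound below exceeds $1$) produces a genuine sub-Gaussian tail $\P(|X_i|>u)\le 2\exp\!\big(-u^2/(2c'\sigma^2)\big)$ with an explicit absolute constant, e.g. $c'=e$.

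Next I would union bound over $i=1,\dots,n$, giving $\P(\sup_i|X_i|>u)\le 2n\exp(-u^2/(2c'\sigma^2))$ for every $u\ge0$, and then substitute $u=a+t$ with $a:=c'\sigma\sqrt{2\ln 2n}$, using the super-additivity $(a+t)^2\ge a^2+t^2$ to split the exponential: $\P(\sup_i|X_i|>a+t)\le\big(2n\exp(-a^2/(2c'\sigma^2))\big)\exp(-t^2/(2c'\sigma^2))$. With this choice of $a$ one has $a^2/(2c'\sigma^2)=c'\ln 2n$, hence $2n\exp(-a^2/(2c'\sigma^2))=(2n)^{1-c'}\le1$ because $c'=e>1$, which leaves exactly the claimed bound $\exp(-t^2/(2c'\sigma^2))$.

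There is no real obstacle here — the argument is entirely standard. The only thing requiring (mild) care is to fix a single universal constant $c'$ that simultaneously makes the moment-to-tail conversion of the first step valid and is $\ge1$ so that the $\ln 2n$ term in the second step is fully absorbed into the prefactor; $c'=e$ (or any larger absolute constant) works. An equivalent route would compute the moment generating function from the moment bounds ($\E[e^{\lambda X_i}]\le e^{C\lambda^2\sigma^2}$) and apply a Chernoff bound, which yields cleaner constants but is otherwise identical.
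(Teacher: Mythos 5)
Your proposal is correct and follows essentially the same route as the paper: a single-variable sub-Gaussian tail bound, a union bound over the $2n$ events (you fold the two signs into $|X_i|$ where the paper bounds $X_i$ and $-X_i$ separately), and the substitution $u = c'\sigma\sqrt{2\ln 2n}+t$ with $(a+t)^2\ge a^2+t^2$ to absorb the $2n$ prefactor. The only difference is that you spell out the moment-to-tail conversion that the paper simply invokes as ``the sub-Gaussian property''; this is a welcome but inessential elaboration.
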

\begin{proof}
    By the sub-Gaussian property, we have a universal constant $c'$ such that
    \begin{equation*}
        \P\left(X_i \geq r\right) \leq \exp\left(-\dfrac{r^2}{2c'^2 \sigma^2}\right)\,.
    \end{equation*}
    Then, by the union bound,
    \begin{equation*}
        \P\left(\sup_{i=1,\dots,n} X_i \geq r\right) \leq \cup_{i} \P\left(X_i \geq r\right) \leq n \exp\left(-\dfrac{r^2}{2c'^2 \sigma^2}\right)\,.
    \end{equation*}
    Similarly, we have
    \begin{equation*}
        \P\left(\sup_{i=1,\dots,n} -X_i \geq r\right) \leq \cup_{i} \P\left(-X_i \geq r\right) \leq n \exp\left(-\dfrac{r^2}{2c'^2 \sigma^2}\right)\,.
    \end{equation*}
    The result follows by union bound with $r = c' \sigma \sqrt{2\ln 2n} + t$.
\end{proof}

\begin{corollary}
\label{coro:supremum}
    For any $0 < \delta < e^{-1}$, there exists a universal constant $c'$ such that
    \begin{equation*}
        \sup_{t, n} \|\xi_{t}^{(n)}\|_\infty \leq c' \sigma \sqrt{2} \left(\sqrt{\ln 2dTN} + \sqrt{\ln \dfrac{1}{\delta}}\right)\,,
    \end{equation*}
    with probability $1-\delta$.
\end{corollary}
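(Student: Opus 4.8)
The plan is to obtain the bound from a single application of \Cref{lemma:supremum} once all the scalar entries hidden inside $\sup_{t,n}\|\xi_t^{(n)}\|_\infty$ are collected. Writing $\sup_{t\in[T],n\in[N]}\|\xi_t^{(n)}\|_\infty=\sup_{i\in[d],t\in[T],n\in[N]}|(\xi_t^{(n)})_i|$, the quantity of interest is the maximum of $dTN$ real random variables. First I would check that the family $\{(\xi_t^{(n)})_i\}_{i,t,n}$ is mutually independent: across $n$ since the sequences are i.i.d.; across $t$ since, by the data-generating assumption, $\xi_t$ is independent of $\xi_s$ for $s<t$, so by induction the noises within a sequence are mutually independent; and across $i$ since \Cref{hyp:subgaussian} assumes the coordinates of each $\xi_t$ are independent. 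Each such entry is centered and sub-Gaussian with parameter a universal multiple of $\sigma$, again by \Cref{hyp:subgaussian}. Note that the entries need not be identically distributed, but the proof of \Cref{lemma:supremum} only uses a union bound over a common sub-Gaussian tail estimate, so it applies verbatim to independent, uniformly sub-Gaussian variables.

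Second, I would invoke this (mildly generalized) form of \Cref{lemma:supremum} with sample size $dTN$: there is a universal constant $c'$, which we may take $\geq 1$, such that for every $s>0$,
\[
    \P\Bigl(\sup_{i,t,n}|(\xi_t^{(n)})_i|>c'\sigma\sqrt{2\ln(2dTN)}+s\Bigr)\leq\exp\!\left(-\frac{s^2}{2c'\sigma^2}\right).
\]
Choosing $s$ so that the right-hand side equals $\delta$ gives $s=\sigma\sqrt{2c'\ln(1/\delta)}\leq c'\sigma\sqrt{2}\sqrt{\ln(1/\delta)}$, where the inequality uses $c'\geq1$. Substituting this $s$, using $c'\sigma\sqrt{2\ln(2dTN)}=c'\sigma\sqrt2\,\sqrt{\ln(2dTN)}$, and recalling $\sup_{t,n}\|\xi_t^{(n)}\|_\infty=\sup_{i,t,n}|(\xi_t^{(n)})_i|$ yields the stated inequality with probability at least $1-\delta$. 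The hypothesis $\delta<e^{-1}$ is not needed for this step; it is only an overall convention ensuring $\ln(1/\delta)\geq1$.

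I do not expect a genuine obstacle here: the statement is essentially a union bound repackaged through \Cref{lemma:supremum}. The only points requiring a little care are (i) the constant bookkeeping that merges the two additive error terms $c'\sigma\sqrt{2\ln(2dTN)}$ and $s$ into the single prefactor $c'\sigma\sqrt2$, which is what forces $c'\geq1$ and the absorption of the sub-Gaussianity constant $c$ into $c'$; and (ii) the remark that \Cref{lemma:supremum} can be applied to the $dTN$ noise entries even though they are not assumed identically distributed, only independent and uniformly sub-Gaussian.
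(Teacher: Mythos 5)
Your proof is correct and follows essentially the same route as the paper: flatten the supremum over $dTN$ sub-Gaussian entries, invoke \Cref{lemma:supremum}, and pick the additive term $t$ (your $s$) to achieve confidence $\delta$. Your extra care about the entries being independent but not necessarily identically distributed is a fair (and slightly more precise) reading of \Cref{hyp:subgaussian} than the paper's own one-line justification, but it does not change the argument.
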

\begin{proof}
    Each component $\left(\xi_{t}^{(n)}\right)_i$ are i.i.d. and sub-Gaussian with parameter $\sigma$.
    By \Cref{lemma:supremum},
    \begin{equation*}
        \P\left(\sup_{t, n} \|\xi_{t}^{(n)}\|_\infty > c' \sigma \sqrt{2\ln 2dTN} + t\right) \leq \exp\left(-\dfrac{t^2}{2 c'^2 \sigma^2}\right)\,,
    \end{equation*}
    for any $t > 0$.
    Select $t = c' \sigma \sqrt{2 \ln \frac{1}{\delta}}$ to obtain the desired confidence level of $\delta$.
\end{proof}

\subsection{Covering numbers}
We use the following lemma by \citep{candes2011tight} to control the covering numbers of the set of matrices:
\begin{lemma}{(Covering number for low-rank matrices)}\label{lemma:covering}
    Let $\cM_{r}^{n_1 \times n_2}$ denote the set of low-rank matrices of rank $r$ and Frobenius norm bounded by $1$:
    \begin{equation*}
        \cM_r^{n_1 \times n_2} = \set{M \in \R^{n_1 \times n_2}: \|M\|_F = 1, \rank(M) \leq r}\,.
    \end{equation*}
    Then, there exists an $\epsilon$-net $\cS$ of $\cM$ for Frobenius norm such that
    \begin{equation*}
        |\cS| \leq \left(\dfrac{9}{\epsilon}\right)^{(n_1 + n_2 + 1) r}\,.
    \end{equation*}
\end{lemma}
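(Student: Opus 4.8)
The plan is the standard SVD-factorization plus volumetric covering argument (as in \citet{candes2011tight}). First I would observe that any $M$ with $\rank(M)\le r$ and $\NRM{M}_F\le 1$ admits an SVD $M = U\Sigma V^\top$ in which $U\in\R^{n_1\times r}$ and $V\in\R^{n_2\times r}$ have orthonormal columns (so $\|U\|_\op\le 1$ and $\|V\|_\op\le 1$) and $\Sigma\in\R^{r\times r}$ is diagonal with $\NRM{\Sigma}_F=\NRM{M}_F\le 1$. The idea is then to cover each of the three factors separately at scale $\epsilon/3$ and take products of the nets.

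Next I would record the elementary covering estimates for the three factor sets. The operator-norm unit ball $\mathcal{U}_{n_1,r}=\set{U\in\R^{n_1\times r}:\|U\|_\op\le 1}$ is a symmetric convex body in the $n_1 r$-dimensional space of matrices, so the usual volume-ratio bound gives an $(\epsilon/3)$-net $\cS_U\subseteq\mathcal{U}_{n_1,r}$ in operator norm with $|\cS_U|\le (1+6/\epsilon)^{n_1 r}\le (9/\epsilon)^{n_1 r}$; symmetrically one gets $\cS_V\subseteq\mathcal{U}_{n_2,r}$ with $|\cS_V|\le (9/\epsilon)^{n_2 r}$. The diagonal $r\times r$ matrices of Frobenius norm at most $1$ form a copy of the Euclidean unit ball in $\R^r$, hence admit an $(\epsilon/3)$-net $\cS_\Sigma$ with $|\cS_\Sigma|\le (9/\epsilon)^r$. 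I then set $\cS=\set{\bar U\bar\Sigma\bar V^\top : \bar U\in\cS_U,\ \bar\Sigma\in\cS_\Sigma,\ \bar V\in\cS_V}$, which has cardinality at most $(9/\epsilon)^{(n_1+n_2+1)r}$.

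The final step is to verify that $\cS$ is indeed an $\epsilon$-net of $\cM_r^{n_1\times n_2}$. Given $M=U\Sigma V^\top$ as above, I would pick $\bar U,\bar\Sigma,\bar V$ in the respective nets with $\|U-\bar U\|_\op$, $\NRM{\Sigma-\bar\Sigma}_F$, $\|V-\bar V\|_\op$ all at most $\epsilon/3$, and telescope:
\begin{equation*}
    M-\bar U\bar\Sigma\bar V^\top = (U-\bar U)\Sigma V^\top + \bar U(\Sigma-\bar\Sigma)V^\top + \bar U\bar\Sigma(V-\bar V)^\top\,.
\end{equation*}
Using submultiplicativity in the form $\NRM{ABC}_F\le\|A\|_\op\NRM{B}_F\|C\|_\op$ together with the bounds $\|U\|_\op,\|\bar U\|_\op,\|V\|_\op,\|\bar V\|_\op\le 1$ and $\NRM{\Sigma}_F,\NRM{\bar\Sigma}_F\le 1$, each of the three summands has Frobenius norm at most $\epsilon/3$, so $\NRM{M-\bar U\bar\Sigma\bar V^\top}_F\le\epsilon$, proving the claim.

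I expect no genuine obstacle here; the only mildly delicate point is the covering-number bound for the operator-norm ball of \emph{rectangular} matrices, where one must invoke the volume argument directly (rather than passing through the Frobenius ball of radius $\sqrt r$, which would cost an extra $\sqrt r$ inside the base of the net and spoil the clean constant $9$). Everything else — the SVD factorization, the net for the Euclidean ball, and the telescoping estimate — is routine.
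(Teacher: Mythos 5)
Your argument is correct and is essentially the proof of Lemma~3.1 in \citet{candes2011tight}, which the paper cites for this statement without reproducing a proof: SVD factorization $M=U\Sigma V^\top$, separate $(\epsilon/3)$-nets for the three factors of sizes $(9/\epsilon)^{n_1 r}$, $(9/\epsilon)^{r}$, $(9/\epsilon)^{n_2 r}$, and the telescoping bound via $\NRM{ABC}_F\le\|A\|_\op\NRM{B}_F\|C\|_\op$. The only cosmetic difference is that Cand\`es and Plan cover the orthonormal-column factors using the maximum-column-$\ell_2$-norm ball rather than the operator-norm ball; both are $n_i r$-dimensional unit balls containing the Stiefel matrices, so the volume argument gives the same $(1+6/\epsilon)^{n_i r}\le(9/\epsilon)^{n_i r}$ count and the same final estimate.
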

\begin{corollary}

\label{corollary:covering}
    Let $\cM_{r,p'}(D)$ denote the following set of matrices with bounded operator norm:
    \begin{equation*}
        \cM_{r,p'}(D) = \set{\bA \in \R^{d \times pd} \mid \|\bA\|_\op \leq D, \ \rank(A_{i}) \leq r, \  A_{p'+1} = \cdots = A_{p} = 0}\,.
    \end{equation*}
    Then, there exists an $\epsilon$-net $\cS$ of $\cM_{r,p'}(D)$ for operator norm such that
    \begin{equation*}
        |\cS| \leq \exp\left(9p'dr \ln \dfrac{Dp'}{\epsilon}\right)\,.
    \end{equation*}
\end{corollary}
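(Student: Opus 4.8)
The plan is to build the net blockwise. A matrix $\bA\in\cM_{r,p'}(D)$ is determined by its first $p'$ blocks $A_1,\dots,A_{p'}\in\R^{d\times d}$, the remaining $p-p'$ blocks being zero; moreover each block satisfies $\rank(A_i)\le r$ and, since the $i$-th block acts on vectors supported on the $i$-th coordinate block of $\R^{pd}$, $\|A_i\|_\op\le\|\bA\|_\op\le D$, hence $\|A_i\|_F\le\sqrt r\,\|A_i\|_\op\le\sqrt r\,D$ because $A_i$ has rank at most $r$. So it suffices to cover, for each $i$, the set $\cB\coloneqq\{A\in\R^{d\times d}:\rank(A)\le r,\ \|A\|_F\le\sqrt r D\}$ in Frobenius norm and then take a product of the per-block nets.

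First I would apply \Cref{lemma:covering} to the rescaled set $\tfrac{1}{\sqrt r D}\cB$, which lies in the unit-Frobenius ball of rank-$\le r$ matrices in $\R^{d\times d}$; here $n_1=n_2=d$, so $(n_1+n_2+1)r=(2d+1)r$. The lemma is stated for the unit sphere, but extending it to the ball is routine — discretize the radius into $\cO(1/\eta)$ levels and multiply by the sphere net — and only inflates the base of the exponent by an absolute constant. This gives, for any $\eta>0$, a Frobenius $\eta$-net $\cS_i$ of $\cB$ with $|\cS_i|\le(C\sqrt r D/\eta)^{(2d+1)r}$ for an absolute constant $C$.

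Next I would set $\cS\coloneqq\{(\tilde A_1,\dots,\tilde A_{p'},0,\dots,0):\tilde A_i\in\cS_i\ \text{for all}\ i\in[p']\}$. Given $\bA\in\cM_{r,p'}(D)$, pick $\tilde A_i\in\cS_i$ with $\|A_i-\tilde A_i\|_F\le\eta$ and write $\tbA=(\tilde A_1,\dots,\tilde A_{p'},0,\dots,0)$; then
\begin{equation*}
    \|\bA-\tbA\|_\op\ \le\ \|\bA-\tbA\|_F\ =\ \Big(\sum_{i=1}^{p'}\|A_i-\tilde A_i\|_F^2\Big)^{1/2}\ \le\ \sqrt{p'}\,\eta\,,
\end{equation*}
since all blocks of $\bA-\tbA$ beyond the $p'$-th vanish. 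Taking $\eta=\epsilon/\sqrt{p'}$ makes $\cS$ an operator-norm $\epsilon$-net of $\cM_{r,p'}(D)$, with $|\cS|=\prod_{i=1}^{p'}|\cS_i|\le(C\sqrt{rp'}\,D/\epsilon)^{(2d+1)rp'}$. Finally, taking logarithms and using $(2d+1)r\le 3dr$ together with $r\le d$, $p'\ge1$, $D\ge1$ to absorb $C$ and the $\sqrt{rp'}$ prefactor into $\ln(Dp'/\epsilon)$ (in the small-$\epsilon$ regime relevant to the way this corollary is used) yields $|\cS|\le\exp(9p'dr\ln(Dp'/\epsilon))$.

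The argument is mostly bookkeeping; the two places needing a little care are the sphere-to-ball extension of \Cref{lemma:covering} and checking that the accumulated absolute constants and the $\sqrt{rp'}$ factor collapse into the clean exponent. If one prefers not to rely on $\epsilon$ being small, the $\sqrt r$ can be removed by covering the three SVD factors of each block ($d\times r$ matrices with orthonormal columns, netted in operator norm, and the $r$ singular values in $[0,D]$, netted in $\ell_\infty$) directly — essentially how \Cref{lemma:covering} itself is proved — at the cost of a slightly longer argument. Note also that using Frobenius (not operator) nets for the individual blocks is what makes the error aggregation over blocks, and hence the clean dependence on $p'$, transparent, since $\|\cdot\|_\op\le\|\cdot\|_F$ upgrades the aggregated bound to operator norm for free.
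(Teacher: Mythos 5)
Your proposal is correct and follows essentially the same route as the paper's own proof: cover each of the first $p'$ blocks by a Frobenius-norm net of rank-$\le r$ matrices via \Cref{lemma:covering}, take the product net, and upgrade to operator norm using $\|\cdot\|_\op\le\|\cdot\|_F$. If anything you are more careful than the paper on two points it glosses over --- the per-block Frobenius bound is $\sqrt{r}\,D$ rather than $D$, and the sphere-to-ball extension of \Cref{lemma:covering} --- at the cost of a $\sqrt{rp'}$ prefactor that, as you note, must be absorbed into the logarithm for small $\epsilon$ (the paper's own constant accounting is no tighter).
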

\begin{proof}
    Since for any matrix $M$, $\|M\|_\op \leq \|M\|_F$, it is sufficient to give a covering number for the Frobenius norm.
    Let $\cM_{r}^{d \times d}(D)$ be the set of low-rank matrices of rank $r$ and Frobenius norm bounded by $D$:
    \begin{equation*}
        \cM_{r}^{d \times d}(D) = \set{M \in \R^{d \times d} \mid \|M\|_F \leq D, \ \rank(M_{i}) \leq r}\,.
    \end{equation*}
    By \Cref{lemma:covering}, there exists an $\dfrac{\epsilon}{p'}$-net $\cS$ of $\cM_{r}^{d \times d}(D)$ for Frobenius norm such that
    \begin{equation*}
        |\cS| \leq \left(\dfrac{9 D p'}{\epsilon}\right)^{(2d + 1) r} \leq \exp\left(9 dr \ln \dfrac{Dp'}{\epsilon}\right)\,,
    \end{equation*}
    as any $\dfrac{\epsilon}{Dp'}$-net of $\cM_{r}^{d \times d}(1)$ gives an $\dfrac{\epsilon}{p'}$-net of $\cM_{r}^{d \times d}(D)$.

    Observe that the set $\cM_{r,p'}(D)$ is a subset of
    \begin{equation*}
        \begin{split}
            \cU &= \left(\cM_{r}^{d\times d}(D)\right)^{p'} \times \set{0_{d \times d}}^{p-p'}\,.            
        \end{split}
    \end{equation*}
    Then, $(\cS)^{p'}$ gives an $\epsilon$-net of $\cU$ and hence of $\cM_{r,p'}(D)$.
\end{proof}

\subsection{Proof of \Cref{prop:bounded_op_norm}}
\label{sec:proof_bounded_op_norm}
\begin{proof}
    Using \Cref{lemma:A_to_M}, we have $\NRM{M_{\bA^\star}}_\op \leq \sqrt{p}\NRM{\bA^\star}_\op$, directly leading to \textbf{(ii)}.
    
    For \textbf{(i)}, we have
    \begin{equation*}
        \NRM{M_{\bA^\star}}_\op \leq \sum_{i=1}^p \NRM{M_{\bA^{\star,(i)}}}_\op\,, \quad \text{where} \quad \bA^{\star, (i)} = \left(\underbrace{0_d,0_d,\ldots,0_d}_{i-1 \text{ times}},\bA^{\star, i}, 0_d, \ldots, 0_d\right) \,.
    \end{equation*}
    Then, it is easy to see that
    \begin{equation*}
        \NRM{M_{\bA^{\star,(i)}}}_\op \leq \|\bA^\star_i\|_\op.
    \end{equation*}
\end{proof}

\section{Proof of \Cref{thm:full_rank,thm:low_rank,thm:misspecification}}
\label{app:main_thm_proof}

Before proving the main theorems, we recall certain definitions from the main body of the paper:

\begin{definition}
    \label{def:matrix_notation}
    For any $\bA \in \R^{d \times pd}$, let $\Delta_{\bA} = \Delta_{\bA, p'}$ be defined as follows
    \begin{equation*}
        \begin{split}
            \Delta_{\bA, i}&=(M_{\bA_i}-M_{\bA^\star_i})\,, \\
            &\text{ where } \enspace \bA_i = \left(A_1, \dots, A_i, 0_{d}, \dots, 0_{d}\right), \bA^\star_i = \left(A^\star_1, \dots, A^\star_i, 0_{d}, \dots, 0_{d}\right)\,.
        \end{split}
    \end{equation*}    
    Let $\xi^{(i)} \in \R^{Td}$ be the whole noise concatenated in time, i.e.,
    \begin{equation*}
        \xi^{(i)} = \left(\xi_1^{(i)}, \dots, \xi_T^{(i)}\right)\,,    
    \end{equation*}
    and let $E \in \R^{Td \times N}$ be the matrix that collects the noise for all sequences, i.e.,
    \begin{equation*}
        E = \left(\xi^{(1)}, \dots, \xi^{(n)}\right)\,.
    \end{equation*}
\end{definition}

\begin{proposition}
    With the definitions of \Cref{def:matrix_notation}, we have the following properties:
    \begin{equation*}
        \begin{split}
            \sum_{n,t}\langle(\bA_i-\bA_i^\star)X_t^{(n)}, \xi_t^{(n)}\rangle &= \Tr(E^\top \Delta_{\bA, i} L_\star E) \,, \\
            \sum_{n,t}\NRM{(\bA_i-\bA_i^\star)X_t^{(n)}}^2 &= \NRM{(M_{\bA_i}-M_{\bA^\star_i})L_\star E}_F^2=\NRM{\Delta_{\bA, i} L_\star E}_F^2\,. \\
        \end{split}
    \end{equation*}
\end{proposition}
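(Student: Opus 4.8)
The plan is to deduce both displayed equalities from the single block-wise identity
\[
\bigl(\bA_i-\bA_i^\star\bigr)X_t^{(n)} \;=\; \bigl(\Delta_{\bA,i}\,L_\star\,\xi^{(n)}\bigr)_t,
\]
where $(\,\cdot\,)_t$ extracts the $t$-th $d$-dimensional block of a vector in $\R^{Td}$, and then to collapse the sums over $t$ (and $n$) into quantities built from the full-trajectory operators. To establish this identity I would first recall, from the two defining relations stated at the end of \Cref{sec:data_generation}, that writing $x^{(n)}=(x_1^{(n)},\dots,x_T^{(n)})^\top$ and taking $\xi^{(n)}$ as in \Cref{def:matrix_notation} one has $x^{(n)}=L_\star\,\xi^{(n)}$, and that for $\bA_i=(A_1,\dots,A_i,0_d,\dots,0_d)$ the $t$-th block of $M_{\bA_i}x^{(n)}$ equals $\sum_{k=1}^i A_k x_{t-k}^{(n)}=\bA_i X_t^{(n)}$; the convention $x_s=0$ for $s\le 0$, together with the support pattern of $M_{\bA_i}$ from \Cref{def:M_A}, makes the two expressions agree for every $t\in[T]$, including the small indices. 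Subtracting the analogous relation for $\bA_i^\star$ and using $\Delta_{\bA,i}=M_{\bA_i}-M_{\bA_i^\star}$ then yields the claimed block identity.

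For the first equality, I would sum the block-wise inner products over $t$, which collapses them into the full $\R^{Td}$ inner product $\sum_t\lr{(\Delta_{\bA,i}L_\star\xi^{(n)})_t,\xi_t^{(n)}}=\xi^{(n)\top}\Delta_{\bA,i}L_\star\,\xi^{(n)}$; rewriting this scalar as a trace and cycling gives $\Tr\bigl(\Delta_{\bA,i}L_\star\,\xi^{(n)}\xi^{(n)\top}\bigr)$, and summing over $n$ using $EE^\top=\sum_n\xi^{(n)}\xi^{(n)\top}$ followed by one more cyclic permutation produces $\Tr(E^\top\Delta_{\bA,i}L_\star E)$. For the second equality, I would sum the block-wise squared norms over $t$ to get $\sum_n\NRM{\Delta_{\bA,i}L_\star\xi^{(n)}}^2$; since $\Delta_{\bA,i}L_\star\xi^{(n)}$ is exactly the $n$-th column of $\Delta_{\bA,i}L_\star E$, this sum of squared column norms is $\NRM{\Delta_{\bA,i}L_\star E}_F^2$, and the remaining equality is just the definition $\Delta_{\bA,i}=M_{\bA_i}-M_{\bA_i^\star}$.

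There is no substantive obstacle: the statement is a bookkeeping identity that merely repackages the defining relations of $M_{\bA}$, $L_\star$, and $E$. The one point that needs care is aligning the block-index conventions --- checking that the zero-padding in $\bA_i$, the convention $x_s=0$ for $s\le 0$, and the support patterns in \Cref{def:M_A} and \Cref{def:L_star} are mutually consistent, so that the block identity holds for \emph{every} row $t\in[T]$ with no stray boundary terms --- after which the trace-cyclicity and column-norm identifications are immediate.
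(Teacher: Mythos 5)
Your proof is correct and matches the paper's treatment: the paper states this proposition without proof as an immediate consequence of \Cref{def:M_A}, \Cref{def:L_star}, and \Cref{def:matrix_notation}, and your argument is precisely the straightforward unpacking of those definitions via the block identity $(\bA_i-\bA_i^\star)X_t^{(n)}=(\Delta_{\bA,i}L_\star\xi^{(n)})_t$ followed by trace cyclicity and the column-norm characterization of the Frobenius norm. Your remark about verifying the boundary conventions (zero-padding and the support pattern of $M_{\bA}$) is the only point requiring care, and you have identified it correctly.
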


\begin{definition}\label{def:sets}
    Let $\mathcal{A}_{r,p}(D)$ and $\mathcal{S}_{r,p}(C, D)$ be the search and solution set for constants $C, D \geq 1$:
    \begin{equation*}
    \begin{split}
        \mathcal{A}_{r,p'}(D) &= \set{\bA \in \R^{d \times pd} \ \Big\vert\ \|\Delta_{\bA}\|_\op \leq D, \ \rank(A_{i}) \leq r, \  A_{p'+1} = \cdots = A_{p} = 0}\,, \\
        \mathcal{S}_{r,p'}(C, D) &= \set{\bA \in \mathcal{A}(D) \ \big\vert\ \|\bA - \bA^\star_{p'}\|_F^2 \leq C D^2 \eta^2 \tau \dfrac{p'dr}{N(T-p')}}\,,
    \end{split}
    \end{equation*}
    where $\eta$ is a constant that captures an additional factor for the misspecified setting,
    \begin{equation*}
        \eta = \left\{
         \begin{array}{cl}
          1 & \text{if }p'=p\,, \\
          \max\set{1, 1 + \NRM{\left(M_{\bA^\star} - M_{\bA^\star_{p'}}\right)L_\star}_\op} & \text{if }p' < p\,,
         \end{array}
        \right.
    \end{equation*}
    and $\tau$ is the following logarithmic term:
    \begin{equation*}
        \tau = \left(1 + \ln \dfrac{p'^2dNT^2}{T-p'}\right)^3 \left(1 + \ln \kappa\right) \,.
    \end{equation*}
    Let $\mathcal{G}_{r,p'}(C, D)$ be defined as follows,
    \begin{equation*}
        \mathcal{G}_{r,p'}(C, D) = \set{\bA \in \mathcal{A}_{r,p}(D) \ \Big\vert\ \dfrac{\|\Delta_{\bA}\|_F^2}{\|\Delta_{\bA}\|_\op^2} \leq C \eta^2 \tau \frac{p'dr}{N}}\,.
    \end{equation*}
    We set $\mathcal{A}_{r,p'} = \mathcal{A}_{r,p'}(\infty)$ and $\mathcal{G}(C)_{r,p'} = \mathcal{G}(C, \infty)$.
    Lastly, we drop the subscript $r,p'$ when it is clear from the context.
\end{definition}

\subsection{Main results}\label{app:thm_statement}

In this subsection, we state \Cref{thm:main_thm} that generalizes the statements in \Cref{thm:full_rank,thm:low_rank,thm:misspecification}.
We give a proof that reduces \Cref{thm:main_thm} to a uniform concentration result in \Cref{thm:main_equiv}.
The proof of \Cref{thm:main_equiv} is deferred to \Cref{app:proof_main}.
Next, in \Cref{remark:eta}, we discuss the factor $\eta$ that appears in our results and the conditions under which our misspecification bounds are tight.
Lastly, \Cref{coro:OLS} removes the constraint on the diameter of the search set which allows us to treat the OLS estimator as a special case of the main theorem. 

\begin{theorem}\label{thm:main_thm}
    Let \Cref{hyp:subgaussian,hyp:bounded_op_norm} hold.
    Furthermore, let \Cref{hyp:low_rank} for $r < d$ and \Cref{hyp:misspecified_op_norm} for $p' < p$ hold.
    Consider the following constrained empirical risk minimizer:
    \begin{equation*}
        \hbA = \argmin_{\bA \in \cA(D)} \cL(\bA)\,.
    \end{equation*}
    Then, for any $0 < \delta < e^{-1}$, there exists a constant $C(\delta)$ such that
    \begin{equation*}
        \mathbb{P}\left(\hbA \in \mathcal{S}(C(\delta), D)\right) \geq 1 - \delta\,.
    \end{equation*}
\end{theorem}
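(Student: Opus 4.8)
The plan is to show that the basic inequality $\cL(\hbA)\leq\cL(\bA^\star_{p'})$ — valid because $\bA^\star_{p'}$ lies in the constraint set $\cA_{r,p'}(D)$ (indeed $\Delta_{\bA^\star_{p'}}=0$, and $\bA^\star$ is low-rank) — forces $\hbA$ into the good set $\cG_{r,p'}(C(\delta),D)$, after which $\cG_{r,p'}(C(\delta),D)\subseteq\cS_{r,p'}(C(\delta),D)$ closes the argument. First I would expand the square loss: using $x_t^{(n)}=\sum_k A_k^\star x_{t-k}^{(n)}+\xi_t^{(n)}$ and $L_\star=(I_{Td}-M_{\bA^\star})^{-1}$, the prediction residual of $\bA^\star_{p'}$ is an ``effective noise'' $\tilde E$ with $\tilde E=E$ when $p'=p$ and, in general, differing from $E$ by a term whose operator norm is controlled through $\eta$ by \Cref{hyp:misspecified_op_norm}; subtracting this residual turns $\cL(\hbA)\leq\cL(\bA^\star_{p'})$ into an inequality of the form $\NRM{\Delta_{\hbA}L_\star E}_F^2\leq 2\,\Tr(\tilde{E}^\top\Delta_{\hbA}L_\star E)$, matching \Cref{eq:main_condition} up to the $\eta$ factor. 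By the law of total probability it then suffices to bound the probability that \emph{some} $\bA\in\cA_{r,p'}(D)\setminus\cG_{r,p'}(C,D)$ satisfies this inequality — this is exactly \Cref{thm:main_equiv} — and, using $\NRM{\Delta_\bA}_F^2\geq(T-p')\NRM{\bA-\bA^\star_{p'}}_F^2$ together with $\NRM{\Delta_\bA}_\op\leq D$ (which gives $\cG\subseteq\cS$), \Cref{thm:main_thm} reduces to \Cref{thm:main_equiv}.

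To prove that reduced statement, fix $\bA\in\cA_{r,p'}(D)\setminus\cG_{r,p'}(C,D)$ and form the scalar martingale with differences $d_{t,i}^{(n)}=((\bA-\bA^\star)X_t^{(n)})_i(\xi_t^{(n)})_i$, ordered in $i$, then $t$, then $n$; its total sum is $Y_\bA=\Tr(E^\top L_\star^\top\Delta_\bA^\top E)$ and its predictable quadratic variation is $W_\bA=\sigma^2\NRM{\Delta_\bA L_\star E}_F^2$, so the event above reads $W_\bA\leq 2\sigma^2 Y_\bA$. I would control this through \Cref{lemma:freedman_extended}, whose exponent $-\alpha_L/(2e\gamma(e\gamma+R))+\ln\ln(\gamma\alpha_U/\alpha_L)$ depends on a uniform upper bound $\alpha_U$ for $Y_\bA$ and lower bound $\alpha_L$ for $W_\bA$ only through the harmless $\ln\ln$ term. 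The truncation level $R$ and the gap between the truncated variation $W_\bA^R$ and $W_\bA$ are absorbed by \Cref{coro:supremum}: with high probability $\sup_{t,n}\NRM{\xi_t^{(n)}}_\infty^2\lesssim c^2\sigma^2\ln(dNT/\delta)$, which upgrades any lower bound on $W_\bA$ to one on $W_\bA^R$ up to a factor $\gamma\sim\ln(dNT/\delta)$.

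For the uniform bounds on $Y_\bA$ and $W_\bA$, I would apply Hanson--Wright (\Cref{thm:hanson-wright}) to $\mathrm{vec}(E)$ against the quadratic forms built from $L_\star$ and $\Delta_\bA$, using the block identity $\NRM{M_\bA}_\op\leq\sqrt p\,\NRM{\bA}_\op$ (\Cref{lemma:A_to_M}) and $\sigma_{\min}(L_\star)\geq 1/(D+1)$; this yields, on a single $\bA$, $Y_\bA\lesssim c^2\sigma^2\NRM{L_\star}_\op\sqrt{p'drN}\,\NRM{\Delta_\bA}_F$ and $W_\bA\gtrsim\sigma^4\sigma_{\min}(L_\star)^2 N\NRM{\Delta_\bA}_F^2$, so $\alpha_U/\alpha_L\lesssim\kappa^2\cdot\mathrm{poly}$ and only $\ln\kappa$ is paid. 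I would then union-bound over an $\epsilon$-net of $\cA_{r,p'}(D)\setminus\cG_{r,p'}(C,D)$ given by \Cref{corollary:covering}, for which $\ln|\cN_\epsilon|\lesssim p'dr\ln(Dp'/\epsilon)$, and transfer pointwise estimates to the whole set by Lipschitzness of $\bA\mapsto M_\bA$, taking $\epsilon$ polynomially small in $p,d,N,T,\kappa$ (hence only logarithmic after logs, and scaling like $1/(1+c^2\ln(1/\delta))$). The union bound closes precisely when the per-point Freedman exponent, of order $\sigma_{\min}(L_\star)^2 N\NRM{\Delta_\bA}_F^2/(\ln dNT)^2$, dominates $\ln|\cN_\epsilon|+\ln(1/\delta)\asymp p'dr\cdot\mathrm{polylog}$; dividing through by $\NRM{\Delta_\bA}_\op^2\leq D^2$, this is exactly $\NRM{\Delta_\bA}_F^2/\NRM{\Delta_\bA}_\op^2\geq C\eta^2\tau\,p'dr/N$, i.e. the defining inequality of the complement of $\cG_{r,p'}(C,D)$, with $\tau$ collecting all $\ln(dNT)$ and $\ln\kappa$ factors and $\eta$ the misspecification overhead. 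Choosing $C=C(\delta)$ large enough in terms of $c$ and $\ln(1/\delta)$ then gives $\P(\hbA\in\cG_{r,p'}(C(\delta),D))\geq 1-\delta$, hence $\hbA\in\cS_{r,p'}(C(\delta),D)$.

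I expect the main obstacle to be this uniform concentration step: transferring the pointwise Freedman bound to all of $\cA_{r,p'}(D)\setminus\cG_{r,p'}(C,D)$ without paying polynomially in $\kappa$, which for marginally stable systems can grow like $T^k$ and for explosive ones like $\rho^T$. Routing the whole argument through the predictable variation $W_\bA$, rather than bounding $\NRM{\Delta_\bA L_\star E}$ directly, is what makes this possible — $W_\bA$ furnishes a uniform \emph{lower} bound large enough to absorb the log-covering number, while $\kappa$ enters \Cref{lemma:freedman_extended} only through the ratio $\alpha_U/\alpha_L$ inside a $\ln\ln$. The remaining ingredients — selecting the truncation and net scales, handling the Toeplitz block structure in the Hanson--Wright step, and bounding the bias term $\eta$ via \Cref{hyp:misspecified_op_norm} — are more routine but must be executed carefully to keep every $T$-dependence logarithmic.
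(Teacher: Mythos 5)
Your proposal is correct and follows essentially the same route as the paper: the basic inequality $\cL(\hbA)\leq\cL(\bA^\star_{p'})$ yields the optimality condition with the $\eta$ overhead (Proposition \ref{prop:minimizer}, Corollary \ref{coro:minimizer}), the inclusion $\cG(C,D)\subseteq\cS(C,D)$ plus a conditioning argument reduces everything to the uniform statement of \Cref{thm:main_equiv}, and that statement is proved exactly as you describe — via the martingale pair $(Y_\bA,W_\bA)$, the extended Freedman inequality with $\alpha_U/\alpha_L$ entering only through a $\ln\ln$, Hanson--Wright pointwise isometries, the truncation controlled by \Cref{coro:supremum}, and a union bound over the $\epsilon$-net of \Cref{corollary:covering}. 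No meaningful deviation from the paper's argument.
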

\begin{proof}
    Let $\cE_{\bA}$ be the following event 
    \begin{equation*}
        \cE_{\bA} = \{\|\Delta_\bA L_\star E\|_F^2 \leq 2 \eta \Tr(E^\top \Delta_\bA L_\star E)\}\,.
    \end{equation*}
    By \Cref{coro:A_to_delta}, $\mathcal{G}(C, D) \subset \mathcal{S}(C, D)$ and thus, for any random choice of $\bA$,
    \begin{equation*}
    \begin{split}
        \mathbb{P}\left(\set{\bA \in \mathcal{S}(C, D)}\right) &\geq \mathbb{P}\left(\set{\bA \in \mathcal{G}(C, D)}\right) = 1 - \mathbb{P}\left(\set{\bA \in \mathcal{A}(D) \setminus \mathcal{G}(C, D)}\right)\,. \\
        \end{split}
    \end{equation*}
    For the choice of $\hbA$, $\mathbb{P}\left(\cE_{\hbA}\right) = 1$ by \Cref{coro:minimizer} and
    \begin{equation*}
        \mathbb{P}\left(\set{\hbA \in \mathcal{S}(C, D)}\right) \geq 1 - \mathbb{P}\left(\set{\hbA \in \mathcal{A}(D) \setminus \mathcal{G}(C, D)} \mid \cE_\hbA\right)\,.
    \end{equation*}
    By Bayes rule, we have
    \begin{equation*}
        \begin{split}
            \mathbb{P}\left(\set{\hbA \in \mathcal{S}(C, D)}\right) &\geq 1 - \mathbb{P}\left(\cE_\hbA \mid \set{\hbA \in \mathcal{A}(D) \setminus \mathcal{G}(C, D)}\right) \mathbb{P}\left(\set{\hbA \in \mathcal{A}(D) \setminus \mathcal{G}(C, D)}\right) \\
            &\geq 1 - \mathbb{P}\left(\cE_\hbA \mid \set{\hbA \in \mathcal{A}(D) \setminus \mathcal{G}(C, D)}\right)\,.
        \end{split}
    \end{equation*}
    Then, the proof is complete by applying \Cref{thm:main_equiv} to the right-hand side.
\end{proof}

\begin{theorem}\label{thm:main_equiv}
    Let all the assumptions of \Cref{thm:main_thm} hold.
    Then, for any $0 < \delta < e^{-1}$, there exists a constant $C(\delta)$ such that
    \begin{equation}
        \label{eq:main_equiv}
        \P\left(\exists \bA \in \cA(D) \setminus \cG(C(\delta), D): \|\Delta_\bA L_\star E\|_F^2 \leq 2 \eta \Tr(E^\top \Delta_\bA L_\star E)\right) \leq \delta\,.
    \end{equation}
\end{theorem}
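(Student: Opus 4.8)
The plan is to show that, for a suitable $C(\delta)$, with probability at least $1-\delta$ no $\bA$ in the ``bad region'' $\cA(D)\setminus\cG(C(\delta),D)$ satisfies $\cE_{\bA}\coloneqq\{\NRM{\Delta_{\bA}L_\star E}_F^2\le 2\eta\,\Tr(E^\top\Delta_{\bA}L_\star E)\}$; since $\cE_{\hbA}$ always holds for the constrained minimizer, this forces $\hbA\in\cG(C(\delta),D)$, which is what is needed in \Cref{thm:main_thm}. The key point is that $\cE_{\bA}$ is a one-sided large-deviation event for each fixed $\bA$: because $\Delta_{\bA}L_\star$ is block strictly lower triangular, $\Tr(E^\top\Delta_{\bA}L_\star E)$ has mean zero, whereas $\NRM{\Delta_{\bA}L_\star E}_F^2$ has mean $\sigma^2 N\NRM{\Delta_{\bA}L_\star}_F^2>0$. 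In the misspecified case $p'<p$ the optimality inequality is $\cL(\hbA)\le\cL(\bA^\star_{1:p'})$, and the extra bias cross-terms it produces---controlled via Cauchy--Schwarz and $\NRM{(M_{\bA^\star}-M_{\bA^\star_{1:p'}})L_\star}_\op\le D'$ from \Cref{hyp:misspecified_op_norm}---are exactly what the factor $\eta$ of \Cref{def:sets} absorbs; for $p'=p$ we have $\eta=1$. I would bound $\P(\cE_{\bA})$ pointwise by a martingale argument and then make it uniform over the bad region by a covering argument.

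Fix $\bA\in\cA(D)\setminus\cG(C,D)$ and consider the martingale difference sequence $d_{t,i}^{(n)}$ of \Cref{app:sketch}, ordered in $i$, then $t$, then $n$; adaptedness holds because the context vector at time $t$ depends only on the noise strictly before time $t$, and different coordinates of a single $\xi_t^{(n)}$ are independent. Its sum is $Y_{\bA}=\Tr(E^\top\Delta_{\bA}L_\star E)$ and its predictable quadratic variation is $W_{\bA}=\sigma^2\NRM{\Delta_{\bA}L_\star E}_F^2$, so $\cE_{\bA}=\{W_{\bA}\le 2\eta\sigma^2 Y_{\bA}\}$. I would apply the extended Freedman comparison inequality (\Cref{lemma:freedman_extended}) with a truncation level $R\asymp\sigma^2\ln(dNT)$, using \Cref{coro:supremum} to bound $\sup_{n,t,i}(\xi_t^{(n)})_i^2\lesssim\sigma^2\ln(dNT)$ and thereby $W_{\bA}^R\le(1+C_1\ln(2dNT))\,W_{\bA}$; the output is an estimate of the form $\P(\cE_{\bA})\lesssim\exp\!\big(-\alpha_L/(C_2\sigma^4(\ln dNT)^2)+\ln\ln(\alpha_U/\alpha_L)\big)$ valid on $\{Y_{\bA}\le\alpha_U,\ W_{\bA}\ge\alpha_L\}$. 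The decisive feature is that the---potentially large, $\kappa$-dependent---ratio $\alpha_U/\alpha_L$ enters only through a doubly-logarithmic term, so very loose choices of $\alpha_U,\alpha_L$ are tolerable.

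To fill in $\alpha_U,\alpha_L$, I would apply Hanson--Wright (\Cref{thm:hanson-wright}) to the quadratic forms $Y_{\bA}$ and $W_{\bA}$ in the i.i.d.\ sub-Gaussian entries of $E$, obtaining with high probability $Y_{\bA}\lesssim\sigma^2\NRM{L_\star}_\op\sqrt{N p'dr}\,\NRM{\Delta_{\bA}}_F\cdot\mathrm{polylog}$ and $W_{\bA}\gtrsim\sigma^4\sigma_{\min}(L_\star)^2 N\NRM{\Delta_{\bA}}_F^2$, the $\sqrt{p'dr}$ coming from the cardinality of the net used below and $\sigma_{\min}(L_\star)\ge 1/(D+1)$ from \Cref{lemma:L_star_singular_general}; the lower bound is uniform over a set provided it contains only matrices with $\NRM{\Delta_{\bA}}_F^2/\NRM{\Delta_{\bA}}_\op^2$ at least a multiple of the log-cardinality of the net, which is precisely the inequality defining membership in $\cA(D)\setminus\cG(C,D)$. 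I would then make everything uniform by (i) splitting the bad region into a logarithmic number of dyadic shells in $\NRM{\Delta_{\bA}}_F$---which is polynomially bounded since $\NRM{\Delta_{\bA}}_\op\le 2D$ by \Cref{hyp:bounded_op_norm} and $\Delta_{\bA}$ has the structured block form of $M_{\bA}-M_{\bA^\star_{1:p'}}$---(ii) covering the normalized slice of each shell, whose matrices have blockwise rank at most $r$ and are supported on the first $p'$ blocks, using \Cref{corollary:covering} (log-cardinality $\lesssim p'dr\ln(Dp'/\epsilon)$) with $\epsilon$ polynomially small in all parameters including $\kappa$, (iii) taking a union bound of the pointwise estimate over net points and shells, and (iv) transferring from net points to arbitrary points by Lipschitz/perturbation bounds for $\bA\mapsto Y_{\bA},W_{\bA}$, again using $\NRM{\Delta_{\bA}}_\op\le 2D$ to keep these dimension-free. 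Inserting the defining failure of $\cG(C,D)$, namely $\NRM{\Delta_{\bA}}_F^2\ge C\eta^2\tau\,p'dr\,\NRM{\Delta_{\bA}}_\op^2/N$ with $\tau$ as in \Cref{def:sets}, into $\alpha_L$ makes it exceed $\ln(\text{net cardinality})\,(\ln dNT)^2+\ln(1/\delta)$ once $C=C(\delta)$ is large enough, which closes the union bound at level $\delta$.

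The main obstacle is the \emph{logarithmic}, rather than polynomial, dependence on $\kappa$: a direct concentration of $\NRM{\Delta_{\bA}L_\star E}_F^2$ about its mean would force the effective sample size $N(T-p')$ to scale with $\kappa^2$, which is fatal for marginally and strictly stable systems where $\kappa$ grows polynomially (or faster) in $T$. The resolution is the device above---ask of $W_{\bA}$ only the crude $\sigma_{\min}(L_\star)^2$-deflated lower bound $\alpha_L$, which is cheap to make uniform (it needs only the stable-rank condition already carved out by leaving $\cG(C,D)$), and let Freedman's comparison inequality absorb the large gap $\alpha_U/\alpha_L$ into a $\ln\ln$ term. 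The remaining technical work is the careful bookkeeping in the covering/transfer step---choosing $\epsilon$ and bounding worst-case perturbations off the net so that no extra dimensional factor appears---together with verifying the martingale adaptedness under the chosen $(n,t,i)$ ordering.
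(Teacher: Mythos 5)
Your proposal is correct and follows essentially the same route as the paper's proof: the martingale comparison via the extended Freedman inequality (\Cref{lemma:freedman_extended}) with truncation controlled by \Cref{coro:supremum}, Hanson--Wright for the uniform bounds $\alpha_L \lesssim W_{\bA}$ and $Y_{\bA} \lesssim \alpha_U$ over the complement of $\cG(C,D)$, the covering argument via \Cref{corollary:covering}, and the decisive observation that the $\kappa$-dependent ratio $\alpha_U/\alpha_L$ enters only doubly logarithmically. The only cosmetic difference is that you organize the uniformization by dyadic shells in $\NRM{\Delta_{\bA}}_F$, whereas the paper normalizes the bad set and exploits homogeneity of the defining inequalities before taking a single $\epsilon$-net.
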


\begin{remark}
    \label{remark:eta}
    For strictly stable systems with $\|M_{\bA^\star}\|_\op < 1$ and $\|M_{\bA^\star_{p'}}\|_\op < 1$, the factor $\eta$ is controlled by \Cref{coro:eta}. However, for marginally stable systems or explosive systems, there is no a prior good upper bound on $\eta$, implying that the misspecification results only apply to strictly stable systems without any further assumptions.
\end{remark}

\begin{corollary}\label{coro:OLS}
    Let all the assumptions of \Cref{thm:main_thm} hold and suppose that $NT$ verifies the following condition:
    \begin{equation}
        \label{eq:NT_condition}
        N(T-p') \geq C(\delta) \eta^2 \tau p'^2 d r\,,
    \end{equation}
    where $0 < \delta < e^{-1}$ is a constant and $C(\delta)$ is given by \Cref{thm:main_thm}
    Then, the OLS estimator
    \begin{equation*}
        \hbA_{\rm{OLS}} = \argmin_{\bA \in \cA(\infty)} \cL(\bA)\,,
    \end{equation*}
    satisfies the same concentration result as in \Cref{thm:main_thm}:
    \begin{equation*}
        \mathbb{P}\left(\hbA_{\rm{OLS}} \in \mathcal{S}(C(\delta), D)\right) \geq 1 - \delta\,.
    \end{equation*}
\end{corollary}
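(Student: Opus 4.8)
\emph{Proof plan.} The idea is to show that, under the sample‑size condition \eqref{eq:NT_condition}, the unconstrained minimizer $\hbA_{\rm OLS}$ must in fact lie in the operator‑norm ball $\cA(D)$ with probability at least $1-\delta$; once this is known, $\hbA_{\rm OLS}$ behaves exactly like the constrained estimator of \Cref{thm:main_thm} and the conclusion follows from \Cref{thm:main_equiv}. First, the point $\bA^\star_{p'}=(A_1^\star,\dots,A_{p'}^\star,0_d,\dots,0_d)$ is feasible for the unconstrained problem — its blocks have rank at most $r$ by \Cref{hyp:low_rank}, and it has the required support — so $\cL(\hbA_{\rm OLS})\leq\cL(\bA^\star_{p'})$. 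Expanding the square loss exactly as in the proof of \Cref{thm:main_thm} via \Cref{coro:minimizer}, this guarantees that the event $\cE_{\hbA_{\rm OLS}}=\{\NRM{\Delta_{\hbA_{\rm OLS}}L_\star E}^2\leq 2\eta\,\Tr(E^\top\Delta_{\hbA_{\rm OLS}}L_\star E)\}$ holds almost surely; more generally, \emph{any} $\bA$ with the rank and support structure and $\cL(\bA)\leq\cL(\bA^\star_{p'})$ satisfies $\cE_\bA$.

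Next I claim $\NRM{\Delta_{\hbA_{\rm OLS}}}_\op\leq D$ on the good event of \Cref{thm:main_equiv}. Suppose not. Since $\cL$ is convex and $\bA\mapsto\Delta_\bA$ is affine with $\Delta_{\bA^\star_{p'}}=0$, along the segment $\bA_\lambda=(1-\lambda)\bA^\star_{p'}+\lambda\hbA_{\rm OLS}$ the quantity $\NRM{\Delta_{\bA_\lambda}}_\op=\lambda\NRM{\Delta_{\hbA_{\rm OLS}}}_\op$ increases continuously from $0$ past $D$, so there is $\lambda^\star\in(0,1)$ with $\bA':=\bA_{\lambda^\star}$ satisfying $\NRM{\Delta_{\bA'}}_\op=D$, and, by convexity of $\cL$ together with $\cL(\hbA_{\rm OLS})\leq\cL(\bA^\star_{p'})$, also $\cL(\bA')\leq\cL(\bA^\star_{p'})$; hence $\cE_{\bA'}$ holds. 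The blocks of $\bA'$ have rank at most $2r$, so instantiating \Cref{thm:main_equiv} with rank budget $2r$ (valid since $2r\geq r$, and only a constant‑factor change to the rates), on its good event $\cE_{\bA'}$ forces $\bA'\in\cG_{2r,p'}(C(\delta),D)\subseteq\cS_{2r,p'}(C(\delta),D)$ by \Cref{coro:A_to_delta}, i.e.
\[
    \NRM{\bA'-\bA^\star_{p'}}_F^2\leq C(\delta)D^2\eta^2\tau\,\frac{2p'dr}{N(T-p')}\,.
\]

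Now I derive a contradiction from the block structure of $\Delta_\bA=M_{\bA-\bA^\star_{p'}}$, which has at most $p'$ nonzero $d\times d$ blocks: the operator‑norm estimate underlying \Cref{prop:bounded_op_norm} gives $\NRM{\Delta_{\bA'}}_\op\leq\sqrt{p'}\,\NRM{\bA'-\bA^\star_{p'}}_F$, so
\[
    D^2=\NRM{\Delta_{\bA'}}_\op^2\leq p'\NRM{\bA'-\bA^\star_{p'}}_F^2\leq C(\delta)D^2\eta^2\tau\,\frac{2p'^2dr}{N(T-p')}\,,
\]
whence $N(T-p')\leq 2C(\delta)\eta^2\tau\,p'^2dr$, contradicting \eqref{eq:NT_condition} once its constant is taken large enough (equivalently, after absorbing the factor $2$ into $C(\delta)$). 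Therefore $\hbA_{\rm OLS}\in\cA(D)$ on the good event; since $\cE_{\hbA_{\rm OLS}}$ holds, \Cref{thm:main_equiv} (rank budget $2r$) then forces $\hbA_{\rm OLS}\in\cG_{2r,p'}(C(\delta),D)\subseteq\cS(C(\delta),D)$, which is the assertion (with $C(\delta)$ enlarged by an absolute constant).

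\emph{Main obstacle.} The delicate point is the low‑rank case: the segment joining two block‑wise rank‑$\leq r$ matrices is only block‑wise rank‑$\leq 2r$, so the intermediate point $\bA'$ is not covered by \Cref{thm:main_equiv} as literally stated, and one must re‑invoke the uniform concentration with a doubled rank budget — harmless up to constants, but it must be tracked through \eqref{eq:NT_condition} and the final $C(\delta)$. An alternative that sidesteps this is the route hinted at in \Cref{sec:discussion}: establish a crude a priori bound $\NRM{M_{\hbA_{\rm OLS}}}_\op\leq K$ (finite whenever the empirical design has full column rank, which \eqref{eq:NT_condition} already ensures), run the concentration with $D$ replaced by $K$ — which costs only $\ln K$ — and then boost $K$ down to an absolute multiple of $D$ by one more application of $\NRM{\Delta_\bA}_\op\leq\sqrt{p'}\NRM{\bA-\bA^\star_{p'}}_F$; this avoids the rank doubling at the price of producing and controlling $K$. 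A secondary technical point in either route is checking that $\cL(\bA)\leq\cL(\bA^\star_{p'})$ implies $\cE_\bA$ in the \emph{misspecified} regime, where the residual carries the extra term $\left(M_{\bA^\star}-M_{\bA^\star_{p'}}\right)L_\star\xi$ — this is exactly where the factor $\eta$ of \Cref{def:sets} enters, and it should be extracted from \Cref{coro:minimizer} rather than re‑derived. For the full‑rank case of \Cref{thm:full_rank} the rank difficulty disappears (rank $\leq 2d$ is no constraint, and $\cA(\infty)$ is a subspace), which is why there \eqref{eq:NT_condition} simply reads $NT=\tilde\Omega(p^2d^2)$.
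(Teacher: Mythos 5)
Your proof is correct (modulo the absolute-constant inflations you flag) and arrives at the conclusion by a genuinely different mechanism than the paper. The paper argues by dilation: under \eqref{eq:NT_condition} the solution set $\cS(C(\delta),D)$ sits strictly inside $\cA(D)$ (via the same inequality $\|\Delta_{\bA}\|_\op^2\leq p'\|\bA-\bA^\star_{p'}\|_F^2$ you use), every point of $\cA(\infty)\setminus\cS(C(\delta),D)$ is then a dilation by some $\alpha\geq 1$ about $\bA^\star_{p'}$ of a point of $\cA(D)\setminus\cS(C(\delta),D)$, and the ERM inequality is anti-homogeneous under this dilation (if it holds for the dilated matrix it holds for the contracted one), so the uniform bound of \Cref{thm:main_equiv} transfers verbatim from $\cA(D)$ to $\cA(\infty)$. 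You instead localize the specific minimizer: convexity of $\cL$ plus an intermediate-value argument produces, if $\hbA_{\rm OLS}$ escaped the ball, a point on the sphere $\|\Delta_{\bA'}\|_\op=D$ that still satisfies the ERM inequality, which \Cref{thm:main_equiv} and \Cref{coro:A_to_delta} would then force into a solution set that \eqref{eq:NT_condition} keeps strictly inside the ball --- a contradiction. Both routes rest on the same two ingredients (the uniform concentration and the sample-size condition guaranteeing $\cS\subset\cA(D)^{\mathrm o}$), so neither is stronger in substance; but your version confronts a point the paper's glosses over: segments between (and dilations about $\bA^\star_{p'}$ of) blockwise rank-$r$ matrices are only blockwise rank-$2r$, so the uniform theorem must be re-instantiated with a doubled rank budget, at the cost of absolute constants in \eqref{eq:NT_condition} and in the final $C(\delta)$. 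Your "main obstacle" discussion identifies this accurately, and your alternative route via a crude a priori bound $\|M_{\hbA_{\rm OLS}}\|_\op\leq K$ matches the extension sketched in \Cref{sec:discussion}.
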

\begin{proof}
    Assume that $D$ is sufficiently large such that
    \begin{equation*}
        \cA(D)^{\mathrm{o}} \subset \cS(C(\delta), D)\,,
    \end{equation*}
    i.e., the interior of $\cA(D)$ contains $\cS(C(\delta), D)$.
    We have the following relation:
    \begin{equation*}
        \cA(\infty) = \set{\bA' = \alpha \bA \mid \bA \in \cA(D) \setminus \cS(C, D), \alpha \geq 1 \in \R}\,.
    \end{equation*}
    Then, by \Cref{thm:main_equiv}, we have
    \begin{equation*}
        \begin{split}
            &\P\left(\exists \bA \in \cA(\infty) \setminus \cS(C(\delta), D): \|\Delta_\bA L_\star E\|_F^2 \leq 2 \eta \Tr(E^\top \Delta_\bA L_\star E)\right) \\
            &\quad\quad\quad = \P\left(\exists \alpha \geq 1 \in \R, \bA \in \cA(D) \setminus \cS(C(\delta), D): \|\Delta_{\alpha \bA} L_\star E\|_F^2 \leq 2 \eta \Tr(E^\top \Delta_{\alpha \bA} L_\star E)\right) \\
            &\quad\quad\quad = \P\left(\exists \bA \in \cA(D) \setminus \cS(C(\delta), D): \|\Delta_{\bA} L_\star E\|_F^2 \leq 2 \eta \Tr(E^\top \Delta_{\bA} L_\star E)\right) \leq \delta \,,    
        \end{split}
    \end{equation*}
    as $\forall \alpha \geq 1$, we have the following:
    \begin{equation*}
        \|\Delta_\bA L_\star E\|_F^2 \leq 2 \eta \Tr(E^\top \Delta_\bA L_\star E)\implies \|\Delta_{\alpha \bA} L_\star E\|_F^2 \leq 2 \eta \Tr(E^\top \Delta_{\alpha \bA} L_\star E)\,.
    \end{equation*}
    Thus, the result is complete by applying the same argument as in \Cref{thm:main_thm} where $\cA(D)$ is replaced by $\cA(\infty)$.

    We only need to provide a $D$ such that $\cA(D)^{\mathrm{o}} \subset \cS(C(\delta), D)$.
    For any $\bA \in \cS(C(\delta), D)$, we have
    \begin{equation*}
        \|\Delta_{\bA}\|_\op^2 \leq p' \|\bA\|_\op^2 \leq p' \|\bA\|_F^2 \leq C(\delta) D^2 \eta^2 \tau \dfrac{p'^2dr}{N (T-p')}\,,
    \end{equation*}
    from \Cref{lemma:A_to_M}.
    Therefore, we need to find a $D$ such that
    \begin{equation*}
        \label{eq:condition_D}
        D^2 \geq C(\delta) D^2 \eta^2 \tau \dfrac{p'^2dr}{N(T-p')}\,,
    \end{equation*}
    which equivalent to the condition in \Cref{eq:NT_condition}.
\end{proof}

\subsection{Technical lemmas}

In this subsection, we present simple technical results on $L_\star, M_{\bA}$ and $\Delta_{\bA}$ that are used in the proof of \Cref{thm:main_thm}.

\begin{lemma}
\label{lemma:inverse}
    $L_\star$ and $M_{\bA^\star}$ satisfy the following relations:
    \begin{equation*}
        L_\star = M_{\bA^\star} L_\star + I, \quad M_{\bA^\star} = (L_\star - I) L_\star^{-1}, \quad L_\star = (I - M_{\bA^\star})^{-1}\,.
    \end{equation*}
\end{lemma}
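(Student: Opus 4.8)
The plan is to reduce everything to the single identity $L_\star = I_{Td} + M_{\bA^\star} L_\star$, from which the remaining two relations follow by elementary linear algebra. To establish this identity I would avoid juggling the block-index ranges of \Cref{def:M_A,def:L_star} and instead argue that both sides agree as linear maps on an arbitrary $v = (v_1^\top,\dots,v_T^\top)^\top \in \R^{Td}$. Set $u_s = 0$ for $s \le 0$ and $u_t = v_t + \sum_{k=1}^p A_k^\star u_{t-k}$ for $1 \le t \le T$; by construction --- this is exactly the data-generating reading of \Cref{def:L_star}, applied to a generic input rather than to the noise --- the $t$-th block of $L_\star v$ is $u_t$, and hence the $t$-th block of $M_{\bA^\star}(L_\star v)$ is $\sum_{k=1}^p A_k^\star u_{t-k}$. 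Therefore the $t$-th block of $(I_{Td} + M_{\bA^\star} L_\star) v$ is $v_t + \sum_{k=1}^p A_k^\star u_{t-k} = u_t$, i.e.\ the $t$-th block of $L_\star v$; as $v$ is arbitrary, $L_\star = I_{Td} + M_{\bA^\star} L_\star$. (One could equivalently expand $M_{\bA^\star} L_\star$ block by block, using that $L_\star$ is block lower-triangular with identity diagonal blocks and that its first block-column obeys the recursion of \Cref{def:L_star}.)

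Next I would note that $M_{\bA^\star}$ is strictly block lower-triangular --- by \Cref{def:M_A} its $(i,j)$ block is nonzero only for $j < i$ --- hence nilpotent, with $(M_{\bA^\star})^T = 0_{Td}$, so $I_{Td} - M_{\bA^\star}$ is invertible with inverse $\sum_{i=0}^{T-1} (M_{\bA^\star})^i$. Rearranging the identity above as $(I_{Td} - M_{\bA^\star}) L_\star = I_{Td}$ and left-multiplying by $(I_{Td} - M_{\bA^\star})^{-1}$ gives $L_\star = (I_{Td} - M_{\bA^\star})^{-1}$, the third relation; in particular $L_\star$ is invertible with $L_\star^{-1} = I_{Td} - M_{\bA^\star}$. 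Finally, rewriting the identity as $M_{\bA^\star} L_\star = L_\star - I_{Td}$ and right-multiplying by $L_\star^{-1}$ yields $M_{\bA^\star} = (L_\star - I_{Td}) L_\star^{-1}$, the second relation.

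There is no real obstacle in this lemma; the only point needing care is the first identity, specifically matching the bandwidth and index constraints of $M_{\bA^\star}$ in \Cref{def:M_A} against the recursion defining the first block-column of $L_\star$ in \Cref{def:L_star}. This is exactly why I would favour the ``action on an arbitrary vector'' route, which collapses the claim to the one-line recursion $u_t = v_t + \sum_{k=1}^p A_k^\star u_{t-k}$ underlying both operators and sidesteps the block bookkeeping entirely.
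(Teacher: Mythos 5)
Your proposal is correct and follows essentially the same route as the paper: establish $L_\star = I_{Td} + M_{\bA^\star} L_\star$ by direct computation (which you carry out explicitly via the action on an arbitrary vector, a detail the paper elides), then obtain the other two relations by inverting. The only cosmetic difference is that you get invertibility of $L_\star$ from the nilpotency of $M_{\bA^\star}$ and the resulting Neumann-type sum, whereas the paper notes it directly from $L_\star$ being block lower-triangular with identity diagonal blocks; both are immediate.
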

\begin{proof}
    The first relation follows from a direct computation. For the second, note that $L_\star$ is invertible since it is a lower triangular matrix with non-zero diagonals. Lastly,
    \begin{equation*}
    \begin{split}
        L_\star &= I + M_{\bA^\star} L_\star \\
        &= I + M_{\bA^\star} + M_{\bA^\star}^2 L_\star \\
        &= \cdots \\
        &= I + M_{\bA^\star} + M_{\bA^\star}^2 + \dots + M_{\bA^\star}^{T-1} \\
        &= (I - M_{\bA^\star})^{-1}\,,
    \end{split}
    \end{equation*}
    where we have used the fact that $M_{\bA^\star}^T = 0_{Td}$.
\end{proof}

\begin{lemma}
\label{lemma:L_star_singular}
    Assume that $\|M_{\bA^\star}\|_\op < 1$.
    Then, the operator norm and minimum singular value of $L_\star$ are bounded as follows,
    \begin{equation*}
        \begin{split}
            \frac{1}{1 + \|M_{\bA^\star}\|_\op} &\leq \|L_\star\|_\op \leq \frac{1}{1 - \|M_{\bA^\star}\|_\op}\,, \quad\quad \frac{1}{2} \leq \sigma_{\min}\left(L_\star\right)\,.
        \end{split}
    \end{equation*}
\end{lemma}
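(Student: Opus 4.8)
The plan is to derive everything from the identity $L_\star = (I_{Td} - M_{\bA^\star})^{-1}$ established in \Cref{lemma:inverse}, combined with the Neumann series expansion, the triangle inequality, and submultiplicativity of the operator norm; no heavy machinery is needed.

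For the upper bound on $\|L_\star\|_\op$, I would use that $\|M_{\bA^\star}\|_\op < 1$ makes the Neumann series $\sum_{i\geq 0}(M_{\bA^\star})^i$ converge to $(I_{Td}-M_{\bA^\star})^{-1} = L_\star$, so that $\|L_\star\|_\op \leq \sum_{i\geq 0}\|M_{\bA^\star}\|_\op^i = 1/(1-\|M_{\bA^\star}\|_\op)$; alternatively one can invoke the finite expansion $L_\star = \sum_{i=0}^{T-1}(M_{\bA^\star})^i$ from \Cref{lemma:inverse} directly and bound term by term. For the lower bound on $\|L_\star\|_\op$ and for $\sigma_{\min}(L_\star)$, I would note that $\|L_\star^{-1}\|_\op = \|I_{Td}-M_{\bA^\star}\|_\op \leq 1 + \|M_{\bA^\star}\|_\op$ by the triangle inequality. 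Since $L_\star$ is invertible (being lower-triangular with identity diagonal blocks, as already recorded in \Cref{lemma:inverse}), we have $\sigma_{\min}(L_\star) = 1/\|L_\star^{-1}\|_\op \geq 1/(1+\|M_{\bA^\star}\|_\op) > 1/2$, where the last strict inequality uses $\|M_{\bA^\star}\|_\op < 1$. Finally $\|L_\star\|_\op \geq \sigma_{\min}(L_\star) \geq 1/(1+\|M_{\bA^\star}\|_\op)$, which one can also read off from $1 = \|L_\star L_\star^{-1}\|_\op \leq \|L_\star\|_\op\,\|L_\star^{-1}\|_\op$.

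There is no genuine obstacle here: the whole statement follows in a handful of lines once \Cref{lemma:inverse} is in hand. The only point deserving a moment of care is justifying $\sigma_{\min}(L_\star) = 1/\|L_\star^{-1}\|_\op$, i.e., confirming $L_\star$ is invertible so that the singular values of $L_\star^{-1}$ are the reciprocals of those of $L_\star$; this is immediate from the triangular structure of $L_\star$ and is already part of \Cref{lemma:inverse}.
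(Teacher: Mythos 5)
Your proof is correct and follows essentially the same route as the paper: both arguments rest entirely on \Cref{lemma:inverse}, and your treatment of the minimum singular value via $\sigma_{\min}(L_\star)=1/\|L_\star^{-1}\|_\op=1/\|I-M_{\bA^\star}\|_\op\geq 1/(1+\|M_{\bA^\star}\|_\op)\geq 1/2$ is exactly the paper's. The only cosmetic difference is in the operator-norm bounds: the paper rearranges the self-referential inequalities $\|L_\star\|_\op\leq 1+\|M_{\bA^\star}\|_\op\|L_\star\|_\op$ and $\|L_\star\|_\op\geq 1-\|M_{\bA^\star}\|_\op\|L_\star\|_\op$ obtained from $L_\star=M_{\bA^\star}L_\star+I$, whereas you bound the (finite) Neumann series term by term for the upper bound and pass through $\sigma_{\min}$ for the lower bound.
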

\begin{proof}
    By Weyl's inequality for singular values on the identity $L_\star = M_{\bA^\star} L_\star + I$ from \Cref{lemma:inverse},
    \begin{equation*}
    \begin{split}
        \|L_\star\|_{\op} &\leq \|I\|_\op + \|M_{\bA^\star} L_\star\|_\op \leq 1 + \|M_{\bA^\star}\|_\op\|L_\star\|_\op\,, \\
         \|L_\star\|_{\op} &\geq \|I\|_\op - \|M_{\bA^\star} L_\star\|_\op \geq 1 - \|M_{\bA^\star}\|_\op\|L_\star\|_\op\,.
    \end{split}
    \end{equation*}
    This implies the desired inequalities for $\|L_\star\|_\op$.
    For the lower bound on minimal singular value, use \Cref{lemma:inverse},
    \begin{equation*}
        \begin{split}
            \sigma_{\min}\left(L_\star\right) &= \sigma_{\min}\left((I - M_{\bA^\star})^{-1}\right) = \frac{1}{\|I - M_{\bA^\star}\|_\op} \geq\frac{1}{1 + \|M_{\bA^\star}\|_\op} \geq \frac{1}{2}\,.
        \end{split}
    \end{equation*}
\end{proof}
\begin{corollary}
\label{coro:eta}
    Assume that $\|M_{\bA^\star}\|_\op < 1$ and $\|M_{\bA^\star_{p'}}\|_\op < 1$. Then, we have
    \begin{equation*}
        \eta \leq \dfrac{2}{1 - \|M_{\bA^\star}\|_\op}\,.
    \end{equation*}
\end{corollary}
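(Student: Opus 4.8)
The plan is to handle the trivial case $p' = p$ separately — there $\eta = 1$ and $2/(1-\|M_{\bA^\star}\|_\op) \ge 2 > 1$ because $\|M_{\bA^\star}\|_\op < 1$ — and then, for $p' < p$, to bound the only nontrivial quantity appearing in the definition of $\eta$, namely $\|(M_{\bA^\star} - M_{\bA^\star_{p'}})L_\star\|_\op$.

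First I would split this term by the triangle inequality and submultiplicativity of the operator norm,
\begin{equation*}
    \|(M_{\bA^\star} - M_{\bA^\star_{p'}})L_\star\|_\op \le \|M_{\bA^\star}\|_\op\,\|L_\star\|_\op + \|M_{\bA^\star_{p'}}\|_\op\,\|L_\star\|_\op\,.
\end{equation*}
Then I would invoke \Cref{lemma:L_star_singular}, whose hypothesis $\|M_{\bA^\star}\|_\op < 1$ is precisely what is assumed here, to obtain $\|L_\star\|_\op \le 1/(1-\|M_{\bA^\star}\|_\op)$. Combining this with $\|M_{\bA^\star}\|_\op < 1$ and the assumption $\|M_{\bA^\star_{p'}}\|_\op < 1$, the right-hand side above is at most $(1+\|M_{\bA^\star}\|_\op)/(1-\|M_{\bA^\star}\|_\op)$.

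Consequently $1 + \|(M_{\bA^\star} - M_{\bA^\star_{p'}})L_\star\|_\op \le 1 + (1+\|M_{\bA^\star}\|_\op)/(1-\|M_{\bA^\star}\|_\op) = 2/(1-\|M_{\bA^\star}\|_\op)$, and since this value is at least $1$, the maximum defining $\eta$ in the case $p' < p$ is attained by this term, giving the claimed bound. There is essentially no genuine obstacle; the only point requiring a little care is that $L_\star$ is determined by the full ground truth $\bA^\star$ and not by its truncation $\bA^\star_{p'}$, so one should not attempt to simplify $M_{\bA^\star_{p'}}L_\star$ using the identity $M_{\bA^\star}L_\star = L_\star - I$ from \Cref{lemma:inverse} — plain submultiplicativity together with the hypothesis $\|M_{\bA^\star_{p'}}\|_\op < 1$ is exactly what controls that term.
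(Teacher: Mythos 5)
Your proof is correct and follows essentially the same route as the paper: triangle inequality plus submultiplicativity to bound $\|(M_{\bA^\star}-M_{\bA^\star_{p'}})L_\star\|_\op$, then \Cref{lemma:L_star_singular} for $\|L_\star\|_\op \le 1/(1-\|M_{\bA^\star}\|_\op)$. You are in fact slightly more careful than the paper's displayed chain, which silently absorbs the additive $1$ in the definition of $\eta$; your explicit computation $1+(1+\|M_{\bA^\star}\|_\op)/(1-\|M_{\bA^\star}\|_\op)=2/(1-\|M_{\bA^\star}\|_\op)$ closes that small gap.
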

\begin{proof}
    Applying \Cref{lemma:L_star_singular},
    \begin{equation*}
        \eta \leq \left\|M_{\bA^\star} - M_{\bA^\star_{p'}}\right\|_\op \|L_\star\|_\op \leq \frac{\left\|M_{\bA^\star}\right\|_\op + \left\|M_{\bA^\star_{p'}}\right\|_\op}{1-\|M_{\bA^\star}\|_\op} \leq \dfrac{2}{1-\|M_{\bA^\star}\|_\op}\,.
    \end{equation*}
\end{proof}

\begin{lemma}
\label{lemma:L_star_singular_general}
    Assume that $\|M_{\bA^\star}\|_\op \leq D$.
    Then, the operator norm and minimum singular value of $L_\star$ are bounded as follows,
    \begin{equation*}
        \begin{split}
            \|L_\star\|_\op \leq \frac{D^{T}-1}{D-1}\,, \quad\quad \frac{1}{D+1} \leq \sigma_{\min}\left(L_\star\right)\,.
        \end{split}
    \end{equation*}
\end{lemma}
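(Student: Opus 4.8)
The plan is to mirror the proof of \Cref{lemma:L_star_singular}, replacing the contraction hypothesis $\|M_{\bA^\star}\|_\op < 1$ by the weaker bound $\|M_{\bA^\star}\|_\op \leq D$ and exploiting the nilpotency of $M_{\bA^\star}$ in place of a convergent Neumann series. The two estimates are essentially independent and each reduces to a one-line norm inequality.

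For the operator-norm upper bound I would start from the expansion $L_\star = \sum_{i=0}^{T-1} M_{\bA^\star}^i$ established in \Cref{lemma:inverse} (valid because $M_{\bA^\star}^T = 0_{Td}$). Applying the triangle inequality and submultiplicativity of the operator norm gives $\|L_\star\|_\op \leq \sum_{i=0}^{T-1} \|M_{\bA^\star}\|_\op^i \leq \sum_{i=0}^{T-1} D^i = \frac{D^T - 1}{D - 1}$, the last equality being the finite geometric sum. Since $D \geq 1$, each power $D^i$ dominates $\|M_{\bA^\star}\|_\op^i$, so the bound holds for every admissible $D$.

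For the lower bound on the minimal singular value I would use the identity $L_\star = (I_{Td} - M_{\bA^\star})^{-1}$ from \Cref{lemma:inverse}. Since $L_\star$ is invertible, $\sigma_{\min}(L_\star) = 1/\|L_\star^{-1}\|_\op = 1/\|I_{Td} - M_{\bA^\star}\|_\op$, and by the triangle inequality $\|I_{Td} - M_{\bA^\star}\|_\op \leq 1 + \|M_{\bA^\star}\|_\op \leq 1 + D$, whence $\sigma_{\min}(L_\star) \geq \frac{1}{D+1}$.

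Since every step is an elementary norm inequality, there is no genuine obstacle here; the only point requiring a word of care is the degenerate case $D = 1$, where the closed form $\frac{D^T-1}{D-1}$ should be read as its limiting value $T$ (equivalently, one keeps the sum $\sum_{i=0}^{T-1} D^i$ throughout, which still dominates $\|L_\star\|_\op$).
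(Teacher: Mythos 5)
Your proposal is correct and follows essentially the same route as the paper: expand $L_\star = \sum_{i=0}^{T-1} M_{\bA^\star}^i$ via \Cref{lemma:inverse}, bound the sum by a geometric series in $D$, and for the lower bound use $\sigma_{\min}(L_\star) = 1/\|I - M_{\bA^\star}\|_\op \geq 1/(1+D)$. Your remark about reading $\frac{D^T-1}{D-1}$ as its limiting value $T$ when $D=1$ is a sensible clarification the paper omits.
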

\begin{proof}
    By Weyl's inequality for singular values on the identity $L_\star = I + M_{\bA^\star} + \cdot + M_{\bA^\star}^{T-1}$ from \Cref{lemma:inverse},
    \begin{equation*}
    \begin{split}
        \|L_\star\|_{\op} &\leq \|I\|_\op + \sum_{t=1}^{T-1} \|M_{\bA^\star}^t\|_\op \leq \sum_{t=0}^{T-1} D^t \leq \dfrac{D^T - 1}{D-1}\,.
    \end{split}
    \end{equation*}
    For the lower bound on minimal singular value, use \Cref{lemma:inverse},
    \begin{equation*}
        \begin{split}
            \sigma_{\min}\left(L_\star\right) &= \sigma_{\min}\left((I - M_{\bA^\star})^{-1}\right) = \frac{1}{\|I - M_{\bA^\star}\|_\op} \geq\frac{1}{1 + \|M_{\bA^\star}\|_\op} \geq \frac{1}{D + 1}\,.
        \end{split}
    \end{equation*}
\end{proof}

\begin{lemma}
\label{lemma:A_to_M}
    For any $\bA \in \R^{d \times pd}$,
    \begin{equation*}
        \begin{split}
            \|\bA\|_\op &\leq \|M_{\bA}\|_\op \leq \sqrt{p'} \|\bA\|_\op\,, \\
            \frac{1}{T}\|\Delta_\bA\|_F^2 &\leq \|{\bA} - \bA^\star_{p'}\|_F^2 \leq \frac{1}{T-p'}\|\Delta_\bA\|_F^2\,.
        \end{split}
    \end{equation*}
\end{lemma}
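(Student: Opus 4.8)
The plan is to treat the four inequalities separately; all are elementary consequences of the block structure of $M_{\bA}$ in \Cref{def:M_A}, the only care needed being the bookkeeping of block indices and of the truncation hidden in $\Delta_{\bA}$. Throughout I use that $\Delta_{\bA} = M_{\bA_{p'}} - M_{\bA^\star_{p'}} = M_{\bA - \bA^\star_{p'}}$ (by linearity of $\bA \mapsto M_{\bA}$), where $\bA - \bA^\star_{p'}$ has its blocks $p'+1,\dots,p$ equal to $0_d$, and similarly that $M_{\bA}$ has at most $p'$ nonzero block sub-diagonals for the matrices to which the lemma is applied (those in $\cA_{r,p'}(D)$ and their differences).

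\textbf{Operator-norm sandwich.} Write a vector $v \in \R^{Td}$ in block form $v = (v_1;\dots;v_T)$, $v_i \in \R^d$, with the convention $v_s = 0$ for $s \le 0$. From \Cref{def:M_A}, the $i$-th block of $M_{\bA} v$ is $(M_{\bA} v)_i = \sum_{k=1}^{p'} A_k v_{i-k} = \bA\,(v_{i-1};\dots;v_{i-p'})$. For the upper bound I would bound $\|(M_{\bA} v)_i\|^2 \le \|\bA\|_\op^2 \sum_{k=1}^{p'}\|v_{i-k}\|^2$ and sum over $i \in [T]$; since each block $v_j$ is counted at most $p'$ times, this gives $\|M_{\bA} v\|^2 \le p'\|\bA\|_\op^2\|v\|^2$, i.e.\ $\|M_{\bA}\|_\op \le \sqrt{p'}\|\bA\|_\op$. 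For the lower bound, observe that the block-row $p'+1$ of $M_{\bA}$, restricted to block-columns $1,\dots,p'$, is exactly $(A_{p'}, A_{p'-1},\dots,A_1)$ (this uses $p'+1 \le T$, which holds since $T > p \ge p'$). Reversing the order of the block-columns is a right multiplication by a permutation matrix, hence an isometry, so this submatrix has the same operator norm as $(A_1,\dots,A_{p'})$, which equals $\|\bA\|_\op$; since a submatrix has operator norm at most that of the full matrix, $\|\bA\|_\op \le \|M_{\bA}\|_\op$.

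\textbf{Frobenius sandwich.} Set $\bC := \bA - \bA^\star_{p'}$, so $\Delta_{\bA} = M_{\bC}$ and $C_k = 0_d$ for $k > p'$. Counting multiplicities: the block $C_k$ appears in $M_{\bC}$ at every position $(j+k, j)$ with $1 \le j \le T-k$, hence exactly $T-k$ times, so $\|\Delta_{\bA}\|_F^2 = \|M_{\bC}\|_F^2 = \sum_{k=1}^{p'} (T-k)\|C_k\|_F^2$. Since $1 \le k \le p'$ implies $T - p' \le T - k \le T$, and since $\|\bA - \bA^\star_{p'}\|_F^2 = \sum_{k=1}^{p'}\|C_k\|_F^2$ (the zeroed blocks contribute nothing), we get $(T-p')\|\bA - \bA^\star_{p'}\|_F^2 \le \|\Delta_{\bA}\|_F^2 \le T\|\bA - \bA^\star_{p'}\|_F^2$, which rearranges to the two claimed inequalities.

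\textbf{Main difficulty.} There is essentially no obstacle: the statement is one structural identity together with two elementary inequalities. The only things to get right are the index bookkeeping — in particular the reversal $(A_1,\dots,A_{p'}) \leftrightarrow (A_{p'},\dots,A_1)$ used in the operator-norm lower bound — and the fact that $\Delta_{\bA}$ already incorporates the truncation to $p'$ blocks, so that the multiplicity of $C_k$ in $M_{\bC}$ is $T-k$ rather than a quantity depending on $p$; this is precisely what makes the exponents $\tfrac1T$ and $\tfrac1{T-p'}$ (rather than involving $p$) come out.
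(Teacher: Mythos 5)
Your proof is correct and follows essentially the same route as the paper's: the same windowed block-row computation with the multiplicity-$p'$ count for the operator-norm upper bound, and a test vector supported on the first $p'$ blocks (with the harmless index reversal you rightly flag as an isometry) for the lower bound. Your explicit multiplicity count $\|\Delta_{\bA}\|_F^2=\sum_{k\le p'}(T-k)\|C_k\|_F^2$ fills in the Frobenius sandwich that the paper dismisses as ``a simple computation,'' and your remark that the bounds require the blocks $A_{p'+1},\dots,A_p$ to vanish is a correct reading of the lemma's intended scope.
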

\begin{proof}
    Let $u = \left(u_1, \dots, u_T\right) \in \R^{Td}$ be an arbitrary vector with $\|u\|_2^2 = 1$.
    Then, setting $u_{-a} = 0$ for any $a \geq 0$,
    \begin{equation*}
    \begin{split}
        \|M_{\bA} u\|_2^2 &= \sum_{i=1}^{T} \|\left(M_{\bA} u\right)_i\|_2^2 = \sum_{i=1}^{T} \|\sum_{k=1}^p A_k u_{i-k}\|_2^2 \\
        &= \sum_{i=1}^{T} \|\bA_{:p'} u_{i-p':i-1}\|_2^2 \leq \sum_{i=1}^{T} \|\bA_{:p'}\|_\op^2 \|u_{i-p':i-1}\|_2^2 \\
        &\leq \|\bA\|_\op^2 \sum_{i=1}^T p' \|u_i\|_2^2 = p' \|\bA\|_\op^2.
    \end{split}
    \end{equation*}
    The left-hand side of the first inequality follows by picking $u_{p'+1:T} = 0$ and $u_{1:p'}$ as the maximal singular vector of $\bA_{:p'}$ with unit length.
    The second inequality follows by a simple computation.
\end{proof}

\begin{corollary}
    \label{coro:A_to_delta}
    For any $\bA \in \mathcal{A}(D)$,
    \begin{equation*}
        \begin{split}
            \|\bA - \bA^\star_{p'}\|_F^2 &\leq \frac{D^2}{T-p'} \dfrac{\|\Delta_\bA\|_F^2}{\|\Delta_\bA\|_\op^2}\,.
        \end{split}
    \end{equation*}
\end{corollary}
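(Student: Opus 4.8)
The plan is to obtain this as a one-step consequence of the second inequality in \Cref{lemma:A_to_M} together with the operator-norm constraint that defines $\mathcal{A}(D)$. First I would invoke \Cref{lemma:A_to_M}, which gives $\|\bA-\bA^\star_{p'}\|_F^2 \le \frac{1}{T-p'}\|\Delta_\bA\|_F^2$ for every $\bA$; this already places $\|\Delta_\bA\|_F^2$ on the right-hand side and leaves only the task of inserting the ratio $\|\Delta_\bA\|_F^2/\|\Delta_\bA\|_\op^2$ that appears in the statement.

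Next I would use that $\bA\in\mathcal{A}(D)$ means exactly $\|\Delta_\bA\|_\op\le D$ (see \Cref{def:sets}), hence $1\le D^2/\|\Delta_\bA\|_\op^2$ whenever $\Delta_\bA\neq 0$. Multiplying the bound from \Cref{lemma:A_to_M} by this quantity, which is at least $1$, yields
\[
\|\bA-\bA^\star_{p'}\|_F^2 \;\le\; \frac{1}{T-p'}\,\|\Delta_\bA\|_F^2\cdot\frac{D^2}{\|\Delta_\bA\|_\op^2} \;=\; \frac{D^2}{T-p'}\,\frac{\|\Delta_\bA\|_F^2}{\|\Delta_\bA\|_\op^2}\,,
\]
which is the claimed inequality. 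The degenerate case $\Delta_\bA=0$ forces $\bA=\bA^\star_{p'}$ by the left inequality of \Cref{lemma:A_to_M}, so the left-hand side vanishes and the bound holds trivially (or one reads the ratio as $+\infty$).

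There is essentially no obstacle here: the content of the corollary is purely bookkeeping, and the only point worth noting is the trivial handling of $\Delta_\bA=0$. Its role in the larger argument is to translate the ``geometric'' defining condition of the bad set $\mathcal{G}(C,D)$ — a bound on $\|\Delta_\bA\|_F^2/\|\Delta_\bA\|_\op^2$ — into the Frobenius recovery guarantee defining $\mathcal{S}(C,D)$, i.e.\ to establish the inclusion $\mathcal{G}(C,D)\subseteq\mathcal{S}(C,D)$ invoked in the proof of \Cref{thm:main_thm}.
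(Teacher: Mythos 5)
Your argument is correct and is essentially identical to the paper's own proof: both combine the second inequality of \Cref{lemma:A_to_M} with the bound $\|\Delta_\bA\|_\op \le D$ from the definition of $\mathcal{A}(D)$, merely applying the two steps in the opposite order. The only addition is your explicit (and harmless) treatment of the degenerate case $\Delta_\bA = 0$, which the paper leaves implicit.
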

\begin{proof}
    By definition of $\mathcal{A}(D)$,
    \begin{equation*}
        \frac{D^2}{T-p'} \dfrac{\|\Delta_\bA\|_F^2}{\|\Delta_\bA\|_\op^2} \geq \frac{1}{T-p'} \|\Delta_\bA\|_F^2\,,
    \end{equation*}
    and the result follows by \Cref{lemma:A_to_M}.
\end{proof}

\begin{proposition}\label{prop:minimizer}
    The empirical risk minimizer $\hbA$, i.e.,
    \begin{equation}
        \label{eq:erm_condition}
        \hbA \in \argmin_{\bA \in \mathcal{A}(D)} \cL(\bA)\,,
    \end{equation}
    implies $\mathcal{L}(\hbA) \leq \mathcal{L}(\bA^\star_{p'})$, which can be rewritten as follows:
    \begin{equation*}
        \label{eq:minimizer_condition}
        \NRM{\Delta_{\hbA} L_\star E}_F^2 \leq 2 \Tr\left(E^\top L_\star^\top \Delta_{\hbA}^\top \left(I - M_{\bA^\star_{p'}}\right)  L_\star E\right)\,.
    \end{equation*}
\end{proposition}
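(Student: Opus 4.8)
The statement has two parts, an optimality inequality followed by an algebraic rewriting, and I would treat them in that order.

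For the optimality inequality, the point is that $\bA^\star_{p'}$ is a feasible competitor for $\argmin_{\bA\in\cA(D)}\cL(\bA)$. By \Cref{def:matrix_notation} one has $\Delta_{\bA^\star_{p'}} = M_{\bA^\star_{p'}}-M_{\bA^\star_{p'}} = 0$, so $\NRM{\Delta_{\bA^\star_{p'}}}_\op = 0\le D$; the blocks of $\bA^\star_{p'}$ beyond index $p'$ vanish by construction; and $\rank(A_k^\star)\le r$ for every $k\le p'$ by \Cref{hyp:low_rank}. Hence $\bA^\star_{p'}\in\cA(D)$, and since $\hbA$ minimizes $\cL$ over $\cA(D)$, it follows that $\cL(\hbA)\le\cL(\bA^\star_{p'})$. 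This is the sense in which the problem stays ``well specified'' even when $p'<p$: the feasible reference point is the truncated $\bA^\star_{p'}$, not $\bA^\star$ itself.

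For the rewriting I would use two structural facts recalled earlier. First, the trajectory is produced by $L_\star$: stacking over time, $(x_1^{(n)},\dots,x_T^{(n)})^\top = L_\star\xi^{(n)}$ for each $n$ (the noise-to-state map of \Cref{def:L_star}). Second, any feasible $\bA$ has vanishing blocks beyond index $p'$, so \Cref{def:matrix_notation} gives $M_{\bA}=M_{\bA^\star_{p'}}+\Delta_{\bA}$. Reading off the block rows $t=p,\dots,T$ that enter $\cL$, the stacked residual vector $\bigl(x_t^{(n)}-\sum_k A_k x_{t-k}^{(n)}\bigr)_t$ therefore equals $(I-M_{\bA})L_\star\xi^{(n)} = u^{(n)}-\Delta_{\bA}L_\star\xi^{(n)}$, where $u^{(n)}:=(I-M_{\bA^\star_{p'}})L_\star\xi^{(n)}$; for $\bA=\bA^\star_{p'}$ it reduces to $u^{(n)}$ since $\Delta_{\bA^\star_{p'}}=0$. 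Writing $E=(\xi^{(1)},\dots,\xi^{(N)})$, expanding $\sum_n\bigl(\NRM{u^{(n)}-\Delta_{\bA}L_\star\xi^{(n)}}^2-\NRM{u^{(n)}}^2\bigr)$, and using that the cross term is a scalar (hence equal to its transpose) to carry the transpose onto $\Delta_{\bA}$, I obtain
\begin{equation*}
  NT\bigl(\cL(\bA)-\cL(\bA^\star_{p'})\bigr) \;=\; \NRM{\Delta_{\bA}L_\star E}_F^2 \;-\; 2\,\Tr\bigl(E^\top L_\star^\top \Delta_{\bA}^\top\,(I-M_{\bA^\star_{p'}})\,L_\star E\bigr).
\end{equation*}
Specializing to $\bA=\hbA$ and combining with $\cL(\hbA)\le\cL(\bA^\star_{p'})$ gives exactly the claimed inequality.

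I do not expect a real obstacle here: the computation is essentially bookkeeping. The two points that require care are \textbf{(i)} aligning the block-row index ranges, i.e., confirming that $(I-M_{\bA})L_\star\xi^{(n)}$ restricted to the rows $t=p,\dots,T$ reproduces precisely the residuals summed in $\cL$, which relies on the banded block structure of $M_{\bA}$ (\Cref{def:M_A}) and the block-triangular structure of $L_\star$ (\Cref{def:L_star}); and \textbf{(ii)} the feasibility check $\bA^\star_{p'}\in\cA(D)$, which is where \Cref{hyp:low_rank} and the identity $\Delta_{\bA^\star_{p'}}=0$ enter.
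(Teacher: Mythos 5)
Your proposal is correct and follows essentially the same route as the paper: both reduce to the identity $NT\bigl(\cL(\bA)-\cL(\bA^\star_{p'})\bigr)=\NRM{\Delta_{\bA}L_\star E}_F^2-2\Tr\bigl(E^\top L_\star^\top\Delta_{\bA}^\top(I-M_{\bA^\star_{p'}})L_\star E\bigr)$ by writing the residual via $X=L_\star\xi$ and splitting $M_{\bA}-I$ through $M_{\bA^\star_{p'}}$, then invoke feasibility of $\bA^\star_{p'}$ (which, as you note, is immediate since $\Delta_{\bA^\star_{p'}}=0$ in the appendix's definition of $\cA(D)$). The paper expands $\NRM{(M_{\bA}-I)L_\star E}_F^2$ directly into three terms rather than completing the square around $u^{(n)}$, but this is the same computation up to a sign convention.
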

\begin{proof}
    By \Cref{lemma:inverse},
    \begin{equation*}
        \begin{split}
            NT \mathcal{L}(\bA) &= \NRM{\left(M_{\bA} - I\right) L_\star E}_2^2 =  \NRM{\left[\left(M_{\bA} - M_{\bA^\star}\right) L_\star - I\right] E}_F^2 \\
            &= \NRM{\left[\left(M_{\bA} - M_{\bA^\star_{p'}}\right) L_\star + \left(M_{\bA^\star_{p'}} - M_{\bA^\star}\right) L_\star - I\right] E}_F^2 \\
            &= \NRM{\left(M_{\bA} - M_{\bA^\star_{p'}}\right) L_\star E}_F^2 + \NRM{\left[\left(M_{\bA^\star_{p'}} - M_{\bA^\star}\right) L_\star - I\right] E}_F^2 \\
            &\quad\quad + 2 \Tr\left(E^\top L_\star^\top \left(M_{\bA} - M_{\bA^\star_{p'}}\right)^\top \left[\left(M_{\bA^\star_{p'}} - M_{\bA^\star}\right) L_\star - I\right] E\right) \\
            &= \NRM{\left(M_{\bA} - M_{\bA^\star_{p'}}\right) L_\star E}_F^2 + \NRM{\left(M_{\bA^\star_{p'}} - I\right) L_\star E}_F^2 \\
            &\quad\quad + 2 \Tr\left(E^\top L_\star^\top \left(M_{\bA} - M_{\bA^\star_{p'}}\right)^\top \left(M_{\bA^\star_{p'}} - I\right) L_\star E\right)\,.
        \end{split}
    \end{equation*}
    Then, we have
    \begin{equation}
        \label{eq:loss_diff}
        \begin{split}
            NT \left(\mathcal{L}(\bA) - \mathcal{L}(\bA^\star_{p'})\right) &= \NRM{\left(M_{\bA} - M_{\bA^\star_{p'}}\right) L_\star E}_F^2 \\
            &\quad\quad + 2 \Tr\left(E^\top L_\star^\top \left(M_{\bA} - M_{\bA^\star_{p'}}\right)^\top \left(M_{\bA^\star_{p'}} - I\right)  L_\star E\right)\,,            
        \end{split}
    \end{equation}
    which implies the desired result for any $\bA$ that satisfies $\mathcal{L}(\bA) \leq \mathcal{L}(\bA^\star_{p'})$.
\end{proof}
\begin{corollary}\label{coro:minimizer}
    Observe that for $p' = p$, \Cref{prop:minimizer} reads
    \begin{equation*}
         \NRM{\Delta_{\hbA} L_\star E}_F^2 \leq 2 \Tr\left(E^\top \Delta_{\hbA} L_\star E\right)\,.
    \end{equation*}
    For $p' < p$, one can write the following relaxed condition for any $\hbA$:
    \begin{equation*}
        \begin{split}
            \NRM{\Delta_{\hbA} L_\star E}_F^2 &\leq 2 \NRM{\left(I - M_{\bA^\star_{p'}}\right)  L_\star}_\op \Tr\left(E^\top \Delta_{\hbA} L_\star E\right) \\
            &= 2 \NRM{I_{Td} + \left(M_{\bA^\star} - M_{\bA^\star_{p'}}\right)  L_\star}_\op  \Tr\left(E^\top \Delta_{\hbA} L_\star E\right) \\
            &\leq 2 \left(1 + \NRM{\left(M_{\bA^\star} - M_{\bA^\star_{p'}}\right)  L_\star}_\op\right) \Tr\left(E^\top \Delta_{\hbA} L_\star E\right) \\            
            &= 2 \eta \Tr\left(E^\top \Delta_{\hbA} L_\star E\right)\,. 
        \end{split}
    \end{equation*}
\end{corollary}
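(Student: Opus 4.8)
The statement follows directly from \Cref{prop:minimizer}, so the plan is to take its conclusion
\[
\NRM{\Delta_{\hbA} L_\star E}_F^2 \leq 2 \Tr\left(E^\top L_\star^\top \Delta_{\hbA}^\top \left(I - M_{\bA^\star_{p'}}\right) L_\star E\right)
\]
as the starting point and then simplify or bound its right-hand side depending on whether $p' = p$ or $p' < p$. The only structural ingredients needed are the resolvent identity $(I - M_{\bA^\star})L_\star = I_{Td}$ from \Cref{lemma:inverse}, together with the transpose invariance of the trace, which gives $\Tr(E^\top L_\star^\top \Delta_{\hbA}^\top E) = \Tr(E^\top \Delta_{\hbA} L_\star E)$.

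For the case $p' = p$, I would first observe that padding $\bA^\star$ beyond index $p$ adds nothing, so $\bA^\star_{p'} = \bA^\star$ and hence $M_{\bA^\star_{p'}} = M_{\bA^\star}$. Then \Cref{lemma:inverse} collapses the inserted factor, $(I - M_{\bA^\star_{p'}}) L_\star = (I - M_{\bA^\star}) L_\star = I_{Td}$, so the right-hand side of \Cref{prop:minimizer} becomes $2\Tr(E^\top L_\star^\top \Delta_{\hbA}^\top E)$; applying trace-transpose invariance then yields exactly $\NRM{\Delta_{\hbA} L_\star E}_F^2 \leq 2 \Tr(E^\top \Delta_{\hbA} L_\star E)$, which is the claimed $p'=p$ identity.

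For the case $p' < p$, I would read the right-hand side as the bilinear form $\langle \Delta_{\hbA} L_\star E,\, (I - M_{\bA^\star_{p'}}) L_\star E\rangle$ and bound the inserted operator $(I - M_{\bA^\star_{p'}}) L_\star$ by its operator norm, obtaining the first displayed inequality
\[
\Tr\!\left(E^\top L_\star^\top \Delta_{\hbA}^\top (I - M_{\bA^\star_{p'}}) L_\star E\right) \le \NRM{(I - M_{\bA^\star_{p'}}) L_\star}_\op \,\Tr(E^\top \Delta_{\hbA} L_\star E)\,.
\]
The remaining three lines are then bookkeeping: by \Cref{lemma:inverse} one splits $I - M_{\bA^\star_{p'}} = (I - M_{\bA^\star}) + (M_{\bA^\star} - M_{\bA^\star_{p'}})$ and uses $(I - M_{\bA^\star})L_\star = I_{Td}$ to rewrite the inserted operator as $I_{Td} + (M_{\bA^\star} - M_{\bA^\star_{p'}}) L_\star$ (the equality in line two); the triangle inequality gives $\NRM{I_{Td} + (M_{\bA^\star} - M_{\bA^\star_{p'}}) L_\star}_\op \le 1 + \NRM{(M_{\bA^\star} - M_{\bA^\star_{p'}}) L_\star}_\op$ (line three); and the definition of $\eta$ for $p' < p$ identifies this last quantity with $\eta$, since $1 + \NRM{(M_{\bA^\star} - M_{\bA^\star_{p'}}) L_\star}_\op \ge 1$ (line four).

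The main obstacle is precisely the first inequality in the $p' < p$ case: one must extract the operator norm of $(I - M_{\bA^\star_{p'}}) L_\star$ from the quadratic form while retaining exactly the quantity $\Tr(E^\top \Delta_{\hbA} L_\star E)$ on the right, since this is the term $Y_{\hbA}$ that the later Freedman argument controls. A naive Cauchy--Schwarz would instead produce $\NRM{\Delta_{\hbA} L_\star E}_F\, \NRM{(I - M_{\bA^\star_{p'}}) L_\star}_\op\, \NRM{E}_F$, introducing a spurious $\NRM{E}_F$; the care lies in performing the operator-norm extraction so that the target trace, rather than a Frobenius norm of the noise, survives on the right-hand side. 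Once this step is secured, everything else reduces to the resolvent identity of \Cref{lemma:inverse} and the triangle inequality.
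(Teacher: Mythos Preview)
Your plan matches the paper's own treatment exactly: the corollary in the paper has no separate proof, and the displayed chain of (in)equalities is meant to be self-evident bookkeeping starting from \Cref{prop:minimizer}. Your handling of the $p'=p$ case is complete and correct.

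For $p'<p$, you have put your finger on a real issue. The passage from \Cref{prop:minimizer} to the first displayed line requires
\[
\Tr\!\bigl(E^\top L_\star^\top \Delta_{\hbA}^\top B E\bigr) \;\le\; \NRM{B}_\op\, \Tr\!\bigl(E^\top L_\star^\top \Delta_{\hbA}^\top E\bigr),\qquad B=(I-M_{\bA^\star_{p'}})L_\star,
\]
i.e.\ $\langle \Delta_{\hbA}L_\star E,\, BE\rangle_F \le \NRM{B}_\op\,\langle \Delta_{\hbA}L_\star E,\, E\rangle_F$. This is \emph{not} a valid operator-norm inequality: the bound $|\langle X, BE\rangle|\le \NRM{B}_\op \NRM{X}_F\NRM{E}_F$ is the best one can extract generically, and there is no reason the bilinear form $\langle X, E\rangle_F$ should dominate $\langle X, BE\rangle_F/\NRM{B}_\op$ (it can even have the wrong sign, since $\Delta_{\hbA}L_\star$ is strictly lower block-triangular and hence $\Tr(E^\top \Delta_{\hbA}L_\star E)$ is a zero-mean cross term). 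The paper displays this step without comment, so what you have identified is a gap in the paper's own argument rather than a flaw in your approach. The remaining three lines --- rewriting $B=I_{Td}+(M_{\bA^\star}-M_{\bA^\star_{p'}})L_\star$ via \Cref{lemma:inverse}, the triangle inequality, and the identification with $\eta$ --- are exactly as you describe and are unproblematic.
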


\subsection{Lower and upper isometries}

In \Cref{thm:isometry}, we present uniform bounds on $\|\Delta_{\bA} L_\star E\|_F^2$ and $\Tr(E^\top \Delta_\bA L_\star E)$ in terms of $\|\Delta_\bA\|_F$.
In order to establish these bounds, we first start with point-wise bounds in \Cref{lem:lower_bound,lem:upper_bound} that rely on Hanson-Wright inequality for bounding the deviation of quadratic forms.
Then, we use \Cref{lemma:noise,lemma:disc_lower,lemma:disc_upper} with a discretization argument to establish uniform isometries.
Finally, with \Cref{coro:isometry}, we have a uniform control over the range of both $\|\Delta_{\bA} L_\star E\|_F^2$ and $\Tr(E^\top \Delta_\bA L_\star E)$.

\begin{theorem}
    \label{thm:isometry}
    Let $\delta > 0$ be small and fixed.
    Then, there exists a constant $1 \leq C(\delta) = \cO(\ln(1/\delta))$ such that the following holds uniformly for all $\bA \in \cA(D) \setminus \cG(C, D)$ and $C \geq C(\delta)$:
    \begin{equation*}
        \begin{split}
        \|\Delta_{\bA} L_\star E\|_F^2 &\geq \frac{\sigma^2}{8} \sigma_{\min}(L_\star)^2 N \|\Delta_\bA\|_F^2\,, \\
        \Tr(E^\top \Delta_\bA L_\star E) &\leq \sigma^2 \|L_\star\|_\op \sqrt{C \tau p' d r N} \|\Delta_\bA\|_F \,,
        \end{split}
    \end{equation*}
    with probability at least $1 - \delta$.
\end{theorem}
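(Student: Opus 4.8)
The plan is to obtain both isometries from \emph{pointwise} tail bounds for the two quadratic forms in $E$ and then lift these to a uniform statement over $\cA(D)\setminus\cG(C,D)$ by a discretization (net) argument. Everything is anchored by two elementary identities: $\E\bigl[\|\Delta_\bA L_\star E\|_F^2\bigr]=\sigma^2 N\|\Delta_\bA L_\star\|_F^2\ge\sigma^2 N\sigma_{\min}(L_\star)^2\|\Delta_\bA\|_F^2$, and $\E\bigl[\Tr(E^\top\Delta_\bA L_\star E)\bigr]=\sigma^2 N\Tr(\Delta_\bA L_\star)=0$, the last equality because $\Delta_\bA=M_{\bA-\bA^\star_{p'}}$ is strictly lower block-triangular while $L_\star$ is lower block-triangular with identity diagonal (\Cref{lemma:inverse}), so $\Delta_\bA L_\star$ has zero diagonal. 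Hence the lower isometry is a lower-tail statement around a positive mean and the upper one a deviation-from-zero statement.

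For a \emph{fixed} $\bA$, I would apply \Cref{thm:hanson-wright} to $\mathrm{vec}(E)\in\R^{TdN}$ with quadratic-form matrices $I_N\otimes(L_\star^\top\Delta_\bA^\top\Delta_\bA L_\star)$ and $I_N\otimes(\Delta_\bA L_\star)$ respectively. For the upper term, at the deviation level $\sigma^2\|L_\star\|_\op\sqrt{C\tau p'drN}\,\|\Delta_\bA\|_F$ the two Hanson--Wright branches are of order $\min\bigl(C\tau p'dr,\ \sqrt{C\tau p'drN}\,\|\Delta_\bA\|_F/\|\Delta_\bA\|_\op\bigr)$; since $\bA\notin\cG(C,D)$ forces $\|\Delta_\bA\|_F^2/\|\Delta_\bA\|_\op^2\ge C\eta^2\tau p'dr/N$, this is at least of order $C\tau p'dr$. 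The lower bound is the analogous lower-tail estimate for the Gram matrix, using $\|\Delta_\bA L_\star\|_F^2\ge\sigma_{\min}(L_\star)^2\|\Delta_\bA\|_F^2$ together with the block-triangular structure of $\Delta_\bA L_\star$ to keep the tail exponent at the scale of the stable rank of $\Delta_\bA$; once more the exclusion of $\cG$ is what makes this exponent exceed the log-cardinality of the net below. A $\ln(2dNT)$ factor is picked up here from the sub-Gaussian truncation needed to replace raw fourth moments by the operator-norm scale (this is the role of \Cref{coro:supremum}), and is absorbed into the $(1+\ln(\cdots))^3$ inside $\tau$.

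For the uniform step, note that, modulo homogeneity, it suffices to treat $\|\Delta_\bA\|_F=1$, and every such $\Delta_\bA$ equals $M_{\bA-\bA^\star_{p'}}$ with $\bA-\bA^\star_{p'}\in\R^{d\times p'd}$ having blocks of rank at most $2r$; hence \Cref{corollary:covering}, applied with rank $2r$, provides an $\epsilon$-net $\cN$ of the admissible $\Delta_\bA$'s with $\ln|\cN|=O\bigl(p'dr\ln(Dp'/\epsilon)\bigr)$. A union bound of the two pointwise estimates over $\cN$ then goes through, since $C\tau p'dr$ dominates $\ln|\cN|+\ln(1/\delta)$ as soon as $\tau\gtrsim\ln(Dp'/\epsilon)$ and $C\gtrsim\ln(1/\delta)$. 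Finally, for an arbitrary $\bA\in\cA(D)\setminus\cG(C,D)$ take the nearest net point $\bA_0$ and bound the discretization gaps $\bigl|\,\|\Delta_\bA L_\star E\|_F^2-\|\Delta_{\bA_0}L_\star E\|_F^2\,\bigr|$ and $\bigl|\,\Tr(E^\top\Delta_\bA L_\star E)-\Tr(E^\top\Delta_{\bA_0}L_\star E)\,\bigr|$ by $O\bigl(\epsilon\,\|L_\star\|_\op^2\,\|E\|_\op^2\bigr)$ using the sub-Gaussian matrix bound $\|E\|_\op\lesssim\sigma(\sqrt{Td}+\sqrt N)$; choosing $\epsilon$ polynomially small in $p',d,N,T$ and inversely polynomial in $\kappa$ makes this gap at most half of the lower-isometry target $\tfrac14\sigma^2 N\sigma_{\min}(L_\star)^2\|\Delta_\bA\|_F^2$ and negligible against the upper one. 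This is exactly why $\ln(1/\epsilon)\asymp\ln\kappa$, why $\tau$ carries the factor $(1+\ln\kappa)$, and why the constant lost in the transfer degrades the final bound to $1/8$. The routine moment and discretization steps are \Cref{lem:lower_bound,lem:upper_bound,lemma:noise,lemma:disc_lower,lemma:disc_upper}; collecting them with the net event and the $\|E\|_\op$ event and renaming constants yields the claim with $C(\delta)=O(\ln(1/\delta))$.

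The main obstacle is the uniform \emph{lower} isometry: its target involves $\sigma_{\min}(L_\star)^2$, which may be exponentially small in $T$, so one has to prove it uniformly over a net of cardinality $\exp\bigl(O(p'dr\ln(Dp'\kappa))\bigr)$ while the natural lower-tail exponent of $\|\Delta_\bA L_\star E\|_F^2$ around its mean is governed by the stable rank of $\Delta_\bA L_\star$ rather than that of $\Delta_\bA$. Reconciling these — showing, via the stable-rank restriction defining $\cG$ and the block-triangular, Toeplitz-like structure of $\Delta_\bA L_\star$, that the exponent really is of order $C\tau p'dr$ and not a $\kappa$-degraded version, and that $\|L_\star\|_\op$ ultimately survives only inside $\ln\kappa$ — is the technically heaviest part. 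The cross term is comparatively easy because its mean is zero, but it still requires both Hanson--Wright regimes to beat $\ln|\cN_\epsilon|$ simultaneously, the operator regime again being rescued only by $\bA\notin\cG$.
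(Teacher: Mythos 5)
Your overall architecture (pointwise Hanson--Wright bounds, homogeneity/normalization, an $\epsilon$-net of size $\exp(O(p'dr\ln(Dp'/\epsilon)))$, discretization with a high-probability bound on the noise matrix, and the $\cG$-exclusion supplying the stable-rank lower bound) is the same as the paper's, and your treatment of the cross term $\Tr(E^\top\Delta_\bA L_\star E)$ is sound: there the operator-norm branch of Hanson--Wright sees $\|\Delta_\bA L_\star\|_\op\le\|\Delta_\bA\|_\op\|L_\star\|_\op$ while the deviation level carries an explicit $\|L_\star\|_\op$, so the $L_\star$ factors cancel and the exponent is $\gtrsim C\tau p'dr$ exactly as you say.

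The genuine gap is in the lower isometry, and you have correctly located but not closed it. Applying \Cref{thm:hanson-wright} with the matrix $I_N\otimes(L_\star^\top\Delta_\bA^\top\Delta_\bA L_\star)$ gives a tail exponent governed by the stable rank of $\Delta_\bA L_\star$, which is only bounded below by $\sigma_{\min}(L_\star)^2\|\Delta_\bA\|_F^2/(\|L_\star\|_\op^2\|\Delta_\bA\|_\op^2)=\kappa^{-2}\,\|\Delta_\bA\|_F^2/\|\Delta_\bA\|_\op^2$; since $\kappa$ can be exponential in $T$ (strictly stable systems), this exponent cannot beat $\ln|\cN_\epsilon|\asymp p'dr\ln(Dp'/\epsilon)$, and the theorem only tolerates $\kappa$ logarithmically (inside $\tau$). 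Your proposed rescue via the ``block-triangular, Toeplitz-like structure of $\Delta_\bA L_\star$'' is left entirely unproven, and it is not the mechanism that works. The paper's resolution is simpler and bypasses the issue: bound $\|\Delta_\bA L_\star E\|_F^2\ge\sigma_{\min}(L_\star)^2\|\Delta_\bA E\|_F^2$ \emph{deterministically first}, and only then apply Hanson--Wright to $\|\Delta_\bA E\|_F^2$, whose quadratic-form matrix is $\tilde\Delta_\bA^\top\tilde\Delta_\bA$ with no $L_\star$ inside; the exponent is then $\nu^2 N\|\Delta_\bA\|_F^2/\|\Delta_\bA\|_\op^2\ge\nu^2 C\eta^2\tau p'dr$ by the definition of $\cG$, with $\kappa$ entering only through the choice of $\epsilon$ (hence through $\tau$, logarithmically). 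See \Cref{lem:lower_bound}. Without this step (or a worked-out substitute), your proof of the first inequality does not go through.
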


\begin{proof}    
    Let $\nu_1, \nu_2 \in (0,1)$ be arbitrary.
    By \Cref{lem:upper_bound,lem:lower_bound}, with probability at least $1 - \delta_1 - \delta_2$, the following holds:
    \begin{equation}\label{eq:before_disc}
        \begin{split}
            \|\Delta_{\bA} L_\star E\|_F^2 & \geq \sigma^2 \left(1 - c^2 \nu_1\right) \sigma_{\min}(L_\star)^2 N \|\Delta_\bA\|_F^2\,, \\
            \Tr(E^\top \Delta_\bA L_\star E) &\leq \sigma^2 c^2 \nu_2 \|L_\star\|_\op \sqrt{C \eta^2 \tau p'dr N} \|\Delta_\bA\|_F\,,
        \end{split}
    \end{equation} 
    for any arbitrary $\nu_1, \nu_2 \in (0,1)$ and $\bA \in \mathcal{A}(D) \setminus \mathcal{G}(C, D)$ where
    \begin{equation*}
        \delta_1 = \exp\left(-C_{HW} C \nu_1^2 \eta^2 \tau p'dr\right)\,, \quad \delta_2 = \exp\left(-C_{HW} C \nu_2^2 \eta^2 \tau p' d r\right)\,.
    \end{equation*}
    Let $\mathcal{B}(C, D)$ be the normalized $\mathcal{A}(D) \setminus \mathcal{G}(C, D)$,
    \begin{equation*}
        \mathcal{B}(C, D) = \set{\frac{\bA}{\|\bA\|_F} \mid \bA \in  \mathcal{A}(D) \setminus \mathcal{G}(C, D)}\,.
    \end{equation*}
    Then, since the conditions are homogeneous, \Cref{eq:before_disc} holds for any $\bA \in \mathcal{B}(C, D)$ with probability $1 - \delta_1 - \delta_2$.

    Let $\mathcal{N}_{\epsilon}(D)$ be $\epsilon$-net over the set $\mathcal{B}(C, D)$.
    Hence, with probability at least
    \begin{align*}
        1 - \delta_0 = 1 -  |\mathcal{N}_\epsilon(D)|(\delta_1 + \delta_2),
    \end{align*}
    the condition \Cref{eq:before_disc} holds $\forall \bA \in \mathcal{N}_\epsilon(D)$.
    Moreover, by \Cref{lemma:noise,lemma:disc_lower,lemma:disc_upper}, we have
    \begin{equation*}
        \begin{split}
        \|\Delta_{\bA} L_\star E\|_F^2 &\geq \frac{1}{2} \sigma^2 \left(1 - c^2 \nu_1\right) \sigma_{\min}(L_\star)^2 N \|\Delta_\bA\|_F^2 - \sigma^2 (1 + c^2 \nu_3) \epsilon^2 \|L_\star\|_\op^2 p'dNT\, \\ 
        \Tr(E^\top \Delta_\bA L_\star E) &\leq \sigma^2 c^2 \nu_2 \|L_\star\|_\op \sqrt{C \eta^2 \tau p'dr N} \|\Delta_\bA\|_F + \sigma^2 (1 + c^2 \nu_3) \epsilon \|L_\star\|_\op \sqrt{p'}dNT\,,     
        \end{split}
    \end{equation*}
    $\forall \bA \in \mathcal{B}(C, D)$ with probability at least $1 - \delta_0 - \delta_3$ where
    \begin{equation*}
        \delta_3 = \exp\left(-C_{HW} \min\set{\nu_3, \nu_3^2} dNT\right)\,.
    \end{equation*}
    Recall that
    \begin{equation*}
        \|\Delta_\bA\|_F^2 \geq (T-p)\|\bA\|_F^2 = T-p\,,
    \end{equation*}
    for any $\bA \in \mathcal{B}(C, D)$ due to the normalization.
    
    Setting $\nu_1 = \frac{1}{4c^2}$, $\nu_2 = \frac{1}{2c^2}$ and $\epsilon$ such that
    \begin{equation*}
        \epsilon = \frac{1}{2\left(1 + c^2 \nu_3\right)} \sqrt{\dfrac{T-p'}{T}} \min\set{\frac{1}{\sigma_\cond(L_\star)} \sqrt{\dfrac{1}{p'd}}, \sqrt{\dfrac{C \eta^2 \tau r}{d N T}}}\,,
    \end{equation*}
    and we have the following:
    \begin{equation*}
    \begin{split}
        \frac{1}{2} \sigma^2 \left(1 - c^2\nu_1\right) \sigma_{\min}(L_\star)^2 N \|\Delta_\bA\|_F^2 &- \sigma^2 (1 + c^2 \nu_3) \epsilon^2 \|L_\star\|_\op^2 p' dNT \geq \frac{\sigma^2}{8} \sigma_{\min}(L_\star)^2 N \|\Delta_\bA\|_F^2\,, \\
        \sigma^2 c^2 \nu_2 \|L_\star\|_\op \sqrt{C \eta^2 \tau p'dr N} \|\Delta_\bA\|_F &+ \sigma^2 (1 + c^2 \nu_3) \epsilon \|L_\star\|_\op \sqrt{p'} d N T \\
        \quad &\leq \sigma^2 \|L_\star\|_\op \sqrt{C \eta^2 \tau p' d r N} \|\Delta_\bA\|_F\,,
    \end{split}
    \end{equation*}
    $\forall \bA \in \mathcal{B}(C, D)$ with probability $1 - \delta_0 - \delta_3$.

By homogeneity, this implies that $\forall \bA \in \mathcal{A}(D) \setminus \mathcal{G}(C, D)$, with probability at least $1 - \delta_0 - \delta_3$,
\begin{equation*}
    \begin{split}
    \|\Delta_{\bA} L_\star E\|_F^2 &\geq \frac{\sigma^2}{8} \sigma_{\min}(L_\star)^2 N \|\Delta_\bA\|_F^2\,, \\
    \Tr(E^\top \Delta_\bA L_\star E) &\leq \sigma^2 \|L_\star\|_\op \sqrt{C \eta^2 \tau p' d r N} \|\Delta_\bA\|_F \,.
    \end{split}
\end{equation*}

Lastly, we need ensure that $\delta_0 + \delta_3 \leq \delta$.
First, $\delta_3 \leq \delta / 2$ can be achieved with the choice of
\begin{equation*}
    \nu_3(\delta/2) = \max\set{1, \dfrac{\ln \frac{1}{\delta/2}}{C_{HW} dNT}} = \cO\left(\ln(1/\delta)\right)\,.
\end{equation*}
Moreover, the $\epsilon$-net size is bounded as follows:
\begin{equation*}
    |\cN_\eps(D)|\leq \exp\left(9 p'dr \ln \frac{p'}{\epsilon}\right)\,,
\end{equation*}
by \Cref{corollary:covering}.
Then,
\begin{equation*}
    \begin{split}
        \delta_0 &= |\mathcal{N}_\epsilon(D)|\left(\delta_1 + \delta_2\right) \\
        &\leq \exp\left(- \frac{5}{16 c^4} C_{HW} C \eta^2 \tau p'dr + 9 p'dr \ln \frac{p'}{\epsilon}\right) \,.        
    \end{split}
\end{equation*}
Thus, $\delta_0 \leq \delta/2$ can be achieved with the choice of
\begin{equation}\label{eq:constant_isometry}
    \begin{split}
        C \geq C(\delta) &= \frac{16 c^4}{5 C_{HW} \eta^2 \tau} \left(9\ln \frac{p'}{\epsilon} + \frac{\ln 2/\delta}{p' d r}\right)\,. \\     
    \end{split}
\end{equation}
Note that $\eta, \tau \geq 1$ and the latter term is $\cO(\ln(1/\delta))$.
For the first term, crudely upper bound $\ln \frac{p'}{\epsilon}$ by using $x \geq \ln x + 1$ for any $x \geq 0$:
\begin{equation*}
    \ln \frac{p'}{\epsilon} \leq \ln \sqrt{\dfrac{p'^2dNT^2}{T-p'}} + \ln \sigma_{\cond}(L_\star) + \ln 2 (1 + c^2 \nu_3(\delta/2)) \leq \tau + \cO(\ln(1/\delta))\,.
\end{equation*}
Therefore, the definition in \Cref{eq:constant_isometry} verifies $C(\delta) = \cO(\ln(1/\delta))$.

\end{proof}

\begin{corollary}
    \label{coro:isometry}
    For any small $\delta > 0$, there exists a constant $1 \leq C(\delta) = \cO(\ln(1/\delta))$ such that the following holds uniformly for all $\bA \in \cA(D) \setminus \cG(C(\delta), D)$ and $C \geq C(\delta)$:
    \begin{equation*}
        \begin{split}
        \inf_{\bA \in \cA(D) \setminus \cG(C, D)} \|\Delta_{\bA} L_\star E\|_F^2 &\geq \frac{\sigma^2}{8} \sigma_{\min}(L_\star)^2 C(\delta) D^2 \eta^2 \tau p'dr\,, \\
        \sup_{\bA \in \cA(D) \setminus \cG(C, D)} \Tr(E^\top \Delta_\bA L_\star E) &\leq \sigma^2 \|L_\star\|_\op \sqrt{C(\delta) D^2 \eta^2 \tau p' d^2 r N T}\,. \\
        \end{split}
    \end{equation*}
\end{corollary}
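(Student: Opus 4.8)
The plan is to read \Cref{coro:isometry} off \Cref{thm:isometry} directly. That theorem already supplies, on a single event of probability at least $1-\delta$, the two pointwise inequalities $\|\Delta_\bA L_\star E\|_F^2\ge\tfrac{\sigma^2}{8}\sigma_{\min}(L_\star)^2 N\|\Delta_\bA\|_F^2$ and $\Tr(E^\top\Delta_\bA L_\star E)\le\sigma^2\|L_\star\|_\op\sqrt{C(\delta)\eta^2\tau p'drN}\,\|\Delta_\bA\|_F$, valid uniformly over $\bA\in\cA(D)\setminus\cG(C(\delta),D)$. So on that same event all that remains is to replace $\|\Delta_\bA\|_F$ by its extreme values over the feasible set: an upper bound on $\|\Delta_\bA\|_F$ fed into the trace inequality produces the claimed supremum, and a lower bound fed into the lower isometry produces the claimed infimum. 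Monotonicity in $C$ then closes the argument, since enlarging $C\ge C(\delta)$ only enlarges $\cG(C,D)$ and hence shrinks $\cA(D)\setminus\cG(C,D)$, so both bounds stated at $C(\delta)$ persist for all such $C$.

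For the supremum I would bound $\|\Delta_\bA\|_F$ by \Cref{lemma:A_to_M} together with a trivial rank count. That lemma gives $\|\Delta_\bA\|_F^2\le T\|\bA-\bA^\star_{p'}\|_F^2$ and $\|\bA-\bA^\star_{p'}\|_\op\le\|M_{\bA-\bA^\star_{p'}}\|_\op=\|\Delta_\bA\|_\op\le D$, where $M_{\bA-\bA^\star_{p'}}=\Delta_\bA$ because matrices in $\cA(D)$ satisfy $A_{p'+1}=\dots=A_p=0$ and $M$ is linear. Since $\bA-\bA^\star_{p'}\in\R^{d\times pd}$ has rank at most $d$, these give $\|\bA-\bA^\star_{p'}\|_F^2\le dD^2$ and hence $\|\Delta_\bA\|_F^2\le TdD^2$. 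Substituting into the trace inequality collapses $\sqrt{C(\delta)\eta^2\tau p'drN}\cdot\sqrt{TdD^2}$ to $\sqrt{C(\delta)D^2\eta^2\tau p'd^2rNT}$, which is exactly the claimed supremum bound.

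For the infimum I would invoke the defining property of the complement of $\cG$: $\bA\notin\cG(C(\delta),D)$ means precisely $\|\Delta_\bA\|_F^2>C(\delta)\eta^2\tau\tfrac{p'dr}{N}\|\Delta_\bA\|_\op^2$. Converting this ratio bound into an absolute lower bound on $\|\Delta_\bA\|_F^2$ requires $\|\Delta_\bA\|_\op$ to be of order $D$, and this is the only genuinely delicate step: the two inequalities of \Cref{thm:isometry} are homogeneous in $\Delta_\bA$ (degrees $2$ and $1$) and the ratio $\|\Delta_\bA\|_F^2/\|\Delta_\bA\|_\op^2$ is scale-invariant, so — after a homogenization reducing the relevant extremization to the configurations where the operator-norm constraint is active, $\|\Delta_\bA\|_\op=D$ — the ratio bound becomes $\|\Delta_\bA\|_F^2>C(\delta)D^2\eta^2\tau\tfrac{p'dr}{N}$. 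Plugging this into the lower isometry gives $\tfrac{\sigma^2}{8}\sigma_{\min}(L_\star)^2 N\cdot C(\delta)D^2\eta^2\tau\tfrac{p'dr}{N}=\tfrac{\sigma^2}{8}\sigma_{\min}(L_\star)^2 C(\delta)D^2\eta^2\tau p'dr$, as claimed. Thus the whole corollary is a substitution into \Cref{thm:isometry}; I expect the only step needing care to be this reduction to the active operator-norm constraint, since the raw infimum over all of $\cA(D)\setminus\cG(C(\delta),D)$ is not by itself bounded away from zero (one can approach $\bA^\star_{p'}$ along a fixed high-stable-rank direction), and everything else is bookkeeping with \Cref{lemma:A_to_M} and the trivial rank bound.
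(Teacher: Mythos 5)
Your supremum bound is exactly the paper's intended argument: \Cref{thm:isometry} combined with $\|\Delta_\bA\|_F^2\le T\|\bA-\bA^\star_{p'}\|_F^2\le TdD^2$ from \Cref{lemma:A_to_M} and the rank-$d$ count, and the monotonicity in $C$ is as you say. The problem is entirely in the infimum, and you have correctly diagnosed it --- but your proposed repair does not work, and the infimum as literally stated cannot be proved. Membership in $\cA(D)\setminus\cG(C,D)$ constrains only the scale-invariant ratio $\|\Delta_\bA\|_F^2/\|\Delta_\bA\|_\op^2$, so (at least when $r=d$, where the rank constraint is vacuous) the set is stable under $\bA\mapsto\bA^\star_{p'}+\lambda(\bA-\bA^\star_{p'})$ for $\lambda\in(0,1]$: whenever it is nonempty it contains points with $\|\Delta_\bA\|_F$ arbitrarily small, and $\|\Delta_\bA L_\star E\|_F^2\le\|\Delta_\bA\|_\op^2\|L_\star\|_\op^2\|E\|_F^2\to 0$ along such a ray. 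The true infimum over $\cA(D)\setminus\cG(C,D)$ is therefore $0$; no homogenization can ``reduce the extremization to the active constraint $\|\Delta_\bA\|_\op=D$,'' because the minimized quantity is homogeneous of positive degree while the constraint set is star-shaped toward $\bA^\star_{p'}$, so the infimum is approached at the center, not on the boundary. (The paper's own one-line proof, which claims to lower bound $\|\Delta_\bA\|_F$ from the definition of $\cA(D)\setminus\cG(C(\delta),D)$, has the same hole: that definition gives $\|\Delta_\bA\|_F^2>C\eta^2\tau\frac{p'dr}{N}\|\Delta_\bA\|_\op^2$ with $\|\Delta_\bA\|_\op\le D$, which is the wrong direction for extracting the factor $D^2$.)

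The repair consistent with how the corollary is used in \Cref{app:proof_main} is to state both isometries over $\cA(D)\setminus\cS(C,D)$ rather than $\cA(D)\setminus\cG(C,D)$. Since $\cG(C,D)\subset\cS(C,D)$ by \Cref{coro:A_to_delta}, the complement of $\cS$ sits inside the complement of $\cG$, so \Cref{thm:isometry} still applies on it; and for $\bA\in\cA(D)\setminus\cS(C,D)$ the definition of $\cS$ together with \Cref{lemma:A_to_M} yields the \emph{absolute} lower bound $\|\Delta_\bA\|_F^2\ge(T-p')\|\bA-\bA^\star_{p'}\|_F^2>CD^2\eta^2\tau\frac{p'dr}{N}$, which is precisely what the claimed infimum requires. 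This substitution suffices downstream as well, since \Cref{thm:main_thm} only needs to place $\hbA$ in $\cS(C,D)$. If you insist on keeping the statement over $\cA(D)\setminus\cG(C,D)$, you must add the hypothesis $\|\Delta_\bA\|_\op\ge D$; without it the claim is false.
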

\begin{proof}
    Plug in the results from \Cref{thm:isometry} and use the definition of $\cA(D) \setminus \cG(C(\delta), D)$ to lower and upper bound $\|\Delta_\bA\|_F$.
\end{proof}

\begin{definition}
    For applying \Cref{thm:hanson-wright} in our setup, consider the following objects:
    \begin{equation*}
    \begin{split}
        \tilde{E} &= ({\xi^{(1)}}^\top, \dots, {\xi^{(N)})}^\top \in \R^{NTd}\,, \\
        \tilde{\Delta}_\bA &= \mathrm{diag}({\Delta_\bA}) \in \R^{NTd} \times \R^{NTd}\,,
    \end{split}
    \end{equation*}
    where $\mathrm{diag}(P)$ puts $P$ in the diagonal blocks of a larger diagonal matrix.
\end{definition}

\begin{lemma}
\label{lem:lower_bound}
    For any $\bA \in \mathcal{A} \setminus \mathcal{G}(C)$ and $\nu \in (0, 1)$, with probability at least
    \begin{equation*}
        1 - \exp\left(-C_{HW} C \nu^2 \eta^2 \tau p'dr\right)\,,
    \end{equation*}
    we have the following
    \begin{equation*}
        \|\Delta_{\bA} L_\star E\|_F^2 \geq \sigma^2 \left(1 - c^2 \nu\right) \sigma_{\min}(L_\star)^2 N \|\Delta_\bA\|_F^2.
    \end{equation*}
\end{lemma}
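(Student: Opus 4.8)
The plan is to recognize $\NRM{\Delta_\bA L_\star E}_F^2$ as a quadratic form in the stacked noise and apply the one-sided Hanson--Wright inequality (\Cref{thm:hanson-wright}). Writing $\tilde E = ({\xi^{(1)}}^\top,\dots,{\xi^{(N)}}^\top)^\top \in \R^{NTd}$, whose coordinates are independent, centered, $c\sigma$-sub-Gaussian, with $\E[\tilde E \tilde E^\top] = \sigma^2 I_{NTd}$, one has $\NRM{\Delta_\bA L_\star E}_F^2 = \sum_{n} {\xi^{(n)}}^\top L_\star^\top \Delta_\bA^\top \Delta_\bA L_\star\, \xi^{(n)} = \tilde E^\top P \tilde E$, where $P$ is the $NTd \times NTd$ block-diagonal matrix whose $N$ diagonal blocks all equal $L_\star^\top \Delta_\bA^\top \Delta_\bA L_\star$, so $\E[\tilde E^\top P \tilde E] = \sigma^2 \Tr(P) = \sigma^2 N \NRM{\Delta_\bA L_\star}_F^2$. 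I would first prove the sharper inequality $\NRM{\Delta_\bA L_\star E}_F^2 \geq (1 - c^2\nu)\,\sigma^2 N \NRM{\Delta_\bA L_\star}_F^2$ and only at the end invoke $\NRM{\Delta_\bA L_\star}_F^2 = \Tr(\Delta_\bA^\top \Delta_\bA\, L_\star L_\star^\top) \geq \sigma_{\min}(L_\star)^2 \NRM{\Delta_\bA}_F^2$ to recover the claim. For $\tilde E^\top P \tilde E$ to fall below $(1 - c^2\nu)\sigma^2 N\NRM{\Delta_\bA L_\star}_F^2$ it must deviate below its mean by at least $r \coloneqq c^2 \nu\, \sigma^2 N \NRM{\Delta_\bA L_\star}_F^2$, so Hanson--Wright with $K = c\sigma$ yields
\begin{equation*}
    \P\left(\NRM{\Delta_\bA L_\star E}_F^2 < (1 - c^2\nu)\sigma^2 N \NRM{\Delta_\bA L_\star}_F^2\right) \leq \exp\left(-C_{HW} \min\left(\frac{r^2}{c^4\sigma^4 \NRM{P}_F^2},\ \frac{r}{c^2 \sigma^2 \NRM{P}_\op}\right)\right)\,.
\end{equation*}

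Next I would estimate the two Hanson--Wright norms of $P$: one has $\NRM{P}_\op = \NRM{\Delta_\bA L_\star}_\op^2$ and, from $\NRM{XY}_F \leq \NRM{X}_\op\NRM{Y}_F$, $\NRM{P}_F^2 = N \NRM{L_\star^\top \Delta_\bA^\top \Delta_\bA L_\star}_F^2 \leq N \NRM{\Delta_\bA L_\star}_\op^2 \NRM{\Delta_\bA L_\star}_F^2$. Substituting these, both arguments of the $\min$ collapse---up to universal constants and the factor $\nu^2$ (the first term dominates since $\nu<1$)---to a multiple of the ``stable rank'' $N \NRM{\Delta_\bA L_\star}_F^2 / \NRM{\Delta_\bA L_\star}_\op^2$, so it remains to lower-bound this quantity by a suitable multiple of $\eta^2\tau\, p'dr$.

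This is where the hypothesis $\bA \in \mathcal{A} \setminus \mathcal{G}(C)$ enters: by \Cref{def:sets} it gives $\NRM{\Delta_\bA}_F^2/\NRM{\Delta_\bA}_\op^2 > C\eta^2\tau\, p'dr/N$, and combining this with $\NRM{\Delta_\bA L_\star}_F \geq \sigma_{\min}(L_\star)\NRM{\Delta_\bA}_F$ and $\NRM{\Delta_\bA L_\star}_\op \leq \NRM{L_\star}_\op\NRM{\Delta_\bA}_\op$ transfers the stable-rank lower bound from $\Delta_\bA$ to $\Delta_\bA L_\star$ at the price of a power of $\kappa^{-1}$, which is exactly why $\tau$ is defined to carry the $(1 + \ln\kappa)$ factor. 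With $\eta\geq1$ and $\nu<1$ one then reads off an exponent at least $C_{HW} C \nu^2 \eta^2 \tau\, p'dr$, giving the stated probability. I expect the main obstacle to be precisely this transfer step---keeping the $L_\star$-conditioning contributions in $\NRM{P}_F$ and $\NRM{P}_\op$ tight enough that the Hanson--Wright exponent is genuinely driven by the intrinsic ratio $\NRM{\Delta_\bA}_F^2/\NRM{\Delta_\bA}_\op^2$ that $\mathcal{G}(C)$ is built around; recognizing the quadratic form, computing its mean, and the norm estimates themselves are routine.
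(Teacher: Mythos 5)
There is a genuine gap, and it sits exactly at the step you flagged as the main obstacle: the transfer of the stable-rank bound from $\Delta_\bA$ to $\Delta_\bA L_\star$. Because you apply Hanson--Wright to the quadratic form with $P$ built from $L_\star^\top \Delta_\bA^\top \Delta_\bA L_\star$, the exponent your argument delivers is $C_{HW}\,\nu^2\, N\,\|\Delta_\bA L_\star\|_F^2/\|\Delta_\bA L_\star\|_\op^2$. Passing to $\Delta_\bA$ via $\|\Delta_\bA L_\star\|_F \geq \sigma_{\min}(L_\star)\|\Delta_\bA\|_F$ and $\|\Delta_\bA L_\star\|_\op \leq \|L_\star\|_\op\|\Delta_\bA\|_\op$ costs a multiplicative factor $\kappa^{-2}$, so what you actually obtain is the lemma with probability $1-\exp\left(-C_{HW} C \nu^2 \eta^2 \tau\, p'dr/\kappa^2\right)$, not the stated $1-\exp\left(-C_{HW} C \nu^2 \eta^2 \tau\, p'dr\right)$. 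Your claim that ``this is exactly why $\tau$ carries the $(1+\ln\kappa)$ factor'' is where the argument breaks: a factor that is \emph{logarithmic} in $\kappa$ cannot absorb a \emph{polynomial} loss of $\kappa^{2}$. For marginally stable systems $\kappa$ grows polynomially in $T$, so repairing your bound would require inflating $C$ (and hence the final rate) by $\kappa^{2}$ at the covering/union-bound stage---precisely the mixing-type degradation the whole analysis is engineered to avoid.

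The paper's proof avoids this by never putting $L_\star$ inside the Hanson--Wright quadratic form. It first peels off $L_\star$ deterministically, writing $\|\Delta_\bA L_\star E\|_F^2 \geq \sigma_{\min}(L_\star)^2 \|\Delta_\bA E\|_F^2$, and only then applies \Cref{thm:hanson-wright} to $\|\Delta_\bA E\|_F^2 = \tilde E^\top \tilde\Delta_\bA^\top\tilde\Delta_\bA\tilde E$ with deviation level $r = c^2\sigma^2\nu N\|\Delta_\bA\|_F^2$. The resulting exponent is $C_{HW}\nu^2 N \|\Delta_\bA\|_F^2/\|\Delta_\bA\|_\op^2$, which is exactly the quantity that membership in $\cA\setminus\cG(C)$ lower-bounds by $C\eta^2\tau\, p'dr$, with no $\kappa$ entering the exponent at all. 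This is also why the lemma's conclusion is phrased with $\sigma_{\min}(L_\star)^2\|\Delta_\bA\|_F^2$ rather than $\|\Delta_\bA L_\star\|_F^2$: concentration is performed around $\sigma^2 N\|\Delta_\bA\|_F^2$, not around the true mean $\sigma^2 N\|\Delta_\bA L_\star\|_F^2$ that your ``sharper'' intermediate target insists on. The rest of your write-up---identifying the quadratic form, computing its mean, the norm estimates $\|P\|_\op$ and $\|P\|_F^2$, and noting that the $r^2$ branch of the minimum binds since $\nu<1$---mirrors the paper and is fine; only the choice of which quadratic form to concentrate is off, and it is the decisive choice.
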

\begin{proof}
    First, observe that
    \begin{equation*}
        \|\Delta_{\bA} L_\star E\|_F^2 \geq \sigma_{\min}(L_\star)^2 \|\Delta_{\bA} E\|_F^2\,.
    \end{equation*}

    Applying \Cref{thm:hanson-wright} with $P = \tilde{\Delta}_\bA^\top \tilde{\Delta}_\bA$ and $r = c^2 \sigma^2 \nu \|\tilde{\Delta}_\bA\|_F^2$,
    \begin{equation*}
    \begin{split}
        &\mathbb{P}\left(\tilde{E}^\top \tilde{\Delta}_\bA^\top \tilde{\Delta}_\bA \tilde{E} - \mathbb{E}[\tilde{E}^\top \tilde{\Delta}_\bA^\top \tilde{\Delta}_\bA \tilde{E}] \geq c^2 \sigma^2 \nu \|\tilde{\Delta}_\bA\|_F^2 \right) \\
        & \quad\quad \leq \exp\left(-C_{HW} \min\set{\nu^2 \frac{\|\tilde{\Delta}_\bA\|_F^4}{\|\tilde{\Delta}_\bA^\top \tilde{\Delta}_\bA\|_F^2}, \nu \frac{\|\tilde{\Delta}_\bA\|_F^2}{\|\tilde{\Delta}_\bA^\top \tilde{\Delta}_\bA\|_\op}}\right).     
    \end{split}
    \end{equation*}
    Observe that $\tilde{E}^\top \tilde{\Delta}_\bA^\top \tilde{\Delta}_\bA \tilde{E} = \Tr(E^\top \Delta_\bA^\top \Delta_\bA E) = \|\Delta_\bA E\|_F^2$ and
    \begin{equation*}
        \mathbb{E}\left[\tilde{E}^\top \tilde{\Delta}_\bA^\top \tilde{\Delta}_\bA \tilde{E}\right] = \mathbb{E}\left[\Tr\left(\tilde{E} \tilde{E}^\top \tilde{\Delta}_\bA^\top \tilde{\Delta}_\bA\right)\right] = \sigma^2 \|\tilde{\Delta}_\bA\|_F^2 = \sigma^2 N \|\Delta_\bA\|_F^2.
    \end{equation*}
    Furthermore, $\|\tilde{\Delta}_\bA\|_F^4 = N^2 \|\Delta_\bA\|_F^4, \|\tilde{\Delta}_\bA^\top \tilde{\Delta}_\bA\|_F^2 = N \|\Delta_\bA^\top \Delta_\bA\|_F^2$ and $\|\tilde{\Delta}_\bA^\top \tilde{\Delta}_\bA\|_\op = \|\Delta_\bA^\top \Delta_\bA\|_\op = \|\Delta_\bA\|_\op^2$.
    Plugging these into the bound,
    \begin{equation*}
        \begin{split}
            &\mathbb{P}\left(\|\Delta_\bA E\|_F^2 - \sigma^2 N \|\Delta_\bA\|_F^2 \geq c^2 \sigma^2 \nu N \|\Delta_\bA\|_F^2 \right) \leq \\
            &\quad\quad \exp\left(-C_{HW} \min\set{\nu^2 N \frac{\|\Delta_\bA\|_F^4 }{\|\Delta_\bA^\top \Delta_\bA\|_F^2}, \nu N \frac{\|\Delta_\bA\|_F^2}{\|\Delta_\bA\|_\op^2}}\right).            
        \end{split}
    \end{equation*}
    Then, using $\|\Delta_\bA^\top \Delta_\bA\|_F^2 \leq \|\Delta_\bA\|_F^2 \|\Delta_\bA\|_\op^2$ and $\nu < 1$,
    \begin{equation*}
        \mathbb{P}\left(\|\Delta_\bA E\|_F^2 \geq \sigma^2 (1 - c^2 \nu) N \|\Delta_\bA\|_F^2 \right) \geq 1 - \exp\left(-C_{HW} \nu^2 N \frac{\|\Delta_\bA\|_F^2}{\|\Delta_\bA\|_\op^2}\right).
    \end{equation*}
    The result follows from the definition of set $\mathcal{A} \setminus \mathcal{G}(C)$.
\end{proof}
\begin{lemma}
\label{lem:upper_bound}
    For any $\bA \in \mathcal{A} \setminus \mathcal{G}(C)$ and $\nu \in (0, 1)$, with probability at least
    \begin{equation*}
    \begin{split}
        1 - \exp\left(-C_{HW} C \nu^2 \eta^2 \tau p' d r\right)\,,
    \end{split}
    \end{equation*}
    we have the following
    \begin{equation*}
        \Tr(E^\top \Delta_\bA L_\star E) \leq c^2 \sigma^2 \nu \|L_\star\|_\op \sqrt{C \eta^2 \tau p' d r N} \|\Delta_\bA\|_F\,.
    \end{equation*}
\end{lemma}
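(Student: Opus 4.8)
The plan is to run the same Hanson--Wright argument as in the proof of \Cref{lem:lower_bound}, now applied to the quadratic form $\Tr(E^\top \Delta_\bA L_\star E)$. First I would rewrite this quantity as a quadratic form in the stacked noise vector $\tilde E \in \R^{NTd}$: setting $P \coloneqq \mathrm{diag}(\Delta_\bA L_\star) \in \R^{NTd \times NTd}$ (the block-diagonal matrix with $N$ copies of $\Delta_\bA L_\star$) and $S \coloneqq \tfrac{1}{2}(P + P^\top)$ its symmetric part, one has $\Tr(E^\top \Delta_\bA L_\star E) = \sum_{n} (\xi^{(n)})^\top \Delta_\bA L_\star \xi^{(n)} = \tilde E^\top S \tilde E$, with $\|S\|_F \leq \|P\|_F = \sqrt{N}\,\|\Delta_\bA L_\star\|_F$ and $\|S\|_\op \leq \|P\|_\op = \|\Delta_\bA L_\star\|_\op$. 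The crucial observation is the centering: $\Delta_\bA = M_{\bA_{p'}-\bA^\star_{p'}}$ is strictly block-lower-triangular (\Cref{def:M_A}) and $L_\star$ is block-lower-triangular (\Cref{def:L_star}), hence $\Delta_\bA L_\star$ is strictly block-lower-triangular and $\Tr(\Delta_\bA L_\star) = 0$; since $\E[\xi^{(n)}(\xi^{(n)})^\top] = \sigma^2 I_{Td}$, this gives $\E[\tilde E^\top S \tilde E] = \sigma^2 \Tr(S) = \sigma^2 N \Tr(\Delta_\bA L_\star) = 0$.

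Next I would apply the one-sided form of \Cref{thm:hanson-wright} with $K = c\sigma$ at level $\rho = c^2\sigma^2\nu\|L_\star\|_\op\sqrt{C\eta^2\tau p'dr N}\,\|\Delta_\bA\|_F$, using $\|\Delta_\bA L_\star\|_F \leq \|L_\star\|_\op\|\Delta_\bA\|_F$ and $\|\Delta_\bA L_\star\|_\op \leq \|L_\star\|_\op\|\Delta_\bA\|_\op$. The first argument of the Hanson--Wright minimum then evaluates to $\rho^2/(K^4\|S\|_F^2) \geq \nu^2 C\eta^2\tau p'dr$ after cancelling the common factor $c^4\sigma^4 N\|L_\star\|_\op^2\|\Delta_\bA\|_F^2$. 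For the second argument, $\rho/(K^2\|S\|_\op) \geq \nu\sqrt{C\eta^2\tau p'dr N}\cdot\|\Delta_\bA\|_F/\|\Delta_\bA\|_\op$, and here I would invoke the defining inequality of $\cA \setminus \cG(C)$, namely $\|\Delta_\bA\|_F^2/\|\Delta_\bA\|_\op^2 \geq C\eta^2\tau p'dr/N$ (\Cref{def:sets}), to get $\rho/(K^2\|S\|_\op) \geq \nu C\eta^2\tau p'dr$. Since $\nu \in (0,1)$ gives $\nu \geq \nu^2$, the minimum of the two arguments is at least $\nu^2 C\eta^2\tau p'dr$, so \Cref{thm:hanson-wright} yields $\P(\Tr(E^\top\Delta_\bA L_\star E) > \rho) \leq \exp(-C_{HW}C\nu^2\eta^2\tau p'dr)$, which is exactly the claim.

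The computation is essentially routine once the setup is in place; the only two points that need attention are (i) the centering, which relies on $\Delta_\bA L_\star$ being strictly block-triangular so that its trace vanishes, and (ii) controlling the second (linear-in-$\rho$) argument of the Hanson--Wright minimum, which is the sole place the hypothesis $\bA \notin \cG(C)$ is used and where the estimate would break for a perturbation $\Delta_\bA$ whose stable-rank ratio $\|\Delta_\bA\|_F^2/\|\Delta_\bA\|_\op^2$ is too small relative to $p'dr/N$.
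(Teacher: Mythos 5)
Your proof is correct and uses the same core machinery as the paper: Hanson--Wright applied to a quadratic form in the stacked noise vector, zero mean via strict block-lower-triangularity, and the stable-rank condition $\|\Delta_\bA\|_F^2/\|\Delta_\bA\|_\op^2 \geq C\eta^2\tau p'dr/N$ from $\bA\notin\cG(C)$ to control the linear branch of the Hanson--Wright minimum. The one substantive difference is where the factor $\|L_\star\|_\op$ enters: the paper first writes $\Tr(E^\top\Delta_\bA L_\star E)\leq\|L_\star\|_\op\Tr(E^\top\Delta_\bA E)$ and then applies Hanson--Wright to $\tilde E^\top\tilde\Delta_\bA\tilde E$, whereas you apply Hanson--Wright directly to $\tilde E^\top\,\mathrm{diag}(\Delta_\bA L_\star)\,\tilde E$ and extract $\|L_\star\|_\op$ through submultiplicativity of the Frobenius and operator norms. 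Your variant is arguably cleaner: the paper's intermediate trace inequality is of the form $\Tr(BCA)\leq\|C\|_\op\Tr(BA)$, which requires a positivity hypothesis to hold in general and is not justified as stated, while your route needs only $\|\Delta_\bA L_\star\|_F\leq\|L_\star\|_\op\|\Delta_\bA\|_F$ and $\|\Delta_\bA L_\star\|_\op\leq\|L_\star\|_\op\|\Delta_\bA\|_\op$, which are unconditional. Both yield identical constants and the same probability bound.
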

\begin{proof}
    First, by the properties of trace and Frobenius norm, we have
        \begin{equation*}
        \Tr(E^\top \Delta_\bA L_\star E) \leq \|L_\star\|_\op \Tr(E^\top \Delta_\bA E)\,.
    \end{equation*}

    Applying \Cref{thm:hanson-wright} with $P = \tilde{\Delta}_\bA$ and $r = c^2 \sigma^2 \nu \sqrt{C \eta^2 \tau p' d r} \|\tilde{\Delta}_\bA\|_F$,
    \begin{equation*}
        \begin{split}
            &\mathbb{P}\left(\tilde{E}^\top \tilde{\Delta}_\bA \tilde{E} - \mathbb{E}[\tilde{E}^\top \tilde{\Delta}_\bA \tilde{E}] \geq c^2 \sigma^2 \nu \sqrt{C \eta^2 \tau p'dr} \|\tilde{\Delta}_\bA\|_F \right) \\
            &\quad\quad \leq \exp\left(-C_{HW} \min\set{\nu^2 C \eta^2 \tau p'dr, \nu \sqrt{C \eta^2 \tau p'dr} \frac{\|\tilde{\Delta}_\bA\|_F}{\| \tilde{\Delta}_\bA\|_\op}}\right).
            \end{split}
    \end{equation*}
    Noting $\mathbb{E}[\tilde{E}^\top \tilde{\Delta}_\bA \tilde{E}] = 0$ and rewriting,
        \begin{equation*}
        \begin{split}
            &\mathbb{P}\left(\Tr\left(E^\top \Delta_\bA E\right) \leq c^2 \sigma^2 \nu \sqrt{C \eta^2 \tau p'dr N} \|\Delta_\bA\|_F \right) \geq \\
            &\qquad\qquad\qquad 1 - \exp\left(-C_{HW} \min\set{\nu^2 C \eta^2 \tau p'dr, \nu \sqrt{C \eta^2 \tau p'drN} \frac{\|\Delta_\bA\|_F}{\| \Delta_\bA\|_\op}}\right).
        \end{split}
    \end{equation*}
    
    The result follows from the definition of set $\mathcal{A} \setminus \mathcal{G}(C)$.
\end{proof}

\begin{lemma}\label{lemma:noise}
    For any $\nu > 0$, with probability at least
    \begin{equation*}
        1 - \exp\left(-C_{HW} \min\set{\nu, \nu^2} dNT\right),
    \end{equation*}
    we have the following
    \begin{equation*}
        \|E\|_F^2 - \sigma^2 dNT \leq c^2 \sigma^2 \nu dNT.
    \end{equation*}
\end{lemma}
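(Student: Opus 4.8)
\textbf{Proof proposal for \Cref{lemma:noise}.} The plan is to apply the Hanson--Wright inequality (\Cref{thm:hanson-wright}) directly to the vectorized noise. Recall from the definition preceding \Cref{lem:lower_bound} that $\tilde E = ({\xi^{(1)}}^\top,\dots,{\xi^{(N)}}^\top)^\top \in \R^{NTd}$ collects all noise coordinates, and note that $\|E\|_F^2 = \|\tilde E\|_2^2 = \tilde E^\top I_{NTd}\, \tilde E$. Under \Cref{hyp:subgaussian}, the coordinates of $\tilde E$ are independent, centered, and $c\sigma$-sub-Gaussian (this is the content of the remark after \Cref{thm:hanson-wright}, i.e. $K = c\sigma$), so $\tilde E$ is of exactly the form required by Hanson--Wright.

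Next I would instantiate \Cref{thm:hanson-wright} with $P = I_{NTd}$. The relevant quantities are elementary: $\mathbb{E}[\tilde E^\top I_{NTd}\, \tilde E] = \sigma^2 NTd$ by isotropy of the noise, $\|I_{NTd}\|_F^2 = NTd$, and $\|I_{NTd}\|_\op = 1$. Choosing the deviation level $r = c^2\sigma^2 \nu\, dNT$ and plugging in, the two terms in the minimum become
\begin{equation*}
    \frac{r^2}{K^4 \|P\|_F^2} = \frac{c^4\sigma^4 \nu^2 (dNT)^2}{c^4\sigma^4\, dNT} = \nu^2 dNT\,, \qquad
    \frac{r}{K^2 \|P\|_\op} = \frac{c^2\sigma^2 \nu\, dNT}{c^2\sigma^2} = \nu\, dNT\,,
\end{equation*}
so that $\min$ of the two equals $\min\{\nu,\nu^2\}\, dNT$. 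Using the one-sided version of \Cref{thm:hanson-wright} (dropping the factor $2$, and keeping only the upper tail $\tilde E^\top \tilde E - \sigma^2 dNT > r$) yields exactly
\begin{equation*}
    \P\!\left(\|E\|_F^2 - \sigma^2 dNT > c^2\sigma^2 \nu\, dNT\right) \leq \exp\!\left(-C_{HW}\min\{\nu,\nu^2\}\, dNT\right)\,,
\end{equation*}
which is the claim.

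I do not anticipate a genuine obstacle here: the lemma is a one-line corollary of Hanson--Wright once the identity $\|E\|_F^2 = \tilde E^\top I \tilde E$ is recognized. The only point requiring minor care is bookkeeping of the sub-Gaussian parameter ($K = c\sigma$) and verifying that the $\min$ in the exponent collapses to $\min\{\nu,\nu^2\}\,dNT$ after the particular choice of $r$; both are routine. (If one wished, the same computation with $P = I_{NTd}$ also gives a matching lower-tail bound $\|E\|_F^2 \geq \sigma^2(1-c^2\nu)dNT$, which is not needed for this statement.)
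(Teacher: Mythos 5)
Your proposal is correct and follows essentially the same route as the paper: both apply Hanson--Wright (\Cref{thm:hanson-wright}) to $\tilde E$ with $P = I_{NTd}$ and $r = c^2\sigma^2\nu\, dNT$, using $\E[\tilde E^\top\tilde E] = \sigma^2 dNT$. Your version is in fact slightly more explicit than the paper's, since you spell out both terms inside the minimum to justify the $\min\{\nu,\nu^2\}$ in the exponent.
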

\begin{proof}
    Applying \Cref{thm:hanson-wright} with $P = I_{dTN}, r = c^2 \sigma^2 \nu dNT$,
    \begin{equation*}
        \mathbb{P}\left(\tilde{E}^\top \tilde{E} - \mathbb{E}[\tilde{E}^\top \tilde{E}] \geq c^2 \sigma^2 \nu dNT\right) \leq \exp\left(-C_{HW} \nu d N T\right).
    \end{equation*}
    The result follows by the fact $\mathbb{E}[\tilde{E}^\top \tilde{E}] = \sigma^2 dNT$.
\end{proof}
\begin{lemma}\label{lemma:disc_lower}
    For any $\bA_1, \bA_2 \in \mathcal{A}$,
    \begin{equation*}
        \begin{split}
            \|\Delta_{\bA_2} L_\star E\|_F^2 &\geq \frac{1}{2} \|\Delta_{\bA_1} L_\star E\|_F^2 - p' \|\bA_1 - \bA_2\|_\op^2 \|L_\star\|_\op^2 \|E\|_F^2\,.
        \end{split}
    \end{equation*}
\end{lemma}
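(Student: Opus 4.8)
The plan is to reduce the claim to the elementary quadratic inequality $\|a\|^2 \geq \tfrac12\|b\|^2 - \|b-a\|^2$, which follows from $\|b\|^2 = \|a + (b-a)\|^2 \leq 2\|a\|^2 + 2\|b-a\|^2$. I would apply it with $a = \Delta_{\bA_2} L_\star E$ and $b = \Delta_{\bA_1} L_\star E$, so that the error term is $\|b - a\|_F^2 = \|(\Delta_{\bA_1} - \Delta_{\bA_2}) L_\star E\|_F^2$.

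The key bookkeeping step is to simplify $\Delta_{\bA_1} - \Delta_{\bA_2}$. By \Cref{def:matrix_notation}, $\Delta_{\bA_j} = M_{\bA_j} - M_{\bA^\star_{p'}}$, so the common term $M_{\bA^\star_{p'}}$ cancels and, using the linearity of $\bA \mapsto M_{\bA}$, one gets $\Delta_{\bA_1} - \Delta_{\bA_2} = M_{\bA_1} - M_{\bA_2} = M_{\bA_1 - \bA_2}$. Then submultiplicativity of the operator norm under the Frobenius norm gives $\|M_{\bA_1 - \bA_2} L_\star E\|_F^2 \leq \|M_{\bA_1 - \bA_2}\|_\op^2 \|L_\star\|_\op^2 \|E\|_F^2$, and \Cref{lemma:A_to_M} yields $\|M_{\bA_1 - \bA_2}\|_\op^2 \leq p' \|\bA_1 - \bA_2\|_\op^2$.

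Combining these bounds gives exactly
\[
    \|\Delta_{\bA_2} L_\star E\|_F^2 \geq \tfrac12 \|\Delta_{\bA_1} L_\star E\|_F^2 - p'\|\bA_1 - \bA_2\|_\op^2 \|L_\star\|_\op^2 \|E\|_F^2,
\]
as desired. There is no real obstacle here: the only point requiring a moment's care is the cancellation of $M_{\bA^\star_{p'}}$ in $\Delta_{\bA_1} - \Delta_{\bA_2}$, after which everything is a routine chain of $\|XY\|_F \leq \|X\|_\op \|Y\|_F$ estimates together with the already-established \Cref{lemma:A_to_M}.
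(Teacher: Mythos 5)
Your proof is correct and follows essentially the same route as the paper: the paper also expands $\|\Delta_{\bA_1} L_\star E\|_F^2 = \|(\Delta_{\bA_1}-\Delta_{\bA_2}+\Delta_{\bA_2})L_\star E\|_F^2 \leq 2\|\Delta_{\bA_2}L_\star E\|_F^2 + 2\|(\Delta_{\bA_1}-\Delta_{\bA_2})L_\star E\|_F^2$, bounds the error term via $\|XY\|_F \leq \|X\|_\op\|Y\|_F$, and invokes \Cref{lemma:A_to_M} to pick up the factor $p'$. The cancellation of $M_{\bA^\star_{p'}}$ in $\Delta_{\bA_1}-\Delta_{\bA_2}$ that you highlight is exactly the step the paper uses implicitly when writing $\|\Delta_{\bA_1}-\Delta_{\bA_2}\|_\op \leq \sqrt{p'}\|\bA_1-\bA_2\|_\op$.
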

\begin{proof}
    By the properties of Frobenius norm and \Cref{lemma:A_to_M},
    \begin{equation*}
    \begin{split}
        \|\Delta_{\bA_1} L_\star E\|_F^2 &= \|(\Delta_{\bA_1} - \Delta_{\bA_2} + \Delta_{\bA_2}) L_\star E\|_F^2 \\
        &\leq 2 \|\Delta_{\bA_2} L_\star E\|_F^2 + 2 \|(\Delta_{\bA_1} - \Delta_{\bA_2}) L_\star E\|_F^2 \\
        &\leq 2 \|\Delta_{\bA_2} L_\star E\|_F^2 + 2 \|\Delta_{\bA_1} - \Delta_{\bA_2}\|_\op^2 \|L_\star\|_\op^2 \|E\|_F^2 \\
        &\leq 2\|\Delta_{\bA_2} L_\star E\|_F^2 + 2 p' \|\bA_1 - \bA_2\|_\op^2 \|L_\star\|_\op^2 \|E\|_F^2\,.
    \end{split}
    \end{equation*}
    The result readily follows by reordering terms. 
\end{proof}

\begin{lemma}\label{lemma:disc_upper}
    For any $\bA_1, \bA_2 \in \mathcal{A}$,
    \begin{equation*}
        \begin{split}
            \Tr(E^\top \Delta_{\bA_2} L_\star E) &\leq \Tr(E^\top \Delta_{\bA_1} L_\star E) + \sqrt{p'} \|\bA_1 - \bA_2\|_\op \|L_\star\|_\op \|E\|_F^2\,, \\
        \end{split}
    \end{equation*}
\end{lemma}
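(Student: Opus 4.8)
The plan is to establish the trace analogue of \Cref{lemma:disc_lower}. The first step is the decomposition $\Delta_{\bA_2} = \Delta_{\bA_1} + \left(\Delta_{\bA_2} - \Delta_{\bA_1}\right)$, which by linearity of the trace gives
\begin{equation*}
    \Tr(E^\top \Delta_{\bA_2} L_\star E) = \Tr(E^\top \Delta_{\bA_1} L_\star E) + \Tr\!\left(E^\top \left(\Delta_{\bA_2} - \Delta_{\bA_1}\right) L_\star E\right)\,.
\end{equation*}
Since the map $\bA \mapsto M_{\bA}$ is linear and, by \Cref{def:matrix_notation}, $\Delta_{\bA_i} = M_{\bA_i} - M_{\bA^\star_{p'}}$ (using that $\bA_1,\bA_2 \in \mathcal{A}$ have vanishing blocks beyond index $p'$, so $(\bA_i)_{p'} = \bA_i$), the common term $M_{\bA^\star_{p'}}$ cancels in the difference and $\Delta_{\bA_2} - \Delta_{\bA_1} = M_{\bA_2 - \bA_1}$.

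The second step bounds the remaining trace. I would invoke the elementary inequality $\left|\Tr(E^\top M E)\right| \leq \|M\|_\op \|E\|_F^2$, which holds for every matrix $M$: writing $E$ in terms of its columns $e_1,\ldots,e_N$, one has $\Tr(E^\top M E) = \sum_j e_j^\top M e_j$ with $|e_j^\top M e_j| \leq \|M\|_\op \|e_j\|_2^2$, and summing yields $\|M\|_\op \|E\|_F^2$. Applying this with $M = \left(\Delta_{\bA_2} - \Delta_{\bA_1}\right) L_\star$, then using submultiplicativity of the operator norm and \Cref{lemma:A_to_M},
\begin{equation*}
\begin{split}
    \Tr\!\left(E^\top \left(\Delta_{\bA_2} - \Delta_{\bA_1}\right) L_\star E\right) &\leq \|\Delta_{\bA_2} - \Delta_{\bA_1}\|_\op \|L_\star\|_\op \|E\|_F^2 \\
    &= \|M_{\bA_2 - \bA_1}\|_\op \|L_\star\|_\op \|E\|_F^2 \leq \sqrt{p'}\,\|\bA_1 - \bA_2\|_\op \|L_\star\|_\op \|E\|_F^2\,.
\end{split}
\end{equation*}
Combining the two displays finishes the argument.

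There is no genuine obstacle here: this is a routine Lipschitz-in-$\bA$ estimate whose sole purpose is to control the error when passing from the $\epsilon$-net $\cN_\epsilon(D)$ to the full set $\cA(D)\setminus\cG(C,D)$ in the proof of \Cref{thm:isometry}. The only points worth stating explicitly are the cancellation of $M_{\bA^\star_{p'}}$ in $\Delta_{\bA_2}-\Delta_{\bA_1}$ and the fact that the factor $\sqrt{p'}$ (rather than $\sqrt{p}$) arises precisely because matrices in $\cA$ are supported on their first $p'$ blocks — exactly as in the proof of \Cref{lemma:disc_lower}.
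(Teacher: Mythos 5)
Your proposal is correct and follows essentially the same route as the paper's proof: the decomposition $\Delta_{\bA_2} = \Delta_{\bA_1} + (\Delta_{\bA_2} - \Delta_{\bA_1})$, the bound $\Tr(E^\top M E) \leq \|M\|_\op \|E\|_F^2$, and \Cref{lemma:A_to_M} to pick up the factor $\sqrt{p'}$. Your extra remarks (the cancellation of $M_{\bA^\star_{p'}}$ and the column-wise justification of the trace bound) are details the paper leaves implicit but do not change the argument.
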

\begin{proof}
    By the properties of trace and \Cref{lemma:A_to_M},
    \begin{equation*}
    \begin{split}
        \Tr(E^\top \Delta_{\bA_2} L_\star E) &= \Tr\left(E^\top (\Delta_{\bA_2} - \Delta_{\bA_1} + \Delta_{\bA_1}) L_\star E\right) \\
        &= \Tr\left(E^\top \Delta_{\bA_1} L_\star E\right) + \Tr\left(E^\top (\Delta_{\bA_2} - \Delta_{\bA_1}) L_\star E\right) \\
        &\leq \Tr\left(E^\top \Delta_{\bA_1} L_\star E\right) + \|\Delta_{\bA_2} - \Delta_{\bA_1}\|_\op \|L_\star\|_\op \|E\|_F^2 \\
        &\leq \Tr\left(E^\top \Delta_{\bA_1} L_\star E\right) + \sqrt{p'} \|\bA_1 - \bA_2\|_\op \|L_\star\|_\op \|E\|_F^2\,. \\
    \end{split}
    \end{equation*}
\end{proof}

\subsection{Concentration inequalities}

In \Cref{remark:square}, we show that the quantities of interest that show up in \Cref{eq:main_equiv} are related to a martingale series and its predictable quadratic variation.
This allow us to use \Cref{lemma:freedman_extended} in \Cref{thm:concentration} to quantify the probability of the event in \Cref{eq:main_equiv} for finite sets of $\bA$.

\begin{remark}\label{remark:square}
    Fix $\bA \in \cA(D)$.
    Consider the martingale differences sequences
    \begin{equation*}
        d_{t, i}^{(n)} = \left(\left(\bA - \bA^\star_{p'}\right) X_{t}^{(n)}\right)_i \left(\xi_{t}^{(n)}\right)_i
         \,,
    \end{equation*}
    where the series is first ordered in $i$, then in $t$, and finally in $n$.
    Let $Y$ be the sum of the martingale differences, i.e.,
    \begin{equation*}
        Y = \sum_{i, t, n} d_{i, t}^{(n)}\,.
    \end{equation*}
    Let $W_\bA^R$ be the quadratic variation of the series plus an error term as in \Cref{thm:freedman}, i.e.,
    \begin{equation*}
        W_\bA^R = \sum_{i,t,n} \E_{\left(\xi_{t}^{(n)}\right)_i}\left[\left(d_{i, t}^{(n)}\right)^2\right] + \sum_{n, t, i} \1_{d_{t, i}^{(n)} > R} \left(d_{t, i}^{(n)}\right)^2 \,,
    \end{equation*} 
    Then, we have the following computations:
    \begin{equation*}
        \begin{split}
            Y_\bA &= \sum_{n,t}\langle(\bA-\bA^\star_{p'})X_t^{(n)}, \xi_t^{(n)}\rangle = \Tr\left(E^\top \Delta_{\bA} L_\star E\right)\,, \\
            W_\bA &\coloneqq W_\bA^0 = \sigma^2 \sum_{n,t}\left\|\left(\bA-\bA^\star_{p'}\right)X_t^{(n)}\right\|^2 = \sigma^2 \NRM{\Delta_{\bA} L_\star E}_F^2\,.
        \end{split}
    \end{equation*}
\end{remark}

\begin{proposition}
    \label{prop:concentration}
    Let $0 < \delta < e^{-1}$ and $R > 0$ be constants.
    Then, with probability at least $1 - \delta$, we have
        \begin{equation}
        \label{eq:concentration}
        \forall \bA \in \cA(D), \quad C' \left(\ln 2dNT + \ln \dfrac{1}{\delta}\right) W_{\bA} \geq W_{\bA}^R\,,
    \end{equation}
    where $C' = 1 + 4c'^2$ and $c'$ is the universal constant in \Cref{lemma:supremum}.
\end{proposition}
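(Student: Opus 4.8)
The plan is to treat this as an essentially deterministic estimate once the size of the noise is controlled, and to observe that uniformity over $\bA\in\cA(D)$ is automatic because the only random ingredient does not depend on $\bA$.

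First I would unpack \Cref{remark:square}. Since $W_\bA^R - W_\bA = \sum_{n,t,i}\1_{\{d_{t,i}^{(n)}>R\}}(d_{t,i}^{(n)})^2 \le \sum_{n,t,i}(d_{t,i}^{(n)})^2$ and $(d_{t,i}^{(n)})^2 = \big((\bA-\bA^\star_{p'})X_t^{(n)}\big)_i^2(\xi_t^{(n)})_i^2$, pulling the noise out of the sum gives
\[
\sum_{n,t,i}(d_{t,i}^{(n)})^2 \le \Big(\sup_{n,t,i}(\xi_t^{(n)})_i^2\Big)\sum_{n,t}\NRM{(\bA-\bA^\star_{p'})X_t^{(n)}}^2 = \Big(\sup_{n,t,i}(\xi_t^{(n)})_i^2\Big)\frac{W_\bA}{\sigma^2},
\]
using $\sum_{n,t}\NRM{(\bA-\bA^\star_{p'})X_t^{(n)}}^2 = \NRM{\Delta_\bA L_\star E}_F^2 = W_\bA/\sigma^2$ from \Cref{remark:square}. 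Hence $W_\bA^R \le W_\bA\big(1 + \sigma^{-2}\sup_{n,t,i}(\xi_t^{(n)})_i^2\big)$ simultaneously for \emph{every} $\bA$, since the prefactor is $\bA$-free.

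Next I would invoke \Cref{coro:supremum}: with probability $1-\delta$, $\sup_{t,n}\NRM{\xi_t^{(n)}}_\infty \le c'\sigma\sqrt2\big(\sqrt{\ln 2dTN}+\sqrt{\ln(1/\delta)}\big)$, so by $(a+b)^2\le 2(a^2+b^2)$ we get $\sup_{n,t,i}(\xi_t^{(n)})_i^2 \le 4c'^2\sigma^2\big(\ln 2dTN + \ln(1/\delta)\big)$. Substituting yields $W_\bA^R \le W_\bA\big(1 + 4c'^2(\ln 2dNT+\ln(1/\delta))\big)$ on this event. Finally, since $\delta<e^{-1}$ forces $\ln(1/\delta)>1$ and hence $\ln 2dNT+\ln(1/\delta)\ge 1$, the additive $1$ is absorbed into the logarithmic factor, upgrading the constant to $C'=1+4c'^2$, which is exactly \Cref{eq:concentration}.

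I do not expect a genuine obstacle here; the two points requiring care are recognizing that the union over $\cA(D)$ costs nothing (the random quantity $\sup_{n,t,i}(\xi_t^{(n)})_i^2$ is independent of $\bA$, so no covering argument is needed) and tracking the constants through the $(a+b)^2$ split and the final absorption step so that the stated constant $1+4c'^2$ comes out precisely.
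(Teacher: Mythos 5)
Your proof is correct and follows essentially the same route as the paper's: bound the indicator sum by the full sum, factor out $\sup_{n,t,i}(\xi_t^{(n)})_i^2$ (which is $\bA$-free, so uniformity is automatic), control it via \Cref{coro:supremum}, and identify the remaining sum with $W_\bA/\sigma^2$. Your explicit final absorption of the additive $1$ into the logarithmic factor (using $\ln(1/\delta)>1$) is a detail the paper leaves implicit, but otherwise the two arguments coincide.
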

\begin{proof}
    By \Cref{coro:supremum}, there exists a constant $c'$ such that
    \begin{equation*}
        \sup_{t, n} \|\xi_{t}^{(n)}\|_\infty \leq c' \sigma \sqrt{2} \left(\sqrt{\ln 2dTN} + \sqrt{\ln \dfrac{1}{\delta}}\right)\,.
    \end{equation*}
    Therefore, for any $\bA \in \cA(D)$, we have
    \begin{equation*}
        \begin{split}
            W_{\bA}^R &= W_{\bA} + \sum_{n, t, i} \1_{d_{t, i}^{(n)} > R} \left(d_{t, i}^{(n)}\right)^2 \\
            &\leq W_{\bA} + 4 c'^2 \sigma^2 \left(\ln 2dTN + \ln \dfrac{1}{\delta}\right) \sum_{n, t, i} \1_{d_{t, i}^{(n)} > R} \left(\left(\bA - \bA^\star_{p'}\right) X_{t}^{(n)}\right)_i^2 \\
            &\leq W_{\bA} + 4 c'^2 \sigma^2 \left(\ln 2dTN + \ln \dfrac{1}{\delta}\right) \sum_{n, t, i} \left(\left(\bA - \bA^\star_{p'}\right) X_{t}^{(n)}\right)_i^2 \\
            &\leq W_{\bA} + 4 c'^2 \left(\ln 2dTN + \ln \dfrac{1}{\delta}\right) W_{\bA}\,.
        \end{split}
    \end{equation*}
    Then, by rearranging terms, we have
    \begin{equation*}
        \left(1 + 4 c'^2 \left(\ln 2dTN + \ln \dfrac{1}{\delta}\right)\right) W_{\bA} \geq W_{\bA}^R\,.
    \end{equation*}
\end{proof}

\begin{theorem}
    \label{thm:concentration}
    Let $\cS \subseteq \cA(D)$ be a finite set and let $\cE(\cS)$ be the following event
    \begin{equation*}
        \cE(\cS) = \left\{\inf_{\bA \in \cS} W_{\bA} \geq \alpha_L\right\} \cap \left\{\sup_{\bA \in \cS} Y_{\bA} \leq \alpha_U\right\}\,,
    \end{equation*}
    where $\alpha_L, \alpha_U > 0$ are two constants.
    Then, for any $\gamma, R > 0$ and $0 < \delta < e^{-1}$,
    \begin{equation*}
        \begin{split}
            &\P\left(\set{\exists \bA \in \cS : W_{\bA} \leq \gamma Y_{\bA}} \cap \cE(\cS)\right) \\
            &\quad \leq |\cS|\exp\left(-\dfrac{\alpha_L}{2 \gamma' \left(\gamma' + R\right)} + \ln \left(\ln \left(\dfrac{\gamma' \alpha_U}{\alpha_L}\right) + 1\right)\right) + \delta \,,
        \end{split}
    \end{equation*}
    where $\gamma' = C' e \gamma \left(\ln 2dTN + \ln \dfrac{1}{\delta}\right)$ and $C'$ is the universal constant in \Cref{prop:concentration}.
\end{theorem}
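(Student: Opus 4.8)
The plan is to derive the bound by first using \Cref{prop:concentration} to replace the predictable variation $W_{\bA}$ by its truncated counterpart $W_{\bA}^{R}$ uniformly over $\cA(D)$, and then applying the single-matrix Freedman estimate of \Cref{lemma:freedman_extended} together with a union bound over the finite set $\cS$. Accordingly, I would first let $\cH$ denote the event from \Cref{prop:concentration} on which $C''\, W_{\bA} \geq W_{\bA}^{R}$ holds simultaneously for every $\bA \in \cA(D)$, where $C'' \coloneqq C'(\ln 2dNT + \ln \tfrac1\delta)$ with $C'$ the universal constant of \Cref{prop:concentration}; that proposition gives $\P(\cH^{c}) \leq \delta$. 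Conditioning on $\cH$, one has
\begin{equation*}
    \P\bigl(\{\exists \bA \in \cS: W_{\bA} \leq \gamma Y_{\bA}\} \cap \cE(\cS)\bigr) \leq \P\bigl(\{\exists \bA \in \cS: W_{\bA} \leq \gamma Y_{\bA}\} \cap \cE(\cS) \cap \cH\bigr) + \delta .
\end{equation*}

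Next, I would fix $\bA \in \cS$ and work on $\cE(\cS) \cap \cH$. If $W_{\bA} \leq \gamma Y_{\bA}$, then $Y_{\bA} \geq 0$ because $W_{\bA}\geq 0$, so on $\cH$ we get $W_{\bA}^{R} \leq C''\, W_{\bA} \leq C'' \gamma\, Y_{\bA} = \tilde\gamma\, Y_{\bA}$, where $\tilde\gamma \coloneqq C'' \gamma = \gamma'/e$. Moreover $\cE(\cS)$ forces $W_{\bA}^{R} \geq W_{\bA} \geq \alpha_{L}$ and $Y_{\bA} \leq \alpha_{U}$, that is, $\cE(\cS) \subseteq \cE_{\bA} \coloneqq \{W_{\bA}^{R} \geq \alpha_{L}\} \cap \{Y_{\bA} \leq \alpha_{U}\}$. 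Hence, by a union bound over the finite set $\cS$,
\begin{equation*}
    \P\bigl(\{\exists \bA \in \cS: W_{\bA} \leq \gamma Y_{\bA}\} \cap \cE(\cS) \cap \cH\bigr) \leq \sum_{\bA \in \cS} \P\bigl(\{W_{\bA}^{R} \leq \tilde\gamma\, Y_{\bA}\} \cap \cE_{\bA}\bigr).
\end{equation*}

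Then, for each $\bA$, I would invoke \Cref{remark:square}, which identifies $Y_{\bA}$ with the sum of the martingale differences $d_{t,i}^{(n)}$ (ordered in $i$, then $t$, then $n$) and $W_{\bA}^{R}$ with the corresponding truncated predictable variation; the martingale property holds because $\xi_{t}^{(n)}$ is centered, isotropic, and independent of the past, so conditionally on everything preceding $d_{t,i}^{(n)}$ in the ordering its mean vanishes, and the hypotheses of \Cref{thm:freedman} are met. Applying \Cref{lemma:freedman_extended} with clipping level $R$ and rate $\tilde\gamma$ gives
\begin{equation*}
    \P\bigl(\{W_{\bA}^{R} \leq \tilde\gamma\, Y_{\bA}\} \cap \cE_{\bA}\bigr) \leq \exp\!\left(-\frac{\alpha_{L}}{2 e \tilde\gamma\,(e \tilde\gamma + R)} + \ln\!\Bigl(\ln\!\Bigl(\tfrac{\tilde\gamma\, \alpha_{U}}{\alpha_{L}}\Bigr) + 1\Bigr)\right).
\end{equation*}
Since $e\tilde\gamma = \gamma'$ and $\tilde\gamma \leq \gamma'$, the right-hand side is at most $\exp(-\alpha_{L}/(2\gamma'(\gamma'+R)) + \ln(\ln(\gamma'\alpha_{U}/\alpha_{L}) + 1))$; summing the $|\cS|$ identical terms and adding back the $\delta$ from $\P(\cH^{c})$ yields the stated bound.

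The only delicate point I anticipate is the constant bookkeeping: one must track the effective rate $\tilde\gamma = \gamma'/e$ through \Cref{prop:concentration} so that the extra factor $e$ appearing inside Freedman's bound is absorbed exactly into $\gamma'$, and one must use the sign of $Y_{\bA}$ to pass from $W_{\bA} \leq \gamma Y_{\bA}$ to $W_{\bA}^{R} \leq \tilde\gamma Y_{\bA}$ on $\cH$. The substantive step is already carried out in \Cref{prop:concentration}: replacing the genuine predictable variation $W_{\bA}$ by its clipped version $W_{\bA}^{R}$ uniformly over the search set is precisely what licenses the use of the unbounded-difference Freedman inequality here.
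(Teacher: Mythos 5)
Your proposal is correct and follows essentially the same route as the paper's proof: use \Cref{prop:concentration} to pass from $W_{\bA}$ to $W_{\bA}^R$ at the cost of the factor $C'(\ln 2dNT+\ln\frac1\delta)$ and an additive $\delta$, note $\cE(\cS)\subseteq\cE_{\bA}$ since $W_{\bA}^R\geq W_{\bA}$, then union bound over $\cS$ and apply \Cref{lemma:freedman_extended} with the inflated rate $\gamma'/e$. The constant bookkeeping ($e\tilde\gamma=\gamma'$ absorbing the extra $e$ from Freedman's bound, and $\tilde\gamma\leq\gamma'$ in the log-log term) matches the paper's.
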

\begin{proof}    
    For any $\bA$, let $\cE_{\bA}$ be the following event:
    \begin{equation*}
        \cE_{\bA} = \set{W_{\bA}^R \geq \alpha_L} \cap \set{Y_{\bA} \leq \alpha_U}\,.
    \end{equation*}
    Then, by union bound, we have
    \begin{equation*}
        \begin{split}
            \P\left(\set{\exists \bA \in \cS : W_{\bA}^R \leq \gamma Y_{\bA}} \cap \cE(\cS)\right) &\leq \sum_{\bA \in \cS} \P\left(\set{W_{\bA}^R \leq \gamma Y_{\bA}} \cap \cE(\cS)\right) \\
            &\leq \sum_{\bA \in \cS} \P\left(\set{W_{\bA}^R \leq \gamma Y_{\bA}} \cap \cE_{\bA}\right) \,.
        \end{split}
    \end{equation*}

    For any $\bA \in \cS$,
    \begin{equation*}
        \P\left(W_\bA^R \leq \gamma Y_\bA \cap \cE_\bA\right) \leq \exp\left(-\dfrac{\alpha_L}{2 e \gamma \left(e \gamma + R\right)} + \ln \left(\ln \left(\dfrac{\gamma \alpha_U}{\alpha_L}\right) + 1\right)\right)\,.
    \end{equation*}
    by \Cref{lemma:freedman_extended} which implies that
    \begin{equation*}
        \begin{split}
            \P\left(\set{\exists \bA \in \cS : W_{\bA}^R \leq \gamma Y_{\bA}} \cap \cE(\cS)\right) \leq |\cS|\exp\left(-\dfrac{\alpha_L}{2 e \gamma \left(e\gamma + R\right)} + \ln \left(\ln \left(\dfrac{\gamma \alpha_U}{\alpha_L}\right) + 1\right)\right)\,.    
        \end{split}
    \end{equation*}
    
    Finally, let $\cE_{\delta}$ be the event in \Cref{eq:concentration}.
    Then,
    \begin{equation*}
        \begin{split}
            &\P\left(\set{\exists \bA \in \cS : W_{\bA} \leq \gamma Y_{\bA}} \cap \cE(\cS)\right) \\
            &\quad\quad\quad \leq \P\left(\set{\exists \bA \in \cS : W_{\bA} \leq \gamma Y_{\bA}} \cap \cE(\cS) \cap \cE_{\delta}\right) + \P\left(\cE_{\delta}^C\right) \\
            &\quad\quad\quad \leq \P\left(\set{\exists \bA \in \cS : W_{\bA}^R \leq C' \gamma \left(\ln 2dNT + \ln \dfrac{1}{\delta}\right) Y_{\bA}} \cap \cE(\cS) \cap \cE_{\delta}\right) + \delta \\
            &\quad\quad\quad \leq \P\left(\set{\exists \bA \in \cS : W_{\bA}^R \leq C' \gamma \left(\ln 2dNT + \ln \dfrac{1}{\delta}\right) Y_{\bA}} \cap \cE(\cS)\right) + \delta\,.
        \end{split}
    \end{equation*}
\end{proof}

\subsection{Proof of \Cref{thm:main_equiv}}\label{app:proof_main}

Recall that \Cref{remark:square} shows that 
\begin{equation*}
    \begin{split}
        Y_\bA = \Tr\left(E^\top \Delta_{\bA} L_\star E\right)\,, \quad W_\bA = \sigma^2 \NRM{\Delta_{\bA} L_\star E}_F^2\,.
    \end{split}
\end{equation*}
Therefore, we need to show that for any $0 < \delta < e^{-1}$, there exists a constant $C(\delta)$ such that
\begin{equation*}
    \begin{split}
        \P\left(\exists \bA \in \cA(D) \setminus \cG(C(\delta), D): W_\bA \leq 2 \sigma^2 \eta Y_\bA\right) \leq \delta\,.
    \end{split}
\end{equation*}

By \Cref{coro:isometry}, there exists a constant $C_1(\delta/4) = \cO(\ln(1/\delta))$ such that for all $C \geq C_1(\delta/4)$
\begin{equation}
    \label{eq:lower_upper_bound}
    \begin{split}
        \inf_{\bA \in \cA(D) \setminus \cG(C, D)} W_{\bA} &\geq \frac{\sigma^4}{8} \sigma_{\min}(L_\star)^2 C D^2 \eta^2 \tau p'dr\,, \\
        \sup_{\bA \in \cA(D) \setminus \cG(C, D)} Y_{\bA} &\leq \sigma^2 \|L_\star\|_\op \sqrt{C D^2 \eta^2 \tau p' d^2 r N T}\,,
    \end{split}
\end{equation}
with probability $1 - \delta/4$.
In addition, by \Cref{lemma:noise}, with probability at least $1 - \delta/4$, we have
\begin{equation}\label{eq:noise_bound}
    \|E\|_F^2 \leq \left(1 + c^2 \nu(\delta/4)\right) dNT\,, \quad \text{where} \quad \nu(\delta/4) = \max\set{1, \dfrac{\ln \frac{\delta}{4}}{C_{HW} dNT}}\,.
\end{equation}
In the following, we work conditionally on these events.

Let $\cS$ be an $\epsilon$-net over $\cA(D) \setminus \cG(C, D)$.
By \Cref{lemma:disc_lower,lemma:disc_upper}, for any $\bA \in \cA(D) \setminus \cG(C, D)$, there exists $\bA' \in \cS$ such that
\begin{equation*}
    \begin{split}
        W_{\bA} &\geq \frac{1}{2} W_{\bA'} - \epsilon_1(\delta/4) \epsilon^2\,, \quad Y_{\bA} \leq Y_{\bA'} + \epsilon_2(\delta/4) \epsilon\,, \\
    \end{split}
\end{equation*}
where $\epsilon_1(\delta/4), \epsilon_2(\delta/4)$ are given as follows:
\begin{equation*}
    \begin{split}
        \epsilon_1(\delta/4) &= \sigma^4 (1 + c^2 \nu_3(\delta/4)) p'dNT \|L_\star\|_\op^2\,, \\
        \epsilon_2(\delta/4) &= \sigma^2 (1 + c^2 \nu_3(\delta/4)) \sqrt{p'} dNT \|L_\star\|_\op\,.
    \end{split}
\end{equation*}
We set $\epsilon$ as follows:
\begin{equation*}
    \epsilon = \min\set{\sqrt{\dfrac{\alpha_L}{\epsilon_1(\delta/4)}}, \dfrac{\alpha_L}{4 \sigma^2 \eta \epsilon_2(\delta/4)}, D}\,.
\end{equation*}
In particular, $\epsilon$ is small such that for any $\bA \in \cA(D) \setminus \cG(C, D)$,
\begin{equation}\label{eq:discretization_result}
    \exists \bA' \in \cS : \quad W_{\bA'} \leq 2 W_{\bA} + 2 \alpha_L \,, \quad  Y_{\bA'} \geq Y_{\bA} - \frac{\alpha_L}{4 \sigma^2 \eta} \,.
\end{equation}

Assume that $\bA \in \cA(D) \setminus \cG(C, D)$ verifies
\begin{equation*}
    W_{\bA} \leq 2 \sigma^2 \eta Y_{\bA}\,.
\end{equation*}
Then, by \Cref{eq:discretization_result}, there exists $\bA' \in \cS$ such that
\begin{equation*}
    W_{\bA'} \leq 4 W_{\bA} \leq 8 \sigma^2 \eta Y_{\bA} \leq 16 \sigma^2 \eta Y_{\bA} - 4 W_{\bA} \leq 16 \sigma^2 \eta Y_{\bA'}\,.
\end{equation*}
Therefore, we have
\begin{equation*}
    \begin{split}
        \P\left(\set{\exists \bA \in \cA(D) \setminus \cG(C, D) : W_{\bA} \leq 2 \sigma^2 \eta Y_{\bA}} \cap \cE\right) &\leq \P\left(\set{\exists \bA \in \cS : W_{\bA} \leq 16 \sigma^2 \eta Y_{\bA}} \cap \cE\right)\,,
    \end{split}
\end{equation*}
where $\cE$ is the event that both \Cref{eq:lower_upper_bound} and \Cref{eq:noise_bound} hold.

By \Cref{thm:concentration}, there exists a universal constant $C'$ such that
\begin{equation}\label{eq:discretized_bound}
    \begin{split}
        &\P\left(\set{\exists \bA \in \cS : W_\bA \leq 16 \sigma^2 \eta Y_\bA} \cap \cE\right) \leq \\
        &\quad\quad\quad\quad |\cS|\exp\left(-\dfrac{\alpha_L}{4 \gamma'^2} + \ln \left(\ln \left(\dfrac{\gamma' \alpha_U}{\alpha_L}\right) + 1\right)\right) + \frac{\delta}{4}\,,
    \end{split}
\end{equation}
with the following choices:
\begin{equation*}
    \begin{split}
        \gamma &= 16\sigma^2 \eta\,, \quad R = \gamma' = C' e \gamma \left(\ln 2dTN + \ln \dfrac{1}{\delta}\right) \,, \\
        \alpha_L &= \frac{\sigma^4}{8} \sigma_{\min}(L_\star)^2 C D^2 \eta^2 \tau p'dr\,, \\
        \alpha_U &= \sigma^2 \|L_\star\|_\op \sqrt{C D^2 \eta^2 \tau p' d^2 r N T}\,.
    \end{split}
\end{equation*}
By a union bound, \Cref{eq:discretized_bound} implies that
\begin{equation}
    \label{eq:final_bound}
    \begin{split}
        &\P\left(\exists \bA \in \cA(D) \setminus \cG(C, D) : W_{\bA} \leq 2 \sigma^2 \eta Y_{\bA}\right) \leq \\
        &\quad\quad\quad\quad  |\cS|\exp\left(-\dfrac{\alpha_L}{4\gamma'^2} + \ln \left(\ln \left(\dfrac{\gamma' \alpha_U}{\alpha_L}\right) + 1\right)\right) + \frac{3\delta}{4}\,.
    \end{split}
\end{equation}

By \Cref{lemma:A_to_M}, we have
\begin{equation*}
    \|\Delta_{\bA}\|_\op = \|M_{\bA - \bA^\star}\|_\op \geq \|\bA - \bA^\star\|_\op\,.
\end{equation*}
Therefore, an $\epsilon$-net covering of $\cM_{r, p'}(D)$ defined in \Cref{corollary:covering} is also an $\epsilon$-net over $\cA(D)$ after a shift by $\bA^\star$ and we have
\begin{equation*}
    \ln |\cS| \leq 9 p'dr \ln \dfrac{Dp'}{\epsilon}\,.
\end{equation*}

Finally, we have to show that the right-hand side of \Cref{eq:final_bound} is upper bounded by $\delta$.
That is, we need to prove
\begin{equation}\label{eq:condition}
    \dfrac{\alpha_L}{4 \gamma'^2} \geq \ln \left(\ln \left(\dfrac{\gamma' \alpha_U}{\alpha_L}\right) + 1\right) + 9p'dr \ln \dfrac{Dp'}{\epsilon} + \ln \dfrac{4}{\delta} \,,
\end{equation}
for a suitable choice of $C$.
Note that \Cref{eq:condition} is homogeneous in $\sigma$.
In addition, the left-hand side does not depend on $\eta$, whereas the right-hand side is decreasing in $\eta$.
Therefore, for simplicity, we can set $\sigma = 1$ and $\eta = 1$.

By \Cref{lemma:L_star_singular_general} we have
\begin{equation*}
    \sigma_{\min}(L_\star) \geq \dfrac{1}{D+1} \geq \dfrac{1}{2D}\,,
\end{equation*}
and
\begin{equation*}
    \alpha_L \geq \dfrac{C \tau p'dr}{32} = \Omega\left(C \left(\left(1 + \ln p'd NT\right)^3 \left(1 + \ln \kappa\right) \right) p'dr\right)\,.
\end{equation*}
For a choice that verifies
\begin{equation}\label{eq:choice_of_C}
    C = \Theta\left(\left(1 + \ln \dfrac{1}{\delta}\right)^3\right)\,,
\end{equation}
we have the following lower bound for $\alpha_L$:
\begin{equation}\label{eq:alpha_L_lower_bound}
    \dfrac{\alpha_L}{4 \gamma'^2} = \Omega\left(\left(1 + \ln p'dNT\right) \left(1 + \ln \kappa\right)  \left(1 + \ln \dfrac{1}{\delta}\right)p'dr\right)\,.
\end{equation}

The condition \Cref{eq:condition} is satisfied if the following three conditions are individually satisfied:
\begin{equation*}
    \begin{aligned}
    \textbf{(i)}&\quad \dfrac{\alpha_L}{4 \gamma'^2}\geq \ln \left(\ln \left(\dfrac{\gamma' \alpha_U}{\alpha_L}\right) + 1\right)\,, \\
    \textbf{(ii)}&\quad \dfrac{\alpha_L}{4 \gamma'^2} \geq 9p'dr \ln \dfrac{Dp'}{\epsilon}\,, \\
    \textbf{(iii)}&\quad \dfrac{\alpha_L}{4 \gamma'^2} \geq \ln \dfrac{4}{\delta}\,,
    \end{aligned}
\end{equation*}
as the left-hand side of \Cref{eq:condition} is increasing in $C$, while the right-hand side is decreasing in $C$, we can pick the maximum $C$ that satisfies all three conditions.

\paragraph{Condition \textbf{(i)}.}
By \Cref{lemma:L_star_singular_general}, we have
\begin{equation*}
    2 D \|L_\star\|_\op \geq (D+1) \|L_\star\|_\op \geq (D+1) \sigma_{\min}(L_\star) \geq 1\,,
\end{equation*}
and
\begin{equation*}
    \dfrac{\alpha_U}{\alpha_L} \leq \dfrac{2D \|L_\star\|_\op \alpha_U}{\alpha_L} = \cO\left(\sqrt{NT} \kappa^2\right)\,.
\end{equation*}
As $x \geq \ln x + 1$ for any $x \geq 0$, we have
\begin{equation*}
    \ln \left(\ln \left(\dfrac{\gamma' \alpha_U}{\alpha_L}\right) + 1\right) \leq \ln \dfrac{\gamma' \alpha_U}{\alpha_L} = \cO\left(\ln 2dNT + \ln \kappa + \ln \dfrac{1}{\delta}\right)\,.
\end{equation*}
The lower bound in \Cref{eq:alpha_L_lower_bound} is sufficient to satisfy this condition.

\paragraph{Condition \textbf{(ii)}.}
We have that
\begin{equation*}
    \begin{split}
        \ln \dfrac{Dp'}{\epsilon} &\leq \ln Dp' \max\set{\sqrt{\dfrac{\epsilon_1(\delta/4)}{\alpha_L}}, \dfrac{4 \epsilon_2(\delta/4)}{\alpha_L}, \dfrac{1}{D}} \\
        &\leq \ln Dp' \max\set{\dfrac{D\epsilon_1(\delta/4)}{\alpha_L}, \dfrac{4 \epsilon_2(\delta/4)}{\alpha_L}, \dfrac{1}{D}} \\
        &= \cO\left(\ln p'NT + \ln \kappa + \ln \dfrac{1}{\delta}\right)\,.  
    \end{split}
\end{equation*}
The lower bound in \Cref{eq:alpha_L_lower_bound} is sufficient to satisfy this condition.

\paragraph{Condition \textbf{(iii)}.}
This condition is readily verified by \Cref{eq:alpha_L_lower_bound}.

\begin{remark}\label{remark:third_order}
    The choice of $C$ in \Cref{eq:choice_of_C} and $\tau$, which appear in the bounds, have third-order logarithmic dependencies on $\delta$ and the problem parameters, respectively.
    These arise from the combination of a high-probability bound on the sub-Gaussian noise in terms of $\delta$ and the problem parameters, and a union bound over a discretization net.
    Assuming strictly bounded noise, i.e., $\sup \|\xi_t^{(n)}\|_\infty \leq B$ for some $B > 0$, would yield an improved rate with respect to both $\delta$ and problem parameters, and a tighter analysis by employing chaining techniques could potentially reduce the logarithmic dependencies in problem parameters.
\end{remark}

\section{Approximate empirical risk minimization}
\label{app:approximation}

In this section, we extend the results of \Cref{app:main_thm_proof} to approximate empirical risk minimizers $\tbA$ as in \Cref{eq:extension_erm_approx}.
We begin with trivial extensions of \Cref{prop:minimizer} and \Cref{coro:minimizer} to approximate minimizers.

\begin{proposition}\label{prop:approximate_minimizer}
    Let $\tbA$ be an estimate that provides an $\epsilon_{\mathrm{tr}}$-approximation to the loss of $\cL(\bA^\star_{p'})$:
    \begin{equation*}
        \cL(\tbA) \leq \cL(\bA^\star_{p'}) + \epsilon_{\mathrm{tr}}\,.
    \end{equation*}
    In particular, one can choose $\tbA$ as an $\epsilon_{\mathrm{tr}}$-approximate empirical risk minimizer, i.e.,
    \begin{equation*}
        \tbA \in \set{\bA \in \cA(\cD) \mid \cL(\bA) \leq \cL(\hbA) + \epsilon_{\mathrm{tr}}}\,,
    \end{equation*}
    where $\hbA$ is the empirical risk minimizer defined in \Cref{eq:erm_condition}.
    Then, $\tbA$ satisfies
    \begin{equation*}
        \begin{split}
             \NRM{\Delta_{\tbA} L_\star E}_F^2 \leq 2 \Tr\left(E^\top L_\star^\top \Delta_{\tbA}^\top \left(I - M_{\bA^\star_{p'}}\right)  L_\star E\right) + NT \epsilon_{\mathrm{tr}}\,.
        \end{split}
    \end{equation*}
\end{proposition}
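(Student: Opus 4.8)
The plan is to reuse, essentially verbatim, the algebraic decomposition already established inside the proof of \Cref{prop:minimizer}. The crucial point is that the expansion producing \eqref{eq:loss_diff},
\begin{equation*}
    NT\left(\cL(\bA) - \cL(\bA^\star_{p'})\right) = \NRM{\Delta_{\bA} L_\star E}_F^2 - 2\Tr\left(E^\top L_\star^\top \Delta_{\bA}^\top \left(I - M_{\bA^\star_{p'}}\right) L_\star E\right),
\end{equation*}
is a pure identity, valid for \emph{every} $\bA$, with no use of any optimality property; in \Cref{prop:minimizer} minimality only entered afterwards, to force the left-hand side to be $\leq 0$. So first I would simply invoke this identity with $\bA = \tbA$. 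Then I would use the hypothesis $\cL(\tbA) \leq \cL(\bA^\star_{p'}) + \epsilon_{\mathrm{tr}}$, i.e.\ $NT\bigl(\cL(\tbA) - \cL(\bA^\star_{p'})\bigr) \leq NT\epsilon_{\mathrm{tr}}$, and substitute it into the identity; rearranging gives
\begin{equation*}
    \NRM{\Delta_{\tbA} L_\star E}_F^2 \leq 2\Tr\left(E^\top L_\star^\top \Delta_{\tbA}^\top \left(I - M_{\bA^\star_{p'}}\right) L_\star E\right) + NT\epsilon_{\mathrm{tr}},
\end{equation*}
which is exactly the asserted bound.

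For the ``in particular'' clause I would argue that an $\epsilon_{\mathrm{tr}}$-approximate empirical risk minimizer over $\cA(D)$ --- any $\tbA \in \cA(D)$ with $\cL(\tbA) \leq \cL(\hbA) + \epsilon_{\mathrm{tr}}$ --- automatically meets the hypothesis of the proposition: under \Cref{hyp:subgaussian,hyp:bounded_op_norm,hyp:low_rank} the truncated ground truth $\bA^\star_{p'} = (A_1^\star,\dots,A_{p'}^\star,0_d,\dots,0_d)$ is feasible for the constrained problem (it lies in $\cA(D) = \cA_{r,p'}(D)$, since $\Delta_{\bA^\star_{p'}} = 0$ and each $A_k^\star$ has rank at most $r$), hence $\cL(\hbA) \leq \cL(\bA^\star_{p'})$ by minimality of $\hbA$, and therefore $\cL(\tbA) \leq \cL(\bA^\star_{p'}) + \epsilon_{\mathrm{tr}}$.

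I do not expect any real obstacle here. The only point requiring a moment's attention is the one highlighted above, namely that \eqref{eq:loss_diff} is identity-valued and does not presuppose $\bA$ to be a minimizer, so it transfers to $\tbA$ with no change; the surplus $\epsilon_{\mathrm{tr}}$ then propagates linearly (after scaling by $NT$) into the conclusion. Downstream, this $NT\epsilon_{\mathrm{tr}}$ term will be carried through the isometry estimates of \Cref{thm:isometry} and \Cref{coro:isometry} in the same way $\Tr(E^\top \Delta_{\bA} L_\star E)$ is handled in the proof of \Cref{thm:main_equiv}; since $\epsilon_{\mathrm{tr}} = \tilde{\cO}\!\bigl(p'dr/(NT)\bigr)$, it is of the same order as the leading term and leaves the rate unchanged.
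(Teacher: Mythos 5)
Your proposal is correct and matches the paper's proof, which is exactly the one-line observation that \Cref{eq:loss_diff} is an identity valid for every $\bA$ (note only that the paper writes the cross term with $\left(M_{\bA^\star_{p'}} - I\right)$, so its sign convention already absorbs your minus sign), into which the hypothesis $NT\bigl(\cL(\tbA)-\cL(\bA^\star_{p'})\bigr)\leq NT\epsilon_{\mathrm{tr}}$ is substituted. The ``in particular'' clause via feasibility of $\bA^\star_{p'}$ is likewise the intended (implicit) argument.
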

\begin{proof}
    The proof follows from \Cref{eq:loss_diff}.
\end{proof}
\begin{corollary}\label{coro:approximate_minimizer}
    Observe that for $p' = p$, \Cref{prop:approximate_minimizer} reads
    \begin{equation*}
         \NRM{\Delta_{\tbA} L_\star E}_F^2 \leq 2 \Tr\left(E^\top \Delta_{\tbA} L_\star E\right) + NT \epsilon_{\mathrm{tr}}\,.
    \end{equation*}
    For $p' < p$, one can write the following relaxed condition for any $\tbA$:
    \begin{equation*}
        \begin{split}
            \NRM{\Delta_{\tbA} L_\star E}_F^2 &\leq 2 \eta \Tr\left(E^\top \Delta_{\tbA} L_\star E\right) + NT \epsilon_{\mathrm{tr}}\,.
        \end{split}
    \end{equation*}
\end{corollary}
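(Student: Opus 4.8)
The plan is to treat this as an immediate corollary of \Cref{prop:approximate_minimizer}, specializing the cross term $\Tr(E^\top L_\star^\top \Delta_{\tbA}^\top (I - M_{\bA^\star_{p'}}) L_\star E)$ in exactly the two ways that \Cref{coro:minimizer} specializes the analogous term for the exact minimizer. I would start from the conclusion of \Cref{prop:approximate_minimizer},
\[
    \NRM{\Delta_{\tbA} L_\star E}_F^2 \;\le\; 2\,\Tr\!\bigl(E^\top L_\star^\top \Delta_{\tbA}^\top (I - M_{\bA^\star_{p'}}) L_\star E\bigr) + NT\,\epsilon_{\mathrm{tr}}\,,
\]
and observe at the outset that the additive slack $NT\epsilon_{\mathrm{tr}}$ is inert in what follows: every step below only rewrites or upper bounds the trace term, so the $+\,NT\epsilon_{\mathrm{tr}}$ is carried through unchanged.

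For $p'=p$, I would use $\bA^\star_{p'}=\bA^\star$ together with \Cref{lemma:inverse}, which gives $(I - M_{\bA^\star})L_\star = I_{Td}$; the trace term then collapses to $\Tr(E^\top L_\star^\top \Delta_{\tbA}^\top E)$, and, since a square matrix and its transpose have equal trace, this equals $\Tr(E^\top \Delta_{\tbA} L_\star E)$. Substituting back produces the first displayed inequality of the corollary. For $p'<p$, I would again invoke \Cref{lemma:inverse}, now in the form $L_\star = I + M_{\bA^\star}L_\star$, to rewrite $(I - M_{\bA^\star_{p'}})L_\star = I_{Td} + (M_{\bA^\star} - M_{\bA^\star_{p'}})L_\star$, and then reuse the operator-norm/triangle-inequality bound exactly as in \Cref{coro:minimizer}:
\begin{align*}
    \Tr\!\bigl(E^\top L_\star^\top \Delta_{\tbA}^\top (I - M_{\bA^\star_{p'}}) L_\star E\bigr)
    &\le \NRM{(I - M_{\bA^\star_{p'}})L_\star}_\op \,\Tr\!\bigl(E^\top \Delta_{\tbA} L_\star E\bigr) \\
    &\le \bigl(1 + \NRM{(M_{\bA^\star} - M_{\bA^\star_{p'}})L_\star}_\op\bigr)\Tr\!\bigl(E^\top \Delta_{\tbA} L_\star E\bigr) \\
    &= \eta\,\Tr\!\bigl(E^\top \Delta_{\tbA} L_\star E\bigr)\,.
\end{align*}
Plugging this into the starting inequality and keeping $+\,NT\epsilon_{\mathrm{tr}}$ gives the second displayed inequality.

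I do not expect a real obstacle here: the whole argument rests only on the identities of \Cref{lemma:inverse} and on the relaxation already carried out in \Cref{coro:minimizer}, and the $\epsilon_{\mathrm{tr}}$-slack never interacts with those. The single point I would double-check is that the derivation of \Cref{prop:approximate_minimizer} via \Cref{eq:loss_diff} yields precisely the same cross term as \Cref{prop:minimizer} — which it does, because replacing $\cL(\tbA)\le\cL(\bA^\star_{p'})$ by $\cL(\tbA)\le\cL(\bA^\star_{p'})+\epsilon_{\mathrm{tr}}$ in that loss decomposition merely shifts the resulting inequality by $NT\epsilon_{\mathrm{tr}}$ — so the case split of \Cref{coro:minimizer} transfers verbatim.
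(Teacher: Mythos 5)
Your argument is exactly the paper's: the paper proves this corollary by the one-line remark that it ``follows from \Cref{coro:minimizer}'', and what you have written out is precisely that reduction --- specializing the cross term via \Cref{lemma:inverse} when $p'=p$, applying the same operator-norm relaxation as \Cref{coro:minimizer} when $p'<p$, and observing that the additive $NT\epsilon_{\mathrm{tr}}$ slack from \Cref{prop:approximate_minimizer} passes through untouched. Correct and essentially identical in approach, just with the details made explicit.
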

\begin{proof}
    The proof follows from \Cref{coro:minimizer}.
\end{proof}

We divide the analysis into two cases: $2 \eta \Tr\left(E^\top \Delta_{\tbA} L_\star E\right) > NT \epsilon_{\mathrm{tr}}$ and $2 \eta \Tr\left(E^\top \Delta_{\tbA} L_\star E\right) \leq NT \epsilon_{\mathrm{tr}}$.
In the first case, we can directly apply the results of \Cref{app:main_thm_proof} to approximate minimizers after noting that \Cref{coro:minimizer} implies that
\begin{equation*}
    \NRM{\Delta_{\tbA} L_\star E}_F^2 \leq 2 \left(2\eta\right) \Tr\left(E^\top \Delta_{\tbA} L_\star E\right)\,.
\end{equation*}
This is equivalent to doubling the constant $\eta$ in the main theorem and does not change our results.

In the second case, we have the following:
\begin{equation*}
    \NRM{\Delta_{\tbA} L_\star E}_F^2 \leq 2 NT \epsilon_{\mathrm{tr}}\,.
\end{equation*}
Recall the lower isometry proven in \Cref{eq:lower_upper_bound}:
\begin{equation*}
    \inf_{\bA \in \cA(D) \setminus \cG(C, D)} \|\Delta_{\bA} L_\star E\|_F^2 \geq \frac{\sigma^2}{8} \sigma_{\min}(L_\star)^2 C D^2 \eta^2 \tau p'dr\,. \\
\end{equation*}
If $\epsilon_{\mathrm{tr}}$ is such that
\begin{equation*}
    2 NT \epsilon_{\mathrm{tr}} \leq \frac{\sigma^4}{8} \sigma_{\min}(L_\star)^2 C D^2 \eta^2 \tau p'dr\,,
\end{equation*}
then the approximate minimizer $\tbA$ is guaranteed to be in the set $\cG(C, D)$ with high probability.
By \Cref{coro:A_to_delta}, $\mathcal{G}(C, D) \subset \mathcal{S}(C, D)$, and thus, we have the desired result $\tbA \in \cS(C, D)$.

\section{Experiments}
\label{app:experiments}

All experiments in this section are implemented with Python 3 \citep{python3} under PSF license and PyTorch \citep{pytorch} under BSD-3-Clause license.
In addition, we use NumPy \citep{numpy} under BSD license.

For all the experiments, $\bA^\star$ is generated as follows.
First, $p$ orthogonal matrices of shape $d\times d$ are sampled. These are then scaled down by $\alpha \cdot p$ where $\alpha$ is arbitrarily set to $0.5$.
In cases where $\bA$ needs to be initialized, we use the same recipe for the student model with $p'$ instead of $p$ and set $\alpha = 1$.
For experiments with low-rank ground truth, we set arbitrary $d-r$ singular values to $0$ following a SVD decomposition.
Each experiment in this section has been run over 3 independent seeds and the average is plotted.
As the variance is small and the plots would otherwise overlap, we have opted not to plot it for visual clarity.

\Cref{thm:full_rank,thm:low_rank,thm:misspecification} provide rates on estimation error for empirical minimizers.
In the following, we study these rates empirically for various values of $p', p, d, N, T$ and $r$ where $r = d$ and $p' = p$ unless stated otherwise.
We use two quantities, $\beta = NT$, the number of total tokens, $\gamma = pdr$, the number of parameters to estimate, to summarize information in the plots.
For \Cref{thm:full_rank,thm:misspecification}, $\hat{A}$ is computed with the OLS estimator and for \Cref{thm:low_rank}, $\hat{A}$ is learned with gradient descent with learning rate $\alpha$ on the group-norm regularized loss in \Cref{eq:nucl_estim}.
The parameter $\lambda$ and learning rate $\alpha$ are tuned by a grid search.

\Cref{fig:rate_main} plots the estimation error for $d \in \set{5, 10, 15}, p \in \set{5, 10, 15}, N \in \set{1, 5, 10}$ and $T \in \set{1, 5, 10, 25, 50} \times pdr/N$. The upper bound in \Cref{thm:full_rank} scales with the ratio $\beta / \gamma$ up to logarithmic terms as empirically verified by \Cref{fig:rate_main}. In \Cref{fig:rate_through}, we verify that there is no individual trend to $p$ and $d$, which implies that the error depends only on $\gamma$. Furthermore, we show the trend in $N$ can be accounted for by incorporating the logarithmic term into $\beta$ to obtain $\tilde{\beta} = \beta / \ln(1 + \sqrt{N})$.

\begin{figure}[htb]
    \centering
    \includegraphics[width=0.6\linewidth]{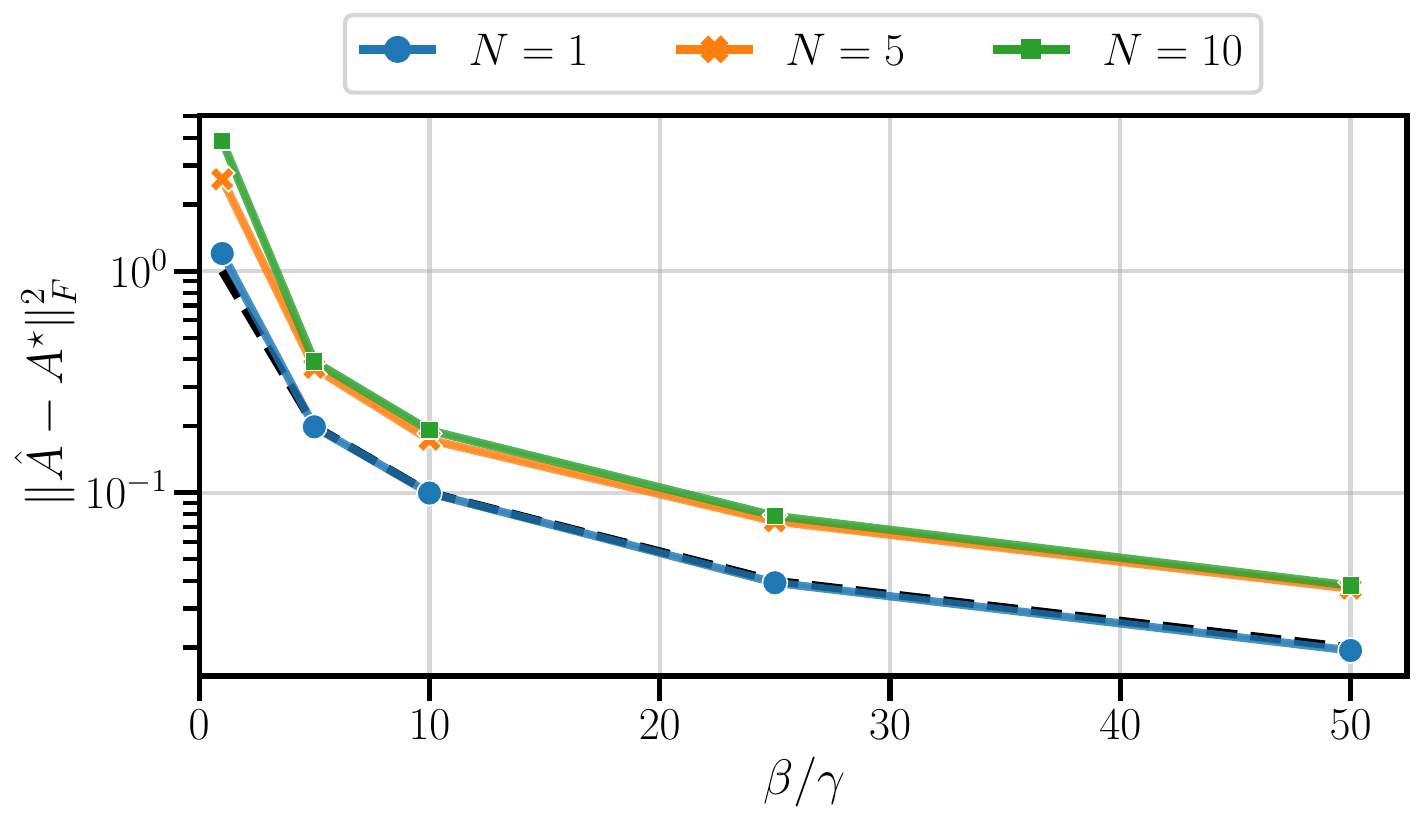}
    \caption{Scaling of estimation error with respect to the ratio $\beta / \gamma = NT / pd^2$ with the OLS estimator. The black dashed line marks the reference value $\gamma/\beta$.}
    \label{fig:rate_main}
\end{figure}

\begin{figure}[htb]
    \centering
    \includegraphics[width=0.32\textwidth]{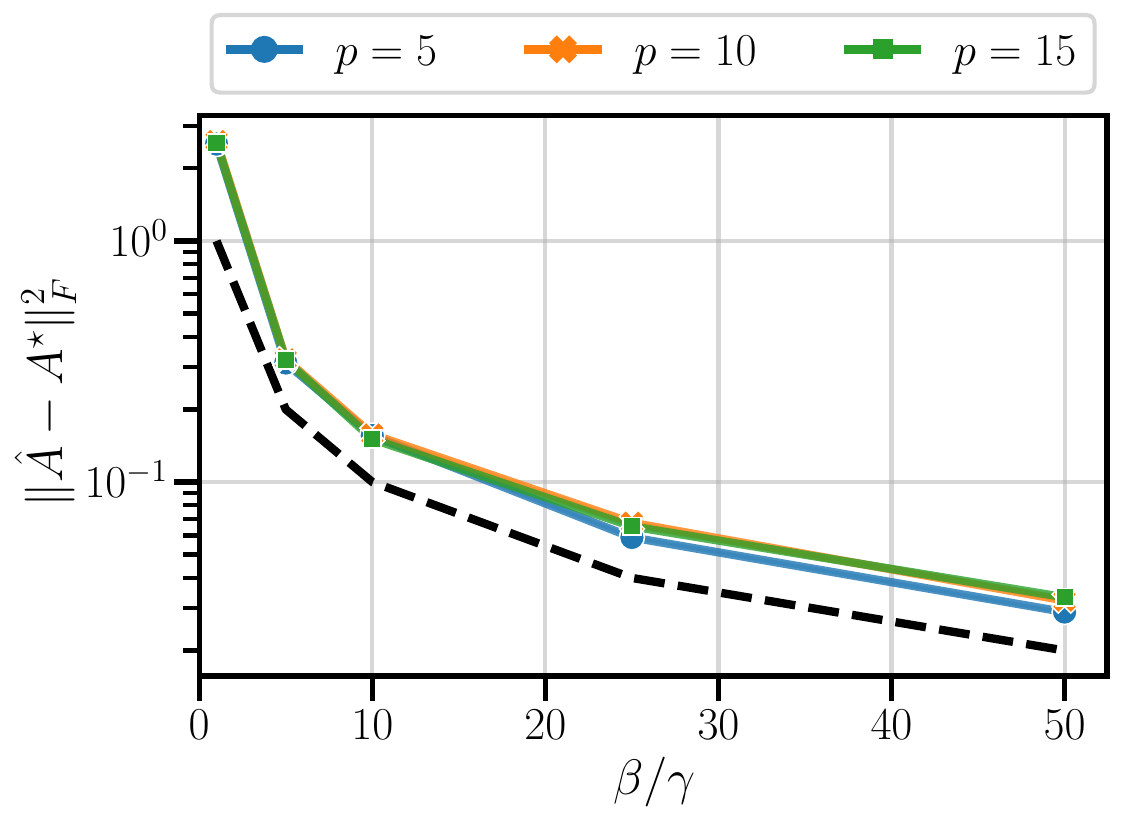}
    \includegraphics[width=0.32\textwidth]{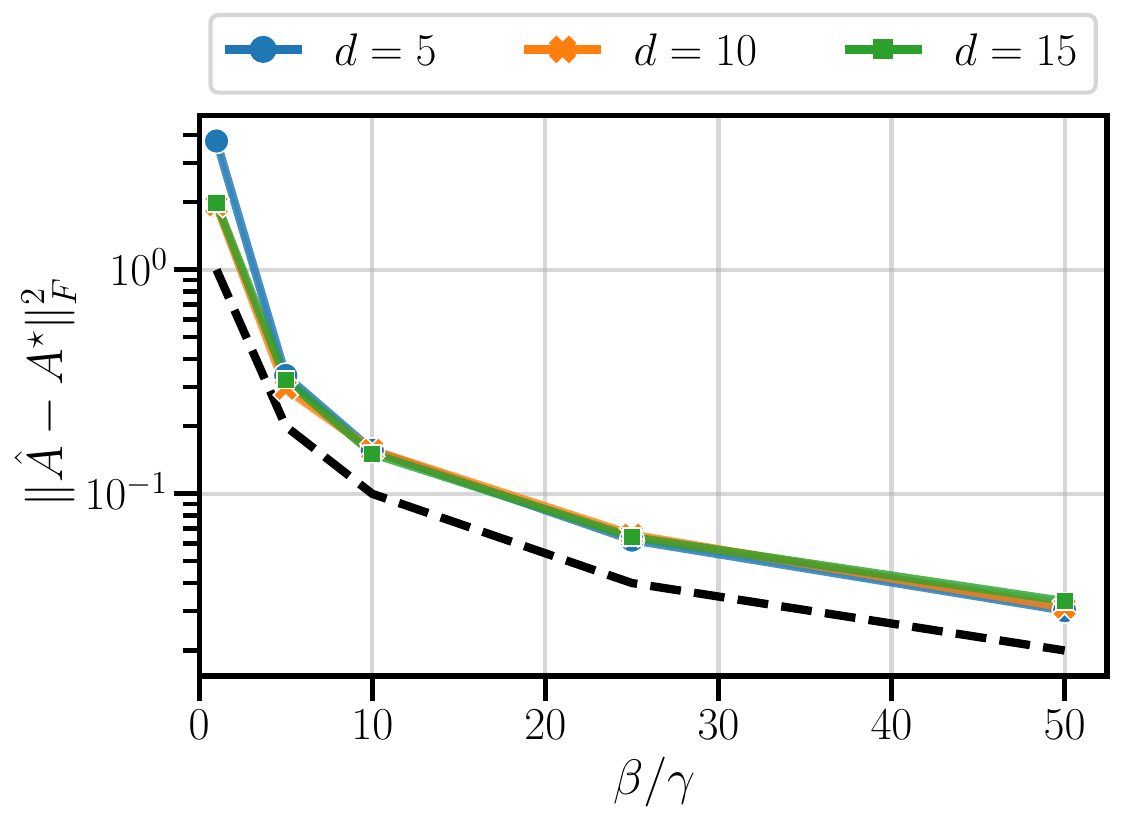}
    \includegraphics[width=0.32\textwidth]{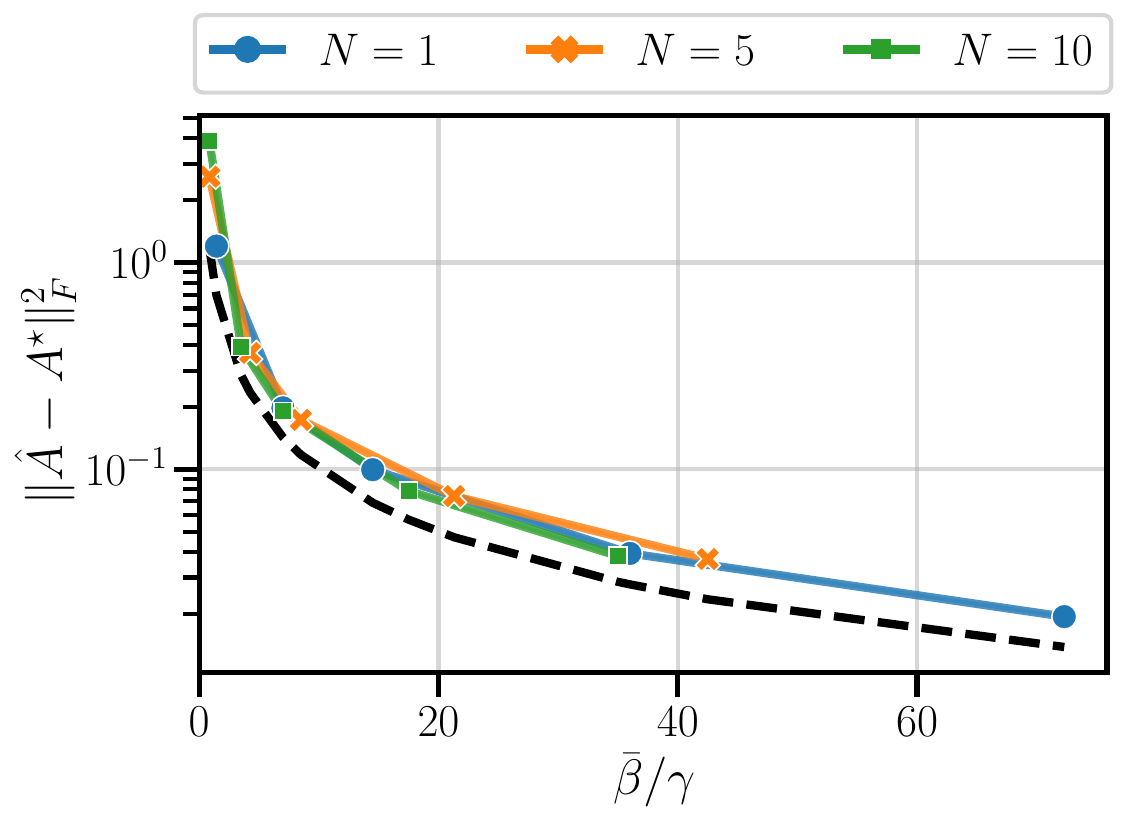}
    \caption{Scaling of estimation error for different values of $p$, $d$ and $N$ with the OLS estimator. Recall that $\beta = NT, \gamma = pd^2$ and $\bar{\beta} = \beta / \ln (1 + \sqrt{N})$. The black dashed lines mark the reference values corresponding to $\gamma / \beta$ and $\gamma / \bar{\beta}$.}
    \label{fig:rate_through}
\end{figure}

\Cref{fig:rate_misspecification} plots the estimation error for different degrees of misspecification where the context length is fixed to $p=15$. The curves for various $p' \in \set{5,10,15}$ overlap, which verify that the rate $\gamma / \beta = p'd^2 / NT$ predicted by \Cref{thm:misspecification} holds.

\begin{figure}[htb]
    \centering
    \includegraphics[width=0.6\linewidth]{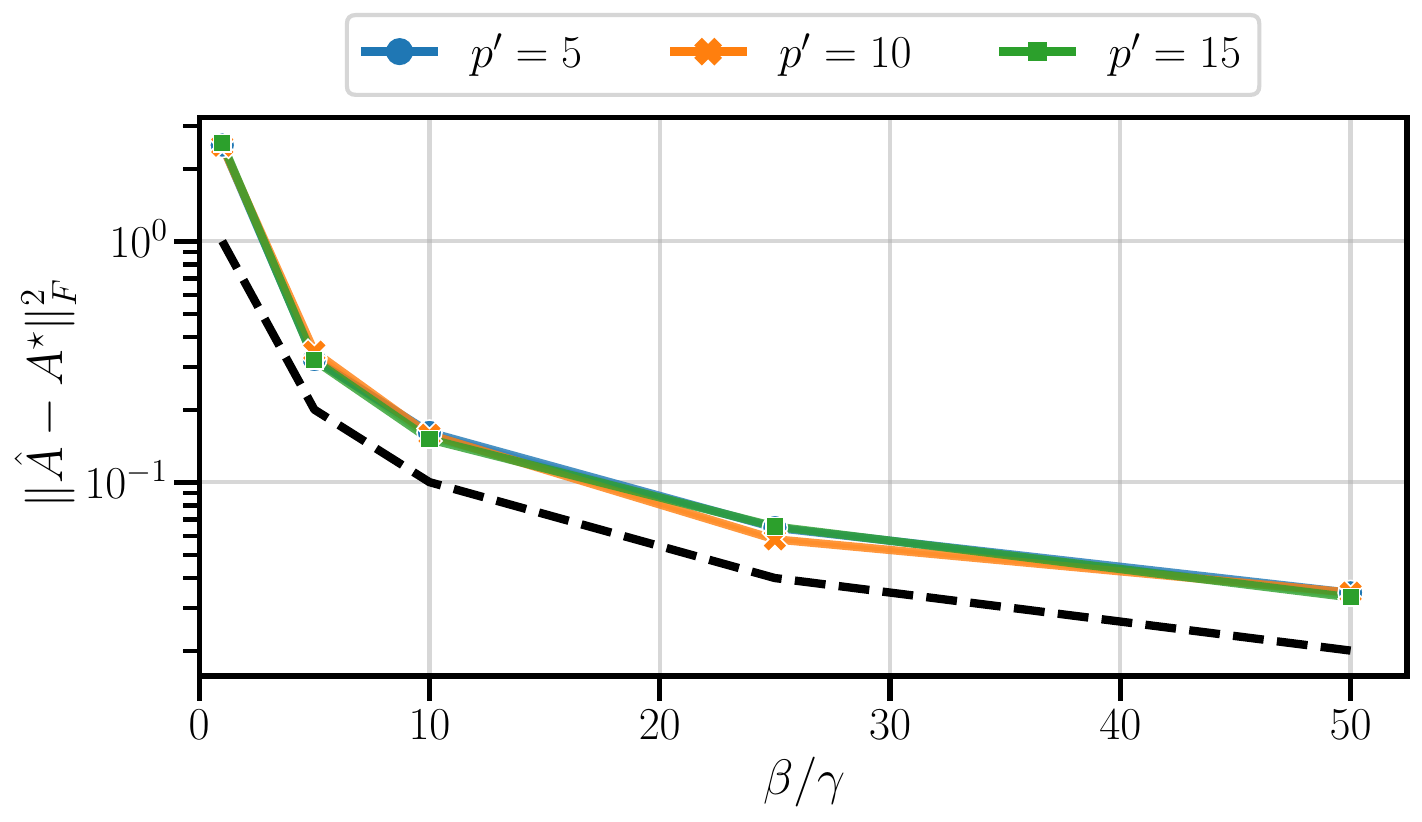}
    \caption{Scaling of estimation error with respect to the ratio $\beta / \gamma = NT / p'd^2$ for different $p' = 5,10,15$ with the OLS estimator. The black dashed line marks the reference value $\gamma/\beta$.}
    \label{fig:rate_misspecification}
\end{figure}

\Cref{fig:rate_rank} repeats the same plots for low-rank experiments where $d = 15, r = 5$ are fixed and $p, N$ and $T$ are varied as before.
Good estimation of $\bA$ is not straightforward as $\lambda$ has to be appropriately tuned.
Yet, we see that the group-nuclear norm regularized estimators found with gradient descent after tuning on regularization problem $\lambda \in \set{10^{-1}, 10^{-2}, 10^{-3}, 10^{-4}, 10^{-5}, 10^{-6}, 10^{-7}}$ and learning rate $\alpha \in \set{10^{-1}, 10^{-2}, 10^{-3}}$ obtain lower estimation errors than non-regularized OLS estimator.
Particularly, the sample efficiency benefits of the regularization are amplified in the low-data regime.
We leave the analysis of group-nuclear norm regularization as a future work. 

\begin{figure}[htb]
    \centering
    \includegraphics[width=0.6\linewidth]{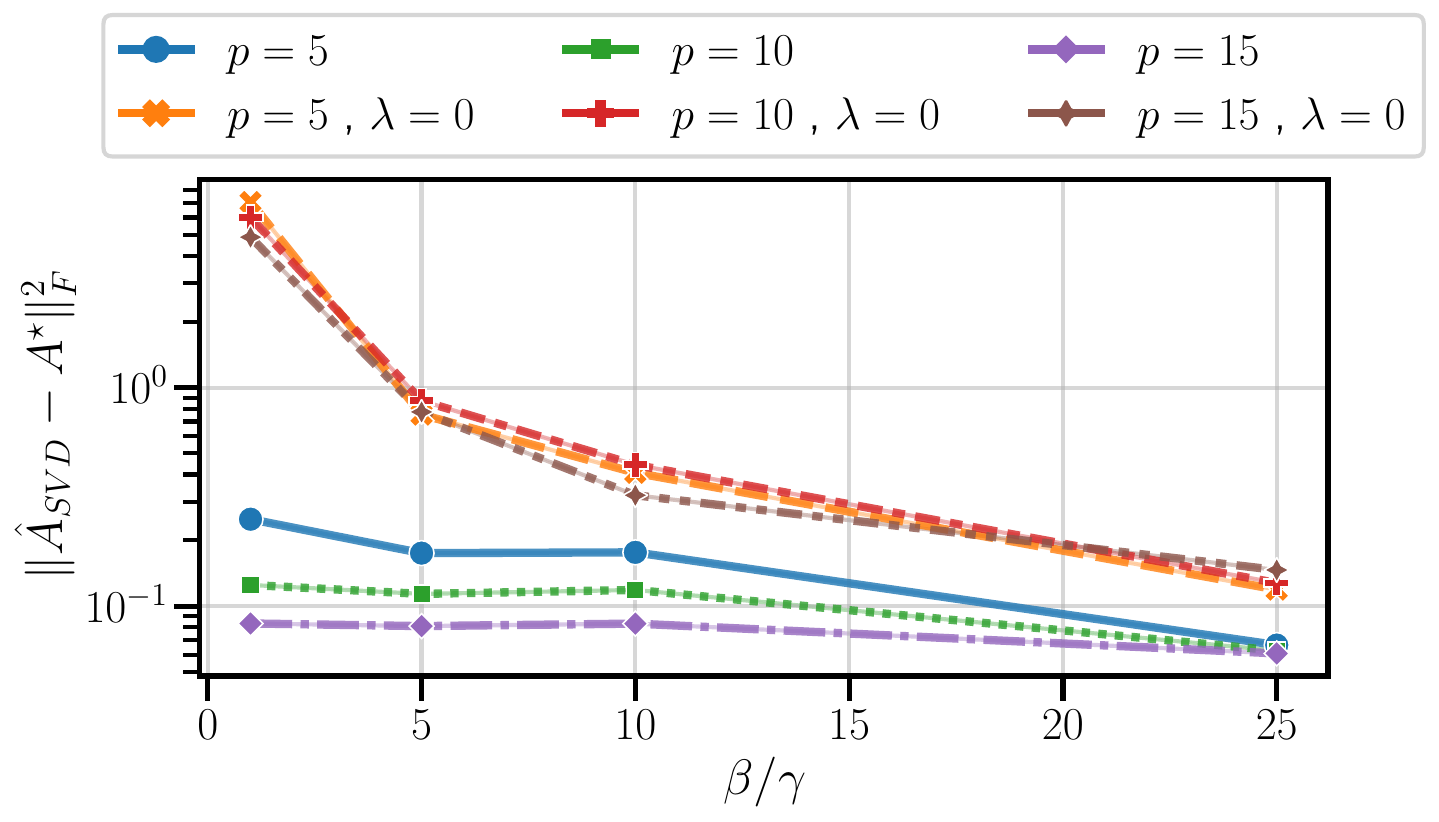}
    \caption{Scaling of estimation error with respect to $\beta / \gamma = \frac{NT}{pdr}$ for different context windows $p=5,10,15$ with the OLS estimator $(\lambda = 0)$ and group-nuclear norm regularized estimators.}
    \label{fig:rate_rank}
\end{figure}

\end{document}